\newcommand\footnoteref[1]{\protected@xdef\@thefnmark{\ref{#1}}\@footnotemark}
\title[Regret Bounds for Generalized Linear Bandits under Parameter Drift]{Regret Bounds for Generalized Linear Bandits under Parameter Drift}
\newif\ifappendix
\newif\ifbulletpoint
\begin{document}

\maketitle

\begin{abstract}
	Generalized Linear Bandits (GLBs) are powerful extensions to the Linear Bandit (LB) setting, broadening the benefits of reward parametrization beyond linearity. In this paper we study GLBs in non-stationary environments, characterized by a general metric of non-stationarity known as the variation-budget or \emph{parameter-drift}, denoted $B_T$. While previous attempts have been made to extend LB algorithms to this setting, they overlook a salient feature of GLBs which flaws their results. 
	In this work, we introduce a new algorithm that addresses this difficulty. We prove that under a geometric assumption on the action set, our approach enjoys a $\bigo{B_T^{1/3}T^{2/3}}$  regret bound. In the general case, we show that it suffers at most a $\bigo{B_T^{1/5}T^{4/5}}$ regret. At the core of our contribution is a generalization of the projection step introduced in \citet{filippi2010parametric}, adapted to the non-stationary nature of the problem. Our analysis sheds light on central mechanisms inherited from the setting by explicitly splitting the treatment of the learning and tracking aspects of the problem.  
\end{abstract}

\begin{keywords}
Stochastic Bandits, Generalized Linear Model, Non-Stationarity.
\end{keywords}

\section{Introduction}

\paragraph{Linear Bandits and non-stationarity. }
\ifbulletpoint
\textcolor{red}{The linear bandit is now well understood in non-stationary settings.}
\else
\fi
The Linear Bandit (\lb) framework has proven to be an important paradigm for sequential decision making under uncertainty. It notably extends the Multi-Arm Bandit (\mab) framework to address the exploration-exploitation dilemma when the arm-set is large (potentially infinite) or changing over time. While the \lb{} has now been extensively studied \citep{dani2008stochastic,rusmevichientong2010linearly,abbasi2011improved,abeille2017linear} in its original formulation, a recent strand of research studies its adaptation to non-stationary environments. Notable are the contributions of \citet{cheung2019learning, russac2019weighted, zhao2020simple} which prove that under appropriate algorithmic changes, existing \lb{} concepts can be leveraged to handle a drift of the reward model. Aside their theoretical interests, these results further anchor the spectrum of potential applications of the \lb{} framework to real-world problems, where non-stationarity is commonplace. 

\paragraph{Extensions to Generalized Linear Bandits.} 
\ifbulletpoint
\textcolor{red}{It is important to do the same for GLM. Unfortunately, either false or under weaker measures of non-stationarity.}
\else
\fi
Perhaps the main limitation of \lb{} resides in its inability to model specific (e.g binary, discrete) rewards. One axis of research to operate beyond linearity was initiated with the introduction of Generalized Linear Bandit (\glb{s}) by \cite{filippi2010parametric}. This framework allows to handle rewards which (in expectation) can be expressed as a generalized linear model. Notable members of this family are the logistic and Poisson models. Given the remarkable importance and widespread use of such models in practice, ensuring their resilience to non-stationarity stands as a crucial missing piece. At first glance, as the analysis of \glb{s} mainly relies on tools from the \lb{} literature, one could expect this demonstration to be straight-forward, and almost anecdotal. As a matter of fact, the treatment of GLBs in non-stationary environments was already proposed as a direct extension of non-stationary \lb{} algorithms (\cite[Section 8.3]{cheung2019hedging} and \cite[Section 5.2]{zhao2020simple}). However, as recently pointed out by \citet{russac2020algorithms}, some crucial subtleties of the \glb{s} flaw the analysis and negates the validity of such extensions. An answer to this issue was brought by \citet{russac2020algorithms, russac2020self} who proposed a valid analysis for \glb{s} in non-stationary environments. However, their investigation is restricted to a specific kind of non-stationarity known as \emph{abrupt changes}, leaving the treatment of the superior \emph{parameter-drift} case for future work. To the best of our knowledge, a correct derivation of \glb{s}' behavior under this more general description of non-stationarity is still missing.

\paragraph{Scope and contributions.} 
\ifbulletpoint
\textcolor{red}{we close the gap. provably rate-optimal algorithm. extend the projection step of filippi to non-stationary settings, proves that it is not harder computationally, shed light on some subtleties of the non-stationary glm}
\else
\fi
We focus in this paper on closing this gap. Our main contribution is \textbf{(1)} the design of \ouralgo{} (Algorithm~\ref{alg:algo}), the first \glb{} algorithm resilient to parameter-drift and matching the known
minimax rates - though only for some action sets (Theorem~\ref{thm:regret_bound}). For more general configurations, we still provide a sub-linear regret bound, slightly lagging behind the known rates for non-stationary LBs. Our result relies on \textbf{(2)} a generalization of the projection step of \citet{filippi2010parametric} to non-stationary environments, of similar complexity than its stationary counterpart (Proposition~\ref{prop:program1}). 
Our analysis \textbf{(3)} sheds light on some salient mechanisms of non-stationary bandits.

\section{Preliminaries}
\label{sec:prel}
We consider in this work the stochastic contextual bandit setting under parameter-drift. The environment starts by picking a sequence of parameters $\{\ts{t}\}_{t=1}^\infty$. A repeated game then begins between the environment and an agent. At each round $t$, the environment presents the agent with a set of actions $\mcal{X}_t$ (potentially contextual, large or even infinite). The agent selects an action $x_t\in\mcal{X}_t$ and receives a (stochastic) reward $r_{t+1}$. In this paper we work under the fundamental assumption that there exists a structural relationship between actions and their associated reward in the form of:
\begin{align}
	\mbb{E}\left[r_{t+1}\,\vert \, \mathcal{F}_t , x_t \right]  = \mu\left(\langle x_t,\theta_\star^t\rangle\right).
	\label{eq:reward_def}
\end{align}
The filtration $\mcal{F}_t \defeq \sigma(\{x_s,r_{s+1}\}_{s=1}^{t-1})$ represents the information acquired at round $t$, and $\mu$ is a strictly increasing, continuously differentiable real-valued function most often referred to as the inverse link function. Notable instances of such a problem include the logistic bandit and the Poisson bandit. The goal of the agent is to minimize the cumulative pseudo-regret:
\begin{align*}
	R_T \defeq \sum_{t=1}^{T}\mu(\langle x_\star^t,\ts{t}\rangle)-\mu(\langle x_t,\ts{t}\rangle) \text{ where } x_\star^t = \argmax_{x\in\mcal{X}_t} \mu(\langle x,\ts{t}\rangle)\;.
\end{align*}
We make the following assumption common in the study of parametric bandits:
\begin{ass}[Bounded decision set]
	For all $t\geq 1$, the following holds true: $\ltwo{\ts{t}}\leq S$. Further, the actions have bounded norms: $\ltwo{x}\leq L$ for all $x\in\mathcal{X}_t$.
	\label{ass:bounded_decision_set}
\end{ass}
\begin{ass}[Bounded reward]
	There exists $\sigma>0$ s.t $0\leq r_{t} \leq  2\sigma$ holds almost surely.
	\label{ass:bounded_reward}
\end{ass}
We will denote $\Theta=\{\theta, \ltwo{\theta}\leq S\}$ the set of admissible parameters and $\mcal{X}=\{x, \ltwo{x}\leq L\}$. We assume that the quantities $L$, $S$ and $\sigma$ are known to the agent.
 The true parameters $\{\ts{t}\}_{t=1}^\infty$ are unknown, and their drift is quantified by the variation \emph{variation-budget}, which characterizes the magnitude of the non-stationarity in the environment: 
 \begin{align*}
 B_{T,\star}\defeq \sum_{t=1}^{T-1} \ltwo{\ts{t+1}-\ts{t}} .
\end{align*}
Naturally $B_{T,\star}$ is unknown. For the sake of simplicity and to isolate the main contribution of this paper (\emph{i.e} minimax-optimality in non-stationary GLBs), we will make the following assumption.
\begin{ass}[Variation-budget upper-bound]
    $B_T$ is a \emph{known} quantity such that $B_T\geq B_{T,\star}$.
	\label{ass:B_T_upper_bound}
\end{ass}
This assumption is common in non-stationary bandits
\citep{besbes2014stochastic,cheung2019hedging, zhao2020simple}. We will show in Section~\ref{sec:online_est} how to bypass it with little to no impact on the regret.
For a given inverse link function $\mu$, we will follow the notation from \citet{filippi2010parametric} and denote:
\begin{align*}
	\km=\sup_{x\in\mcal{X}, \theta\in\Theta}\dot\mu\left(\langle x,\theta\rangle\right), \qquad \cm = \inf_{x\in\mcal{X}, \theta\in\Theta} \dot{\mu}\left(\langle x,\theta\rangle\right), \qquad \Rm = \km/\cm \; .
\end{align*}
As in the stationary setting, learning can be canonically performed through the \emph{quasi-maximum likelihood} principle, albeit with adequate modifications. Let $b$ be a primitive of $\mu$. Thanks to the strict increasing nature of the latter, $b$ is a strictly convex function. Let $\lambda>0$ and for $\gamma\in(0,1)$ define\footnote{We follow \citet{russac2019weighted} and use an exponential moving-average strategy.  Our contribution is not specific to this approach and can easily be extended to other alternatives, e.g the sliding window.}:
\begin{align}
	\hat{\theta}_t = \argmin_{\theta\in\mbb{R}^d} \sum_{s=1}^{t-1}\gamma^{t-1-s} \left[b(\langle x_s,\theta\rangle)- r_{s+1}\langle x_s,\theta\rangle \right] + \frac{\lambda\cm}{2}\ltwo{\theta}^2,
	\label{eq:thetahatdef}
\end{align} 
which is well-defined and unique as the minimizer of a strictly convex and coercive function. Further:
\begin{align*}
	g_t(\theta) \defeq \sum_{s=1}^{t-1}\gamma^{t-1-s} \mu(\langle x_s,\theta\rangle)x_s + \lambda\cm \theta.
\end{align*}
Finally, we will use $\mbold{V_t} \defeq \sum_{s=1}^{t-1}\gamma^{t-1-s}x_sx_s^\transp + \lambda \mathbf{I_d} $ and  $\mbold{\widetilde{V}_t} \defeq \sum_{s=1}^{t-1}\gamma^{2(t-1-s)}x_sx_s^\transp + \lambda \mathbf{I_d}$.
Some of our results requires the following assumption on the arm-sets $\mcal{X}_t$. We will discuss the reasons behind this hypothesis, as well as its main implications in the following section. 

\begin{ass}[Orthogonal arm-set]
\label{ass:ortho}
    Let $\{e_i\}_{i=1}^d$ an orthonormal basis of $\mbb{R}^d$. We call a collection of arm-sets $\{\mcal{X}_t\}_t$ \emph{orthogonal} if for all $t\geq 1$ and any $x\in\mcal{X}_t$, there exists $\alpha$ and $i$ such that $x = \alpha e_i$. 
    \label{ass:ortho}
\end{ass}

\section{Related work: limitations and challenges}
\label{sec:related}

\subsection{GLBs and non-stationary LB}

\ifbulletpoint
{\color{red}on GLM, recent interest}
\else
\fi
  \glb{s} were first introduced by \citet{filippi2010parametric} who studied optimistic algorithms which enjoy a $\tilde{\mcal{O}}(\Rm d\sqrt{T})$ regret upper-bound, later refined for $K$-arms problem to $\tilde{\mcal{O}}(\Rm\sqrt{d\log(K)T})$  \citep{li2017provably}. These findings were extended to randomized algorithms, both in the frequentist \citep{abeille2017linear} and Bayesian setting \citep{russo2014learning,dong2018information}. \glb{s} also received an increasing attention targeted at improving their practical implementations \citep{jun2017scalable,dumitrascu2018pg}.
\ifbulletpoint
{\color{red}existing algorithms for non-stationarity in MAB and \lb{}} 
\else
\fi

 Non-stationarity in bandits was first studied in the MAB framework under the specific assumption of abruptly-changing environments (also known as \emph{switching} or \emph{piece-wise stationary} bandits) by \cite{garivier2011upper}. They introduce an algorithm for which they prove $\bigO{\sqrt{\Gamma_TT}}$ regret bounds, where  $\Gamma_T$ is an upper bound on the number of switches.
The effects of the more general parameter-drift were first studied in the MAB setting by \citet{besbes2014stochastic} who for $K$-arm MAB achieved a dynamic regret bound of $\tilde{\mcal{O}}(K^{1/3}B_T^{1/3}T^{2/3})$. Such results were recently extended to the stochastic \lb: \citet{cheung2019learning} developed dynamic policies by resorting to a sliding-window,  \citet{russac2019weighted} introduced a similar approach based on an exponential moving average, and \citet{zhao2020simple} advocated for a simpler restart-based solution. All three aforementioned approaches claim regret bounds of the form $\tilde{\mcal{O}}(d^{2/3}B_T^{1/3}T^{2/3})$, henceforth matching the lower-bound of \citet{cheung2019hedging} up to logarithmic factors. Unfortunately, an error in their analysis was recently pointed out by \cite{touati2020efficient}. It turns out that a correct analysis yields degraded regret bounds, scaling as $\tilde{\mcal{O}}(d^{7/8}B_T^{1/4}T^{3/4})$. This can be improved when the arm sets are orthogonal (Assumption~\ref{ass:ortho}) to retrieve the aforementioned minimax-optimal rates. Note that although this is a rather strong requirement, it does not reduce to MAB as it still allows for infinite and changing arm-sets. 

\subsection{Toward non-stationary GLBs: limitations}
\label{subsec:limitations}
\ifbulletpoint
{\color{red} and extensions under non-stationarity}
\else
\fi

\paragraph{On the limits of piece-wise stationarity.} 
\ifbulletpoint
{\color{red}
\begin{itemize}
	\item Budget variation vs abrupt change
	\begin{itemize}
		\item In Russac, far from switch = stationary
		\item For us, fundamental tension between learning and tracking comes into play
	\end{itemize}
\end{itemize}
}
\else
\fi
To the best of our knowledge, the first valid analysis of non-stationary GLBs was conducted by \citet{russac2020algorithms, russac2020self}. However, their work is restricted to piece-wise stationary environments, characterized by the number $\Gamma_T$ of switches of the reward signal.
On the practical side, this drastically narrows down the non-stationary scenarios that can be efficiently addressed, as the measure $\Gamma_T$  can grossly overestimate the importance of the non-stationarity. In such case, any algorithm based on this measure will be sub-optimal and discard too fast previous data, quickly judged uninformative since the level of non-stationarity is expected to be high. This is typically the case in environments with many switches of small amplitude, characteristic of smooth drifts (e.g user-fatigue in recommender systems). On the theoretical side, this approach tells us little about the difficulties and challenges brought by the non-stationarity, as it relies on the fact that far enough from a switch, the environment is stationary. 
On the contrary, the variation-budget metric $B_T$ introduced and discussed in \citet[Section 2]{besbes2014stochastic}, allows for much finer considerations. It stands as a powerful characterization of the non-stationarity, measuring the number of switches and their amplitude \emph{jointly}. As a result, it can efficiently cover different scenarios, from drifting to piece-wise stationary environments. An adequate treatment of \glb{s} under this superior metric is therefore a crucial missing piece, and requires a sensibly different analysis and an appropriate algorithmic design.

\ifbulletpoint
{\color{red}
\begin{itemize}
		\item Existing approaches are wrong and it is serious:
			\begin{itemize}
				\item Implicitly uses the fact that $\hat\theta_t\in\Theta$ (foot note about the restart paper which define $\cm=0$ ?) 
			\end{itemize}
\end{itemize}}
\else
\fi
\paragraph{Parameter-drift and GLBs: flaws of previous approaches.} Most of the existing non-stationary \lb{} algorithms address the parameter-drift setting and their extension to \glb{s} was at first considered as relatively straight-forward \citep{cheung2019hedging,zhao2020simple}. Unfortunately, existing analyses suffer from important caveats because they overlook a crucial feature of \glb{s}.  Following \citet{filippi2010parametric}, they rely on a linearization of the reward function around $\hat\theta_t$. Naturally, the linear approximation must accurately describe the \emph{effective} behavior of the reward signal (characterized by the ground-truth $\ts{t}$). From Assumption~\ref{ass:bounded_reward}, this translates in the structural constraint $\hat\theta_t \in \Theta$, which is implicitly assumed to hold in previous attempts. Unfortunately, there exists no proof guaranteeing that $\hat\theta_t \in \Theta$ could hold. Even worse, existing deviation bounds \cite[Theorem 1]{abbasi2011improved} rather suggest that in some directions, \emph{even in the stationary case}, $\hat\theta_t$ can grow to be  $\sqrt{\log(t)}$ far from $\Theta$! The situation is even worse under non-stationarity since, as we shall see, $\hat\theta_t$ can be $B_t$ far from $\Theta$. This flaw in the analysis is critical and cannot be easily fixed without severely degrading the regret guarantee. When $\hat\theta_t\notin\Theta$, this impacts the ratio $R_\mu$ which captures the degree of non-linearity of the inverse link function. For the highly non-linear logistic function, easy computations show that $\Rm\geq e^{SL}$. If we were to inflate the radius of the admissible set $\Theta$ from $S$ to $S+\delta_S$ (so that it contains $\hat\theta_t$), the estimated non-linearity of the reward function would be even stronger and $R_\mu$ would be multiplied by a factor $e
^{L\delta_S}$! Because the regret bound scales linearly with $\Rm$, this exponential growth would lead to prohibitively deficient performance guarantees.

\begin{rem}
	The fact that $\hat\theta_t$ can leave the admissible set $\Theta$ is not merely a theoretical construction inherited from potentially loose deviation bounds. As highlighted in Figure~\ref{fig:projection}, we can see in our numerical simulations that this often happens in practice when the environment is non-stationary.
\end{rem}

\ifbulletpoint
{\color{red}
\begin{itemize}
	\item This is well known: Filippi maps $\hat\theta_t$ back to the feasible set $\Theta$, and so does Russac in the abrupt change case
\end{itemize}
}
\else
\fi

\subsection{Non-stationary GLBs: challenges}
\label{subsec:blabla}
In their seminal work, \citet{filippi2010parametric} countered the aforementioned difficulty by introducing a \emph{projection} step, mapping $\hat\theta_t$ back to an admissible parameter $\tilde\theta_t \in \Theta$. Formally, they compute:
					\begin{align}
						\tilde\theta_t = \argmin_{\theta\in\Theta} \left\lVert g_t(\theta)-g_t(\hat\theta_t)\right\rVert_{\mathbf{V^{-1}_t}}\tag{\textbf{P0}}
						\label{eq:filippi_proj}
					\end{align}
and use $\tilde\theta_t $ to predict the performance of the available actions. The projection step \eqref{eq:filippi_proj} essentially incorporates the prior knowledge $\theta_\star \in \Theta$ (Assumption~\ref{ass:bounded_reward}) without degrading the learning guarantees of the maximum likelihood estimator. This strategy was also leveraged by \citet{russac2020algorithms}, which was made possible thanks to their piece-wise stationarity assumption. 

The situation is different in our setting, as the parameter-drift framework allows the sequence $\{\ts{t}\}$ to change \emph{at every round}. This introduces \textbf{(1)} the need to characterize two phenomenons of different nature	 that we will designate as \emph{learning} and \emph{tracking}. The former (learning) is linked to the deviation of the maximum-likelihood estimator $\hat\theta_t$ from its noiseless counterpart $\bar\theta_t$ (the estimator that one would have obtained if one could have averaged  an infinite number of realization of the trajectory). The later (tracking) measures the deviation of $\bar\theta_t$ from the current $\ts{t}$, due to an incompressible error inherited from the drifting nature of the sequence $\{\ts{s}\}_{s=1}
^{t}$. The learning and tracking mechanisms are both sources of deviation of $\hat\theta_t$ away from $\Theta$, each under a different metric. This leads to \textbf{(2)} a tension in the design of the projection as this requires to incorporate the knowledge $\{\ts{t}\}\in\Theta$, without degrading neither the learning nor the tracking guarantees. This rules out the projection step \eqref{eq:filippi_proj}, oblivious to the tracking aspect of the problem and which needs to be generalized to adapt to the two sources of deviation (i.e learning and tracking).

\section{Algorithm and regret bound}
\label{sec:contributions}
\subsection{Algorithm}

\ifbulletpoint
{\color{red}
\begin{itemize}
		\item two steps in the design: appropriate parameter for prediction and appropriate bonus. define confidence set here.
\end{itemize}
}
\else
\fi

This section is dedicated to the description of the design of our new algorithm \ouralgo. It operates in two steps: \textbf{(Step 1)} the computation of an appropriate admissible parameter $\tilde\theta_t\in\Theta$ (to be used for predicting the rewards associated with the actions $x\in\mathcal{X}_t$ available at round $t$) and \textbf{(Step 2)} the construction of a suitable exploration bonus to compensate for prediction errors. 

The first step builds on the following set, linked to the deviation incurred through the learning process:
		\begin{align}	
		\label{eq:generic.conf.set}
					\mcal{E}_t^\delta(\theta) \defeq \left\{\theta'\in\mbb{R}^d \text{ s.t } \Big\lVert g_t(\theta') - g_t(\theta)\Big\rVert_{\mbold{\widetilde{V}_t^{-1}}}\leq \beta_t(\delta) \right\},
			\end{align}
		where  $\beta_t(\delta)$ is a slowly-increasing function of time (to be defined later) and $\delta\in(0,1]$.

\begin{figure}[t]
\centering
\scalebox{1.1}{
    \begin{tikzpicture}
     \begin{axis}[
     	axis equal,
        axis x line=none,
        axis y line=none,
        xtick=\empty,
        ytick=\empty,
        scaled ticks=false,
        xmin=-20,
        xmax=21,
        ymin=-16,
        ymax=15,
        xlabel=,
        ylabel=,
   ]
   \def\radiusT{8}

   \fill[pattern=north west lines, pattern color=blue, opacity=0.5] 
  (axis cs:-10,0) circle [radius=\radiusT];
  \draw[color=blue, line width=1.5] (axis cs:-10,0) circle [radius=\radiusT];
   \node at (axis cs:-10-\radiusT,+\radiusT) {$\color{blue}\Theta$};
   
   \fill[color=black, line width=1.2, pattern= horizontal lines, opacity=0.2] plot [smooth cycle, tension=1] coordinates {(axis cs: 12,8) (axis cs: 10,1) (axis cs: 14,-8) (axis cs: 18,0)} node at (axis cs:12,14) {};
   \draw[color=black, line width=1.2] plot [smooth cycle, tension=1] coordinates {(axis cs: 12,8) (axis cs: 10,1) (axis cs: 14,-8) (axis cs: 18,0)} node at (axis cs:12,14) {};
    \node at (axis cs:18,8) {\scriptsize$\pmb{\mathcal{E}_t^{\delta}(\hat{\theta}_t)}$};
     \draw[color=black, line width=1.2, draw opacity=0.4, pattern=crosshatch dots, opacity=0.1] plot [smooth cycle, tension=1]  coordinates {(axis cs: -2,7) (axis cs: -2,1) (axis cs: 2,-9) (axis cs: 3,1)} node at (axis cs:12,14) {};
   \node at (axis cs:6,-9) {\scriptsize$\color{gray}\pmb{\mathcal{E}_t^{\delta}(\theta_t^p)}$};
   
   \node at (axis cs:15.5,1) {\footnotesize$\hat{\theta}_t$};
   \draw[fill=black, color=black] (axis cs: 14,0) circle [radius=0.3];
     \node at (axis cs:14,-5) {\footnotesize$\bar{\theta}_t$};
     \draw[fill=black, color=black]  (axis cs:12.5,-6) circle [radius=0.3];
   \node[align=right] at (axis cs:-4.5,-7.5) {\footnotesize$\theta_{\star}^{t}$};
   \draw[fill=black, line width=1.5] (axis cs:-6,-6.9) circle [radius=0.3];
   \node[align=right] at (axis cs:2,-1) {\footnotesize $\theta_t^{p}$};
   \draw[fill=black, color=black, draw opacity=0.5] (axis cs: 1,-2) circle [radius=0.3];
   \draw[fill=red, color=red, draw opacity=0.5] (axis cs: -2., 0) circle [radius=0.4];
   \node[color=red] at (axis cs: -0.5,1) {\color{red}\footnotesize$\tilde{\theta}_t$};

   \draw[<->] (axis cs:14,12) -- (axis cs: 1,12);
   \draw[dotted] (axis cs:1,-1) -- (axis cs: 1,12);
   \draw[dotted] (axis cs: 14,0) -- (axis cs: 14,12);
   \node[align=right, rotate=0] at (axis cs: 8,14) {$\pmb{\propto\! B_t}$};
   
   \draw[dotted] (axis cs: -6, -6.9) -- (axis cs:-6,-12);
   \draw[dotted] (axis cs: 12.5, -6) -- (axis cs:12.5,-12);
   \draw[<->] (axis cs: -6,-12) -- (axis cs: 12.5, -12);
   \node[align=right, rotate=0] at (axis cs: 3.25,-14) {$\pmb{\propto \! B_t}$};
    \end{axis}
    \end{tikzpicture}}
    \vspace{-10pt}
    \caption{Illustration of the different parameters of interest. As stated by Lemma~\ref{lemma:bound_first_drift} and Lemma~\ref{lemma:bound_second_drift}, the deviations ($\theta_t^p\leftrightarrow \hat\theta_t$) and ($\bar\theta_t\leftrightarrow\ts{t}$) are linked to the parameter-drift $B_t$. On the other hand, the deviations  ($\hat\theta_t\leftrightarrow \bar\theta_t$) and  ($\tilde\theta_t\leftrightarrow \theta^p_t$) are characterized by the stochastic nature of the problem.}
    \label{fig:lemmaseconddriftillustration}
\end{figure}
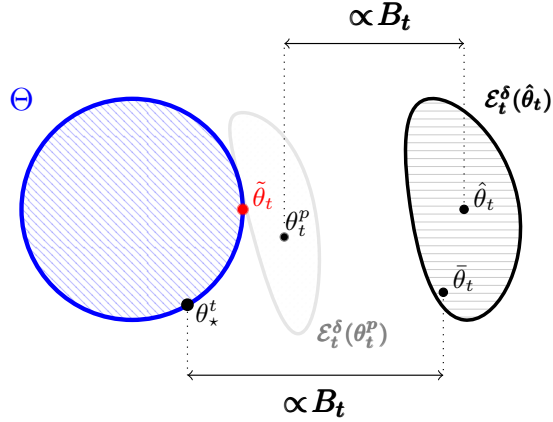

\ifbulletpoint
{\color{red}
\begin{itemize}
		\item introduction of $\theta_t^p$ and $\tilde\theta_t$ with motivation (informal)
\end{itemize}
}
\else
\fi
\textbf{Step 1.}  We start by identifying an intermediary parameter $\theta_t^p$, solution of the following constrained optimization program (ties can be broken arbitrarily):
			\begin{align}
				\theta_t^p \in \argmin_{\theta\in\mbb{R}^d}\left\{ \left\lVert g_t(\theta)-g_t(\hat\theta_t) \right\rVert_{\mathbf{V^{-2}_t}} \text{ s.t } \Theta\cap\mcal{E}_t^\delta(\theta)\neq \emptyset \right\}.\tag{\textbf{P1}}
				\label{opt:program_0}
			\end{align}
			The optimization program~\eqref{opt:program_0} is well-posed as it consists in minimizing a smooth function over a non-empty compact set\footnote{Notice that $\{\theta \text{ s.t } \Theta\cap\mcal{E}_t^\delta(\theta) \neq \emptyset \}$ always contains $0_d$, while the compactness is inherited from $\Theta$.}. Once $\theta_t^p$ is computed, the algorithm simply chooses any parameter $\tilde\theta_t\in\Theta\cap\mcal{E}_t^\delta(\theta_t^p)$. An efficient procedure to find such a parameter is detailed in Section~\ref{subsec:proj}. The different parameters of interest for \ouralgo{} are illustrated in Figure~\ref{fig:lemmaseconddriftillustration}.
\ifbulletpoint
{\color{red}
\begin{itemize}
		\item add a remark to emphasize the difference in projection with filippi
\end{itemize}
}
\else
\fi

\begin{rem}
		Notice the difference with the projection step used in the stationary case. In our case it is possible that $\mcal{E}_t^\delta(\hat\theta_t)$ (which is the confidence set centered at $\hat\theta_t$) does not intersect the admissible set $\Theta$. Our strategy for finding $\tilde\theta_t$ is then to compute an appropriate \textbf{vibration} $\mcal{E}_t^\delta(\theta_t^p)$ of $\mcal{E}_t^\delta(\hat\theta_t)$ which does intersect $\Theta$, while minimizing the deviation between $\theta_t^p$ and $\hat\theta_t$ according to a metric related to the tracking error (through the map $g_t$ and the squared inverse of the design matrix).
\end{rem}

\ifbulletpoint
{\color{red}
\begin{itemize}
		\item define the bonus
\end{itemize}
}
\else
\fi
\textbf{Step 2.} 
The exploration bonus at round $t$ for a given arm $x\in\mcal{X}_t$ is defined as $b_t(x)= 2\Rm\beta_t(\delta)\lVert x\rVert_{\mathbf{V^{-1}_t}}$, where $\delta\in(0,1]$ and:
		\begin{align*}
		\beta_t(\delta) = \sqrt{\lambda}\cm S + \sigma\sqrt{2\log(1/\delta)+d\log\left(1+\frac{L^2(1-\gamma^{2t})}{\lambda d(1-\gamma^2)}\right)}\; .
		\end{align*}
\ifbulletpoint
{\color{red}
\begin{itemize}
		\item and algorithm strategy
\end{itemize}
}
\else
\fi
\ouralgo{} then follows an optimistic strategy, boosting the predicted reward associated with $\tilde{\theta}_t$ by $b_t$ and plays $x_t \in\argmax_{x\in\mathcal{X}_t} \mu(\langle x,\tilde\theta_t\rangle) + b_t(x)$. The pseudo-code is summarized in Algorithm~\ref{alg:algo}.

\begin{algorithm}[!ht]
	\caption{\ouralgo}
	\label{alg:algo}
	\begin{algorithmic}
		\STATE {\bfseries Input.} regularization $\lambda$, confidence $\delta$, inverse link function $\mu$, weight $\gamma$, constants $S,L$ and $\sigma$.
		\STATE {\bfseries Initialization.} Compute $R_\mu$, let $\mathbf{V_1}\leftarrow \lambda\mathbf{I}_d$ and $\hat\theta_1 \leftarrow 0_d$.
		\FOR{$t\geq 1$}
		\STATE Find $\theta_t^p$ by solving \eqref{opt:program_0} and select $\tilde\theta_t\in\Theta \cap \mcal{E}_t^\delta(\theta_t^p)$.
		\STATE Play $x_t \leftarrow  \argmax_{x\in\mcal{X}_t} \mu(\langle x,\tilde\theta_t\rangle) +2\Rm\beta_t(\delta)\lVert x\rVert_{\mathbf{V^{-1}_t}}$.
		\STATE Observe reward $r_{t+1}$, update $\hat\theta_{t+1}$ by solving Equation~\eqref{eq:thetahatdef}.
		\STATE Update design matrix: $\mathbf{V_{t+1}}\leftarrow \gamma\mathbf{V}_t + x_tx_t^\transp + (1-\gamma)\lambda\mathbf{I_d}$.
		\ENDFOR
	\end{algorithmic}
\end{algorithm}

\ifbulletpoint
{\color{red}
\begin{itemize}
		\item one theorem with regret upper-bound
\end{itemize}
}
\else
\fi

\subsection{Regret bound}
We provide in Theorem~\ref{thm:regret_bound} a high-probability bound on the regret of \ouralgo.
	\begin{restatable}{thm}{thmregretbound}\label{thm:regret_bound}
			\ifappendix
			Let $\delta\in(0,1]$ and $D\in\mathbb{N}^*$. 
			Under Assumptions~\ref{ass:bounded_decision_set} -\ref{ass:bounded_reward}-\ref{ass:B_T_upper_bound} and 
			Assumption~\ref{ass:ortho}
			with probability at least $1-\delta$:
			\begin{align*}
				R_T \leq C_1 R_\mu \beta_T(\delta)  \sqrt{dT}\sqrt{T\log(1/\gamma) + \log\left(1+\frac{L^2(1-\gamma^T)}{\lambda d(1-\gamma)}\right) }+ C_2 R_\mu \frac{\gamma^D}{1-\gamma} T + C_3 R_ \mu D B_T
			\end{align*}
			Further, setting $\gamma=1-(B_T/(dT))^{2/3}$ ensures:
			\begin{align*}
				R_T = \bigO{\frac{\km}{\cm}d^{2/3}B_T^{1/3}T^{2/3}} \qquad \text{w.h.p}
			\end{align*}
			Under general arm-set geometry and Assumptions~\ref{ass:bounded_decision_set}-\ref{ass:bounded_reward}-\ref{ass:B_T_upper_bound}, with probability at least $1-\delta$
			\begin{align*}
			    R_T &\leq C_1 R_\mu \beta_T(\delta)  \sqrt{dT}\sqrt{T\log(1/\gamma) + \log\left(1+\frac{L^2(1-\gamma^T)}{\lambda d(1-\gamma)}\right) } 
			    \\ 
			    & + C_4 R_\mu \frac{\gamma^D}{1-\gamma} T
			    + C_5 \km R_\mu \frac{\gamma^D}{(1-\gamma)^{3/2}} T + C_6 k_\mu R_\mu \sqrt{\frac{d}{1-\gamma}} D B_T 
			    + C_7 k_\mu R_\mu \frac{\sqrt{d}}{1-\gamma} D B_T 
			\end{align*}
			Further, setting $\gamma = 1- \frac{B_T^{2/5}}{d^{1/5}T^{2/5}}$ ensures:
			\begin{align*}
				R_T = \bigO{\frac{\km}{\cm}d^{9/10} R_\mu B_T^{1/5}T^{4/5}} \qquad \text{w.h.p}
			\end{align*}
			\else
			Under Assumptions~\ref{ass:bounded_decision_set}-\ref{ass:bounded_reward}-\ref{ass:B_T_upper_bound} and \ref{ass:ortho}, setting $\gamma=1-(B_T/(dT))^{2/3}$ ensures that the regret of \ouralgo{} satisfies:
			\begin{align*}
				R_T = \bigO{\Rm d^{2/3}B_T^{1/3}T^{2/3}} \qquad \text{w.h.p}
			\end{align*}
			Under general arm-set geometry and Assumptions~\ref{ass:bounded_decision_set}-\ref{ass:bounded_reward}-\ref{ass:B_T_upper_bound}, setting $\gamma=1-(B_T/(\sqrt{d}T))^{2/5}$ ensures that the regret 
			of \ouralgo{} satisfies:
			\begin{align*}
				R_T = \bigO{\Rm d^{9/10} B_T^{1/5}T^{4/5}} \qquad \text{w.h.p}
			\end{align*}
			
			\fi
	\end{restatable}
\ifbulletpoint
{\color{red}
\begin{itemize}
		\item discussion about optimality and appearance of parameter $\Rm$ 
\end{itemize}}
\else
\fi
A few comments are in order. First, we note that as in the linear case, under Assumption~\ref{ass:ortho} the upper-bound on $R_T$ matches the asymptotic rates of the \lb{} lower-bound under parameter drift \cite[Theorem 1]{cheung2019hedging}. Without this assumption, the upper-bound suffers a small lag behind the LB rates, from $T^{3/4}$ to $T^{4/5}$. Second, one can notice the presence in the bound of the ratio $\Rm$, typical of the linearization approach performed to analyze \glb{s}. The bounds presented in Theorem~\ref{thm:regret_bound} are therefore quite natural and extends the work of \citet{filippi2010parametric} to non-stationary worlds. We emphasize that if the result seems unsurprising, it required a substantially different machinery, both for the design of the algorithm and its analysis. We highlight this last point in Section~\ref{sec:sketch_of_proof}, dedicated at providing a comprehensive sketch of proof for Theorem~\ref{thm:regret_bound}. The complete and detailed proof is deferred to Section~\ref{app:regret_bound} in the supplementary material.

\subsection{Solving the projection step}
\label{subsec:proj}
\ifbulletpoint
{\color{red}
\begin{itemize}
		\item  only $\tilde\theta_t$ is needed
\end{itemize}
}
\else
\fi
The optimization program \eqref{opt:program_0} and the subsequent search of a valid parameter $\tilde{\theta}_t$ can raise some legitimate concerns regarding the ease of practical implementation. Indeed, the feasible set of~\eqref{opt:program_0} is given by $\{\theta \hspace{1mm} \text{s.t.} \hspace{1mm} \Theta\cap\mcal{E}_t^\delta(\theta)\neq \emptyset\}$, where $\mcal{E}_t^\delta(\theta)$ is defined in~\eqref{eq:generic.conf.set}. Hence, the associated constraint is \emph{implicit} as it involves an additional \emph{non-convex} minimization program. As a result, it makes the constraint uneasy to manipulate and even hard to check. The same difficulty arises when searching for $\tilde{\theta}_t \in \Theta \cap \mcal{E}_t^\delta(\theta_t^p)$ where $\theta_t^p$ is a solution of~\eqref{opt:program_0}, due to the non-convexity of the set $\mcal{E}_t^\delta(\theta_t^p)$. 
The following proposition provides an alternative that avoids those difficulties.
\ifbulletpoint
{\color{red}
\begin{itemize}
		\item provide alternative optimization step
\end{itemize}
}
\else
\fi
\begin{restatable}{prop}{propfindingthetatilde}\label{prop:program1}
Let $\tilde{\theta}_t$ be such that:
	\begin{align}
		\begin{pmatrix} \tilde{\theta}_t\\ \eta_t^p  \end{pmatrix} \in\argmin_{\theta'\in\mbb{R}^d,\eta \in\mbb{R}^d}\left\{ \left\lVert g_t(\theta')+\beta_t(\delta)	\mathbf{\widetilde{V}^{1/2}_t}\eta-g_t(\hat\theta_t) \right\rVert_{\mathbf{V^{-2}_t}} \text{ s.t } \ltwo{\theta'}\leq S, \ltwo{\eta}\leq 1\right\}.\tag{\textbf{P2}}
		\label{opt:program_2}
	\end{align}
It exists $\theta_t^p$ solution of~\eqref{opt:program_0} such that $\tilde{\theta}_t \in \Theta \cap \mcal{E}_t^\delta(\theta_t^p)$.
\end{restatable}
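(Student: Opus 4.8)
The plan is to show that programs \eqref{opt:program_0} and \eqref{opt:program_2} describe the \emph{same} optimization problem once re-expressed in the image of the map $g_t$, and then to recover an optimal $\theta_t^p$ for \eqref{opt:program_0} by inverting $g_t$ at a minimizer of \eqref{opt:program_2}. The cornerstone is that $g_t$ is a bijection of $\mbb{R}^d$. Writing the auxiliary potential $\tilde{L}_t(\theta)\defeq\sum_{s=1}^{t-1}\gamma^{t-1-s}b(\langle x_s,\theta\rangle)+\frac{\lambda\cm}{2}\ltwo{\theta}^2$, one checks that $g_t=\nabla\tilde{L}_t$ (using $b'=\mu$); since $b$ is convex and $\cm>0$, $\tilde{L}_t$ is $C^1$, strictly convex and coercive. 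Injectivity of $g_t$ then follows from strict convexity, and surjectivity from the fact that for every $y\in\mbb{R}^d$ the strictly convex coercive map $\theta\mapsto\tilde{L}_t(\theta)-\langle y,\theta\rangle$ admits a unique minimizer $\theta$, which satisfies $g_t(\theta)=y$. Hence $y=g_t(\theta)$ is a legitimate change of variable, with $\theta=g_t^{-1}(y)$ well-defined for every $y$.

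Next I would linearize the implicit constraint. Since $\mathbf{\widetilde{V}_t}$ is positive definite, $\lVert v\rVert_{\mbold{\widetilde{V}_t^{-1}}}\leq\beta_t(\delta)$ holds if and only if $v=\beta_t(\delta)\mathbf{\widetilde{V}^{1/2}_t}\eta$ for some $\ltwo{\eta}\leq1$. Applying this to $v=g_t(\theta')-g_t(\theta)$ shows that $\Theta\cap\mcal{E}_t^\delta(\theta)\neq\emptyset$ holds exactly when $g_t(\theta)\in\mcal{Y}\defeq\{g_t(\theta')+\beta_t(\delta)\mathbf{\widetilde{V}^{1/2}_t}\eta:\ltwo{\theta'}\leq S,\ \ltwo{\eta}\leq1\}$, the symmetry of the unit ball in $\eta$ absorbing the sign. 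Using the bijection $y=g_t(\theta)$, program \eqref{opt:program_0} rewrites as $\min_{y\in\mcal{Y}}\lVert y-g_t(\hat\theta_t)\rVert_{\mathbf{V^{-2}_t}}$; expanding $y=g_t(\theta')+\beta_t(\delta)\mathbf{\widetilde{V}^{1/2}_t}\eta$ recovers precisely the objective of \eqref{opt:program_2} minimized over $(\theta',\eta)$ with $\ltwo{\theta'}\leq S$ and $\ltwo{\eta}\leq1$. In particular both programs share the same optimal value. Incidentally, $\mcal{Y}$ is compact as a Minkowski sum of two compact sets, so $g_t^{-1}(\mcal{Y})$ is compact, which recovers the well-posedness asserted for \eqref{opt:program_0}.

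Finally I would build the claimed $\theta_t^p$. Let $(\tilde\theta_t,\eta_t^p)$ be a minimizer of \eqref{opt:program_2} (which exists, the objective being continuous over a compact feasible set) and set $y^p\defeq g_t(\tilde\theta_t)+\beta_t(\delta)\mathbf{\widetilde{V}^{1/2}_t}\eta_t^p$. By surjectivity there is a unique $\theta_t^p$ with $g_t(\theta_t^p)=y^p$. Then $\tilde\theta_t\in\Theta$ and $\lVert g_t(\tilde\theta_t)-g_t(\theta_t^p)\rVert_{\mbold{\widetilde{V}_t^{-1}}}=\beta_t(\delta)\ltwo{\eta_t^p}\leq\beta_t(\delta)$, so $\tilde\theta_t\in\Theta\cap\mcal{E}_t^\delta(\theta_t^p)$; this one computation simultaneously certifies that $\theta_t^p$ is feasible for \eqref{opt:program_0} and yields the desired membership. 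Its objective value $\lVert y^p-g_t(\hat\theta_t)\rVert_{\mathbf{V^{-2}_t}}$ equals the optimum of \eqref{opt:program_2}, hence by the equality of optimal values $\theta_t^p$ is optimal for \eqref{opt:program_0}, completing the argument.

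I expect the main, and essentially only, obstacle to be the bijectivity of $g_t$, specifically its surjectivity, since the rest is a routine ellipsoid reparametrization and change of variable. This is exactly where the regularization $\lambda\cm>0$ is used: it furnishes the strong convexity and coercivity of $\tilde{L}_t$ that render $g_t^{-1}$ well-defined on all of $\mbb{R}^d$, and in particular at the shifted point $y^p$ lying outside $g_t(\Theta)$.
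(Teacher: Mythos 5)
Your proof is correct and takes essentially the same route as the paper's: both arguments hinge on the bijectivity of $g_t$ (which you prove, as the paper effectively does, by viewing $g_t$ as the gradient of a strictly convex, coercive potential) and on rewriting the constraint $\lVert g_t(\theta')-g_t(\theta)\rVert_{\mathbf{\widetilde{V}^{-1}_t}}\leq\beta_t(\delta)$ through a unit-ball slack variable $\eta$. The only difference is organizational: the paper constructs an explicit bijection between the solution sets of \eqref{opt:program_0} (augmented with $\tilde\theta_t$) and \eqref{opt:program_2}, whereas you establish equality of optimal values via the change of variables $y=g_t(\theta)$ and then exhibit the witness $\theta_t^p=g_t^{-1}\bigl(g_t(\tilde\theta_t)+\beta_t(\delta)\mathbf{\widetilde{V}^{1/2}_t}\eta_t^p\bigr)$ directly --- the same idea, packaged slightly more economically.
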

\ifbulletpoint
{\color{red}
\begin{itemize}
		\item discuss relationship with filippi's projection
\end{itemize}}\else\fi
Proposition~\ref{prop:program1} shows that a valid $\tilde{\theta}_t$ can be found by solving~\eqref{opt:program_2}, bypassing the need to compute $\theta_t^p$. Essentially, the initial two-steps procedure to find $\tilde{\theta}_t$ (through the intermediary program~\eqref{opt:program_0}) is replaced by a single minimization program augmented with a slack variable $\eta$. The attentive reader may notice that~\eqref{opt:program_2} is now similar to~\eqref{eq:filippi_proj}, the projection step employed in~\citet{filippi2010parametric}. As a result, \ouralgo{} is comparable to the original algorithm \texttt{GLM-UCB} in terms of computational burden. The proof of Proposition~\ref{prop:program1} is given in Section~\ref{app:equivalent_min_proof} in the appendix.

\subsection{Online estimation of the variation-budget}
\label{sec:online_est}
\paragraph{Motivation.}
\ifbulletpoint{
\begin{itemize}
    \item \textcolor{red}{BVD needs to know an upper-bound on the variation-budget : same as before}
    \end{itemize}
} \else\fi

The attentive reader may notice that the minimax-optimality of \ouralgo{} is conditioned on the knowledge of an upper-bound $B_T$ for the true parameter-drift $B_{T,\star}$. Naturally, the tighter this upper-bound, the better the performance. Yet, whether such a knowledge is available in real-life problems is, to say the least, questionable. This issue is not specific to our approach but is shared with all non-stationary parametric bandit methods - see for instance \citep{cheung2019learning, zhao2020simple}.
\ifbulletpoint
{
    \begin{itemize}
        \item \textcolor{red}{In the linear case, and for a (window or restart) strategy, BOB allows to estimate online $B_T$ + regret}
    \end{itemize}
} \else\fi
For linear bandits, previous approaches circumvented this drawback with a Bandit-over-Bandit strategy \cite[Section 7]{cheung2019hedging}, where $B_{T,\star}$ is learned online by a \textit{master} algorithm. This guarantees sub-linear regret
without having the knowledge of $B_{T,\star}$.
\ifbulletpoint{
\begin{itemize}
    \item \textcolor{red}{As hinted earlier, the same arguments can be used to design a window version of BVD. Using BOB on top will probably yield similar guarantees.}
\end{itemize}}\else\fi
We however note that this technique was specialized for linear bandits and for the sliding-window strategy. As hinted in the introduction one could easily design a sliding-window approach of \ouralgo{} (using very similar arguments as the ones displayed in this paper) and extend the Bandit-over-Bandit of \cite{cheung2019hedging} to the GLB framework.
\ifbulletpoint{\begin{itemize}
\item \textcolor{red}{additional contribution: extends the BOB approach for weights in the GLM setting.}
\end{itemize}}\else\fi
Here, we follow a different path and introduce an equivalent method for the exponential-weighting strategy. To the best of our knowledge, this technique was missing in the non-stationary parametric bandit literature. It notably proves that the online learning of $B_{T,\star}$ can be efficiently performed under discounted strategies.
    
\paragraph{Bandit-over-Bandit for discounted strategies.}

\ifbulletpoint{\begin{itemize}
    \item \textcolor{red}{high level idea: we want to estimate $B_T^\star$. cover the interval (0,2TS) on a log-scale, find the related $\gamma$ and play EXP3 with each expert}\end{itemize}}\else\fi

For the sake of simplicity, we describe the Bandit-over-bandit approach adopted when Assumption~\ref{ass:ortho} holds. A similar reasoning holds in general but naturally yields different rates. Notice that naive bounding gives $B_{T,\star}\in(0,2ST]$. The main idea for learning $B_{T,\star}$ online
is to grid on a log-scale the interval $(0,2ST]$ with $N$ values $\{B_{T,j}\}_{j=1}^N$. We then create $N$ instances of \ouralgo{}, each set with a different discount factor: \begin{align*}
    \gamma_j = 1 - \left(\frac{B_{T,j}}{d T} \right)^{2/3} = 1-\frac{2^{j-1}}{2^{5/3}d^{2/3}TS^{2/3}} \; . 
\end{align*} These instances will be our \emph{experts}. We then deploy a \emph{master} algorithm - a version of $\EXP$ \citep{auer2002nonstochastic}, which acts repeatedly as follows: \textbf{1.} it chooses an expert $j$ (\emph{i.e} a new instance of \ouralgo{} with parameter $\gamma_j$) to interact with the environment during a time frame of length $H$ ($H$ is a positive integer). \textbf{2.} The master algorithm then observes the cumulative reward (aggregated on the time frame) of the expert $j$. We give the pseudo-algorithm of this procedure in Algorithm~\ref{alg:meta_informal}.
\renewcommand{\algorithmiccomment}[1]{\hfill\eqparbox{COMMENT}{\# #1}}

\begin{algorithm}
\caption{\ouralgoBOB{} (a more detailed version is deferred to Appendix~\ref{app:bob_algo}).}
\label{alg:meta_informal}
  \begin{algorithmic}
    \STATE{ \bfseries Input.} Length $H$, time
    horizon $T$, regularization $\lambda$, confidence $\delta$, inverse link function $\mu$, 
    constants $S,L$ and $\sigma$.
    \STATE {\bfseries Initialization.}
    Let $N\leftarrow\lceil2\log_2(2ST^{3/2})\rceil$ and $\mathcal{H}\leftarrow \{\gamma_j=1-\frac{2^{j-1}}{2^{5/3}d^{2/3}TS^{2/3}}\}_{j=1}^N $, initialize $\EXP$ with action set indexed by $\mcal{H}$.
    \FOR{$i= 1,\ldots, \lceil T/H\rceil$}
    \STATE{$j\leftarrow$ action selected by $\EXP$}. \\
    \STATE Initialize a sub-routine \ouralgo{} with parameter $\gamma_j$.\\
    \FOR{$t = 1, \dots, H$}
    \STATE{Play with \ouralgo{} with parameter $\gamma_j$, observe reward $r_{t+1}$.}
    \ENDFOR
    \STATE{Update $\EXP$ with reward $\sum_{t=1}^H r_{t+1}$.}
    \ENDFOR
  \end{algorithmic}
\end{algorithm} 

Informally, the idea is that $\EXP$ will learn to select the best performing $\gamma_j$ associated with the best estimate $B_{T,j}$ of $B_{T,\star}$. Intuitively, this should guarantee small regret as $\EXP$ will mostly play instances of \ouralgo{} which nearly capture the true magnitude of the non-stationarity. This intuition is made rigorous in Theorem~\ref{thm:regretmaster}, whose proof is deferred to Section~\ref{app:bob} in the appendix. 
    
\begin{restatable}{thm}{thmregretmaster}\label{thm:regretmaster}
    Under Assumptions~\ref{ass:bounded_decision_set}-\ref{ass:bounded_reward} and \ref{ass:ortho}, 
    the regret of \ouralgoBOB{} when setting 
    $H=\lfloor d \sqrt{T}\rfloor$ 
    satisfies:
    \begin{align*}
        \mathbb{E}[R_T] = \bigO{R_\mu d^{2/3}T^{2/3}\max\left( B_{T,\star},d^{-1/2}T^{1/4}\right)^{1/3}} \;.
    \end{align*}
    \end{restatable}

    \ifbulletpoint{\begin{itemize}
    \item \textcolor{red}{regret guarantee: same as BOB}\end{itemize}}\else\fi

Essentially, we obtain a regret bound which is identical to
the ones of the Bandit-over-Bandit algorithms of \cite{cheung2019hedging} and \cite{zhao2020simple}. The conclusions are therefore of similar nature: namely, when $B_{T,\star}  \geq d^{-1/2}T^{1/4}$ we obtain a minimax rate, \emph{without} knowing $B_{T,\star}$. Again, note here the presence of the problem-dependant constant $R_\mu$, inherited from the non-linear reward structure imposed in GLBs.

\section{Proof sketch}
\label{sec:sketch_of_proof}

In this section, we detail the key steps of the proof of Theorem~\ref{thm:regret_bound}. In particular, we shed light on the tension between the learning and tracking aspects of the problem and their role in the choice of the estimator $\tilde{\theta}_t$, through the use of an appropriate projection step. For simplicity we assume that Assumption~\ref{ass:ortho} holds, although the spirit of the proof is almost identical in the general case.

\paragraph{Learning versus tracking.}

\ifbulletpoint
{\color{red}
\begin{itemize}
		\item separation of learning versus tracking. introduction of $\bar\theta_t$ (via minimization of convex function and provide intuitive meaning).
\end{itemize}
}
\else
\fi
A crucial feature of non-stationary GLBs lies in the singular nature of the deviation of $\hat\theta_t$ from $\ts{t}$. This arises from two fundamentally different mechanisms: learning and tracking. We introduce the following estimator, which allows for a clean-cut distinction between the two phenomenons:
		\begin{align}
			\bar\theta_t \defeq \argmin_{\theta\in\mbb{R}^d}\left\{ \sum_{s=1}^{t-1}\gamma^{t-1-s}\left[ b(\langle x_s,\theta\rangle)-\mu\left(\langle x_s, \ts{s} \rangle\right)\langle x_s,\theta\rangle\right]+\frac{\lambda c_\mu}{2}\ltwo{\theta-\ts{t}}^2\right\}.
			\label{eq:def_theta_bar}
		\end{align}
The parameter $\bar\theta_t$ is the minimizer of a strictly convex and coercive function, thus  is well-defined and unique. Intuitively, $\bar\theta_t$ would be the estimator obtained under a perfect (e.g noiseless) observation of the reward\footnote{Note the difference between $\hat{\theta}_t$ and $\bar{\theta}_t$, where the rewards $r_{t+1}$ are replaced by their conditional expected values $\mu\left(\langle x_s, \ts{s} \rangle\right)$}. As a result, the deviation between $\hat\theta_t$ and $\bar\theta_t$ is solely due to the stochastic nature of the problem (\emph{learning}). On the other hand, the deviation between $\bar\theta_t$ and $\ts{t}$ is a consequence of the unpredictable changes of the sequence $\{\ts{s}\}_{s}$ (\emph{tracking}). The introduction of the reference point $\bar{\theta}_t$ allows us to characterize both deviations separately in Lemma~\ref{lemma:confidence_set} and Lemma~\ref{lemma:bound_first_drift}.
\ifbulletpoint
{\color{red}
\begin{itemize}
		\item  construction of confidence set (first technical lemma on concentration: use tools from Russac)
\end{itemize}
}
\else
\fi	
\begin{restatable}{lemm}{lemmaconfidenceset}[Learning]
	Let $\delta\in(0,1]$. With probability at least $1-\delta$:
	\begin{align*}
		\text{ for all } t\geq 1, \quad \bar\theta_t \in \mcal{E}_t^\delta(\hat\theta_t)=\left\{\theta\in\mbb{R}^d \text{ s.t } \left\lVert g_t(\theta) - g_t(\hat\theta_t)\right\rVert_{\mbold{\widetilde{V}_t^{-1}}}\leq \beta_t(\delta) \right\}. 
	\end{align*}
	\label{lemma:confidence_set}
\end{restatable}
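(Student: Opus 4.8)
The goal is to show that with high probability, the noiseless estimator $\bar\theta_t$ lies in the confidence ellipsoid centered at the quasi-MLE $\hat\theta_t$, measured through the map $g_t$ and the weighted design matrix $\mbold{\widetilde{V}_t}$. The plan is to exploit the first-order optimality conditions of the two optimization problems defining $\hat\theta_t$ (Equation~\eqref{eq:thetahatdef}) and $\bar\theta_t$ (Equation~\eqref{eq:def_theta_bar}), and to recognize that the difference $g_t(\bar\theta_t) - g_t(\hat\theta_t)$ collapses to a weighted sum of the centered noise increments.

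\textbf{Step 1: Compute $g_t(\bar\theta_t) - g_t(\hat\theta_t)$ via optimality conditions.} First I would write down the gradients. The stationarity condition for $\hat\theta_t$ gives $\sum_{s=1}^{t-1}\gamma^{t-1-s}\bigl[\mu(\langle x_s,\hat\theta_t\rangle) - r_{s+1}\bigr]x_s + \lambda\cm\hat\theta_t = 0$, i.e.\ $g_t(\hat\theta_t) = \sum_{s=1}^{t-1}\gamma^{t-1-s} r_{s+1} x_s + \lambda\cm\hat\theta_t - \lambda\cm\hat\theta_t$; more cleanly, $g_t(\hat\theta_t) = \sum_s \gamma^{t-1-s} r_{s+1} x_s$ after the $\lambda\cm\theta$ term in $g_t$ cancels the regularizer contribution. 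Differentiating the objective for $\bar\theta_t$ and using the definition of $g_t$ yields $g_t(\bar\theta_t) = \sum_s \gamma^{t-1-s}\mu(\langle x_s,\ts{s}\rangle) x_s + \lambda\cm\ts{t}$. Subtracting, the deterministic $\mu$-terms and the regularizer discrepancy combine so that
\begin{align*}
	g_t(\bar\theta_t) - g_t(\hat\theta_t) = \sum_{s=1}^{t-1}\gamma^{t-1-s}\bigl[\mu(\langle x_s,\ts{s}\rangle) - r_{s+1}\bigr]x_s + \lambda\cm(\ts{t} - \hat\theta_t).
\end{align*}
Because $\mbb{E}[r_{s+1}\mid\mcal{F}_s, x_s] = \mu(\langle x_s,\ts{s}\rangle)$ by Equation~\eqref{eq:reward_def}, the bracketed terms $\varepsilon_{s+1} \defeq r_{s+1} - \mu(\langle x_s,\ts{s}\rangle)$ form a martingale-difference sequence adapted to $\mcal{F}_s$, bounded in $[-2\sigma, 2\sigma]$ by Assumption~\ref{ass:bounded_reward} (hence conditionally $\sigma$-subgaussian after centering).

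\textbf{Step 2: Apply a self-normalized concentration bound for the weighted sum.} The dominant term is $S_t \defeq \sum_s \gamma^{t-1-s}\varepsilon_{s+1} x_s$, a weighted martingale sum. The key obstacle — and the technical heart of the lemma — is that the discount weights $\gamma^{t-1-s}$ mean the natural self-normalizing matrix is $\mbold{\widetilde{V}_t}$ (with weights $\gamma^{2(t-1-s)}$), \emph{not} $\mbold{V_t}$. This is exactly the weighted self-normalized concentration phenomenon established by \citet{russac2019weighted}: I would invoke their martingale/method-of-mixtures argument (or cite it directly) to obtain, with probability at least $1-\delta$ and uniformly in $t$,
\begin{align*}
	\lVert S_t \rVert_{\mbold{\widetilde{V}_t^{-1}}} \leq \sigma\sqrt{2\log(1/\delta) + d\log\Bigl(1 + \tfrac{L^2(1-\gamma^{2t})}{\lambda d(1-\gamma^2)}\Bigr)}.
\end{align*}
The log-determinant term comes from bounding $\det(\mbold{\widetilde{V}_t})/\det(\lambda\mathbf{I_d})$ using $\ltwo{x_s}\leq L$ and the geometric sum $\sum_s \gamma^{2(t-1-s)} = (1-\gamma^{2t})/(1-\gamma^2)$.

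\textbf{Step 3: Handle the regularizer residual and assemble $\beta_t(\delta)$.} It remains to control $\lambda\cm(\ts{t}-\hat\theta_t)$ in the $\mbold{\widetilde{V}_t^{-1}}$ norm. Since $\mbold{\widetilde{V}_t}\succeq\lambda\mathbf{I_d}$, we have $\lVert\lambda\cm(\ts{t}-\hat\theta_t)\rVert_{\mbold{\widetilde{V}_t^{-1}}}\leq \sqrt{\lambda}\cm\ltwo{\ts{t}-\hat\theta_t}$; the clean bound uses $\lVert\lambda\cm\ts{t}\rVert_{\mbold{\widetilde{V}_t^{-1}}}\leq\sqrt{\lambda}\cm\ltwo{\ts{t}}\leq\sqrt{\lambda}\cm S$ by Assumption~\ref{ass:bounded_decision_set}, matching the leading constant in $\beta_t(\delta)$. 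Combining the two pieces through the triangle inequality for $\lVert\cdot\rVert_{\mbold{\widetilde{V}_t^{-1}}}$ gives exactly $\lVert g_t(\bar\theta_t) - g_t(\hat\theta_t)\rVert_{\mbold{\widetilde{V}_t^{-1}}}\leq \beta_t(\delta)$, which is the claim $\bar\theta_t\in\mcal{E}_t^\delta(\hat\theta_t)$. I expect the main subtlety to be the careful bookkeeping of which regularizer term ($\lambda\cm\hat\theta_t$ versus $\lambda\cm\ts{t}$) survives in the difference and how it is absorbed into the $\sqrt{\lambda}\cm S$ constant — the concentration step itself is essentially a citation to the weighted self-normalized bound of \citet{russac2019weighted}.
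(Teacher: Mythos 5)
Your overall strategy is exactly the paper's: write the first-order optimality conditions of \eqref{eq:thetahatdef} and \eqref{eq:def_theta_bar}, reduce $g_t(\bar\theta_t)-g_t(\hat\theta_t)$ to a discounted noise sum plus a regularization residual, control the noise sum with the weighted self-normalized bound of \citet{russac2019weighted} (Proposition 1, with the crucial observation that the normalizing matrix must be $\mbold{\widetilde{V}_t}$ with squared weights), and absorb the residual via $\mbold{\widetilde{V}_t}\succeq\lambda\mathbf{I_d}$. However, there is one concrete error that you flag but never resolve, and as written it is fatal. Your Step 1 display claims
\begin{align*}
g_t(\bar\theta_t)-g_t(\hat\theta_t)=\sum_{s=1}^{t-1}\gamma^{t-1-s}\bigl[\mu(\langle x_s,\ts{s}\rangle)-r_{s+1}\bigr]x_s+\lambda\cm(\ts{t}-\hat\theta_t),
\end{align*}
but this contradicts your own two preceding formulas: since $g_t(\hat\theta_t)=\sum_s\gamma^{t-1-s}r_{s+1}x_s$ (the regularizer gradient $\lambda\cm\hat\theta_t$ is \emph{already absorbed} into $g_t(\hat\theta_t)$, as you correctly noted) and $g_t(\bar\theta_t)=\sum_s\gamma^{t-1-s}\mu(\langle x_s,\ts{s}\rangle)x_s+\lambda\cm\ts{t}$, subtraction leaves only $\lambda\cm\ts{t}$, with no $\hat\theta_t$ term whatsoever. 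This is not pedantic bookkeeping: if the residual really were $\lambda\cm(\ts{t}-\hat\theta_t)$, the best available bound would be $\sqrt{\lambda}\cm\ltwo{\ts{t}-\hat\theta_t}$, and this quantity admits \emph{no} uniform bound of the form $\sqrt{\lambda}\cm S$, precisely because $\hat\theta_t$ can leave $\Theta$ — and can be as much as $B_t$-far from it under drift — which is the central difficulty this whole paper is built around (Section~\ref{subsec:limitations}). So a proof that carries the term $\lambda\cm(\ts{t}-\hat\theta_t)$ into Step 3 does not close; the "clean bound" $\lVert\lambda\cm\ts{t}\rVert_{\mbold{\widetilde{V}_t^{-1}}}\leq\sqrt{\lambda}\cm S$ you invoke is only legitimate once the spurious $-\lambda\cm\hat\theta_t$ is removed, which your Step 1 algebra (done correctly) in fact guarantees.

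A second, minor imprecision: asserting that $\epsilon_{s+1}$ is bounded in $[-2\sigma,2\sigma]$ only yields conditional $2\sigma$-subGaussianity by Hoeffding's lemma. To get the constant $\sigma$ appearing in $\beta_t(\delta)$, use that both $r_{s+1}$ and its conditional mean $\mu(\langle x_s,\ts{s}\rangle)$ lie in $[0,2\sigma]$ (Assumption~\ref{ass:bounded_reward} and Equation~\eqref{eq:reward_def}), so that conditionally on $\tilde{\mcal{F}}_s$ the centered noise lives in an interval of length $2\sigma$, as in the paper's proof.
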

Lemma~\ref{lemma:confidence_set} ensures that with high probability the set $\mcal{E}_t^\delta(\hat\theta_t)$ is a \emph{confidence set} for $\bar\theta_t$. A complete proof of this result is deferred to Section~\ref{app:confidence_set_app} in the supplementary material. 
\ifbulletpoint
{\color{red}
\begin{itemize}
		\item mention that $\bar\theta_t$ can be outside of $\Theta$. \underline{is this really useful for the discussion} 	
		\item characterization of $\ell_2$ norm between $\bar{\theta}_t$ and $\ts{t}$.
\end{itemize}
}
\else
\fi	
\begin{restatable}{lemm}{lemmaboundfirstdrift}[Tracking with orthogonal action sets]
	Let $D\in\mbb{N}^*$. The following holds:
	\begin{align*}
		\lVert g_t(\bar\theta_t) -g_t(\ts{t})\rVert_{\mathbf{V_t^{-2}}} \leq \frac{2\km L^2 S}{\lambda} \frac{\gamma^D}{1-\gamma} + \km\sum_{s=t-D}^{t-1} \ltwo{\ts{s}-\ts{s+1}}.
	\end{align*}
	\label{lemma:bound_first_drift}
\end{restatable}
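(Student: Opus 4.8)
The plan is to start from the first-order optimality condition defining $\bar\theta_t$ in \eqref{eq:def_theta_bar}, turn it into a closed form for the difference $g_t(\bar\theta_t)-g_t(\ts{t})$, and then exploit Assumption~\ref{ass:ortho} to control that difference in the $\mathbf{V_t^{-2}}$-geometry coordinate by coordinate. Since $b' = \mu$, the objective in \eqref{eq:def_theta_bar} is differentiable and strictly convex, so $\bar\theta_t$ is characterized by
\begin{align*}
    \sum_{s=1}^{t-1}\gamma^{t-1-s}\big[\mu(\langle x_s,\bar\theta_t\rangle)-\mu(\langle x_s,\ts{s}\rangle)\big]x_s + \lambda\cm(\bar\theta_t-\ts{t}) = 0 .
\end{align*}
Rearranging and recognizing $g_t(\bar\theta_t)=\sum_{s}\gamma^{t-1-s}\mu(\langle x_s,\bar\theta_t\rangle)x_s+\lambda\cm\bar\theta_t$ on the left, the regularization contributions $\lambda\cm\ts{t}$ cancel with those in $g_t(\ts{t})$, leaving the clean identity
\begin{align*}
    g_t(\bar\theta_t)-g_t(\ts{t}) = \sum_{s=1}^{t-1}\gamma^{t-1-s}\big[\mu(\langle x_s,\ts{s}\rangle)-\mu(\langle x_s,\ts{t}\rangle)\big]x_s .
\end{align*}

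Writing $\lVert v\rVert_{\mathbf{V_t^{-2}}}=\ltwo{\mathbf{V_t^{-1}}v}$ and using the mean-value form $\mu(\langle x_s,\ts{s}\rangle)-\mu(\langle x_s,\ts{t}\rangle)=\bar\mu_s\langle x_s,\ts{s}-\ts{t}\rangle$ with $\bar\mu_s\in[\cm,\km]$ reduces the task to bounding $\ltwo{\mathbf{V_t^{-1}}\sum_s \gamma^{t-1-s}\bar\mu_s\langle x_s,\ts{s}-\ts{t}\rangle x_s}$. I now invoke orthogonality: each $x_s=\alpha_s e_{i_s}$ with $\alpha_s^2\le L^2$, so $\mathbf{V_t}$ is diagonal in the basis $\{e_i\}$ with entries $\lambda+\sum_{s:\,i_s=i}\gamma^{t-1-s}\alpha_s^2$, and $\langle x_s,\ts{s}-\ts{t}\rangle=\alpha_s(\ts{s}-\ts{t})_{i_s}$. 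The component along $e_i$ of $\mathbf{V_t^{-1}}(g_t(\bar\theta_t)-g_t(\ts{t}))$ is then a weighted average of the per-coordinate drifts,
\begin{align*}
    \big(\mathbf{V_t^{-1}}(g_t(\bar\theta_t)-g_t(\ts{t}))\big)_i = \frac{\sum_{s:\,i_s=i}\gamma^{t-1-s}\bar\mu_s\,\alpha_s^2\,(\ts{s}-\ts{t})_i}{\lambda+\sum_{s:\,i_s=i}\gamma^{t-1-s}\alpha_s^2} .
\end{align*}

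Next I split each coordinate sum into a \emph{recent} block $s\ge t-D$ and a \emph{far} block $s<t-D$, and bound the two resulting vectors separately, using the triangle inequality (Minkowski) over coordinates. For the recent block I telescope $(\ts{s}-\ts{t})_i=\sum_{\tau=s}^{t-1}(\ts{\tau}-\ts{\tau+1})_i$, bound $\bar\mu_s\le\km$, and observe that the numerator factor $\sum_{s\ge t-D,\,i_s=i}\gamma^{t-1-s}\alpha_s^2$ is dominated by the denominator; this yields the per-coordinate bound $\km\sum_{\tau=t-D}^{t-1}|(\ts{\tau}-\ts{\tau+1})_i|$, and $\sqrt{\sum_i(\sum_\tau|\cdot|)^2}\le\sum_\tau\sqrt{\sum_i|\cdot|^2}=\sum_\tau\ltwo{\ts{\tau}-\ts{\tau+1}}$ gives the second term of the claimed bound. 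For the far block I use $\gamma^{t-1-s}\le\gamma^D$, $|(\ts{s}-\ts{t})_i|\le\ltwo{\ts{s}-\ts{t}}\le 2S$, $\alpha_s^2\le L^2$, $\bar\mu_s\le\km$, and a denominator lower bound of $\lambda$, so coordinate $i$ contributes at most $\tfrac{2\km L^2 S}{\lambda}\sum_{s<t-D,\,i_s=i}\gamma^{t-1-s}$; since $\ell_2\le\ell_1$ over coordinates and $\sum_{s<t-D}\gamma^{t-1-s}\le\sum_{k\ge D}\gamma^k=\gamma^D/(1-\gamma)$, this produces the first term.

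The main obstacle is obtaining the dimension-free second term $\km\sum_{s=t-D}^{t-1}\ltwo{\ts{s}-\ts{s+1}}$ rather than a bound inflated by $\sqrt{d}$. This is exactly where two ingredients are essential: (i) the diagonality of $\mathbf{V_t}$ granted by Assumption~\ref{ass:ortho}, which lets the $\alpha_s^2$-mass in each numerator cancel against the same mass in the denominator coordinate by coordinate (a naive coordinate-wise maximization would lose a factor $\sqrt{d}$), and (ii) Minkowski's inequality applied to the double sum over time $\tau$ and coordinates $i$, which aggregates the coordinate telescopes back into the full $\ell_2$ norms $\ltwo{\ts{\tau}-\ts{\tau+1}}$. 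The collapse of the far block to the geometric tail $\gamma^D/(1-\gamma)$ likewise relies on passing from the $\ell_2$ to the $\ell_1$ norm over coordinates, which is lossless here precisely because the far-block weights $\sum_{s<t-D,\,i_s=i}\gamma^{t-1-s}$ sum across $i$ to $\sum_{s<t-D}\gamma^{t-1-s}$.
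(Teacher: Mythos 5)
Your proof is correct and follows essentially the same route as the paper's: the same optimality-condition identity for $g_t(\bar\theta_t)-g_t(\ts{t})$, the same mean-value linearization with slopes in $[\cm,\km]$, the same split at $t-D$ with a geometric-tail bound for the old block, and the same telescoping of the drift in the recent block. Your per-coordinate weighted-average bound (numerator mass dominated by the corresponding diagonal entry of $\mathbf{V_t}$) is exactly the paper's operator-norm bound $\lVert \mathbf{M_t}\rVert \leq \km$ written in the eigenbasis — under Assumption~\ref{ass:ortho} the matrix $\mathbf{M_t}$ is diagonal — and your Minkowski inequality over $\tau$ is the paper's triangle inequality over $p$.
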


Lemma~\ref{lemma:bound_first_drift} effectively links the deviation of $\bar\theta_t$ from $\ts{t}$ to the variation-budget $B_T$ through the drift $\sum_{s=t-D}^{t-1} \ltwo{\ts{s}-\ts{s+1}}$. The proof of this result borrows tools from \cite{russac2019weighted} and is deferred to Section~\ref{sec:prooflemma:bound_first_drift} in the supplementary material. The integer $D$ appearing in Lemma~\ref{lemma:bound_first_drift} is introduced for the sake of the analysis only. It allows to treat separately old and recent observations. We  provide its optimal value later in this section. 

\ifbulletpoint
{\color{red}
\begin{itemize}
		\item \underline{add a remark} to emphasize that a projection step as in fillipi was not possible because those terms are controlled under different norms
\end{itemize}
}
\else
\fi	

\begin{rem}
	Behind the statement  of Lemma~\ref{lemma:confidence_set} and Lemma~\ref{lemma:bound_first_drift} hides the main reason why the projection step of \citet{filippi2010parametric} needs to be generalized. Indeed, it appears that the deviations $(\hat\theta_t \leftrightarrow \bar\theta_t)$ and $(\bar\theta_t\leftrightarrow \ts{t})$ are controlled through different metrics ($\mathbf{\widetilde{V}^{-1}_t}$ and $\mathbf{V^{-2}_t}$, respectively). Projecting according to the first metric would corrupt the control of the second deviation, and conversely.
\end{rem}

\paragraph{Regret decomposition and prediction error.}

To bound the instantaneous regret at round $t$, we rely on the prediction error $\Delta_t$ defined as follows for any arm $x\in\mathcal{X}_t$:
\begin{align*}
		\Delta_t(x) \defeq \left\vert \mu\left(\langle x,\tilde\theta_t\rangle\right)-\mu\Big(\langle x,\ts{t}\rangle\Big)\right\vert.
\end{align*}

\ifbulletpoint
{\color{red}
\begin{itemize}
		\item regret decomposition with delta-pred for $\tilde{\theta}_t$
\end{itemize}
}
\else
\fi	
The next Lemma ties the cumulative pseudo-regret to the sum of prediction errors. This derivation is classical and the proof is deferred to Section~\ref{app:regret_decomposition} in the supplementary material. 
		\begin{restatable}{lemm}{lemmaregretdecomposition}
		\label{lemma:regret_decomposition}
			The following holds:
			\ifappendix
			\begin{align*}
				R_T \leq \frac{2\km}{\cm}\sum_{t=1}^{T} \beta_t(\delta)\left[\lVert x_t\rVert_{\mathbf{V_t^{-1}}}-\lVert x_\star^t\rVert_{\mathbf{V_t^{-1}}}\right] +\sum_{t=1}^{T}\left[\Delta_t(x_t)+\Delta_t(x_\star^t)\right].
			\end{align*}
			\else
			\begin{align*}
				R_T \leq 2\Rm\sum_{t=1}^{T} \beta_t(\delta)\left[\lVert x_t\rVert_{\mathbf{V_t^{-1}}}-\lVert x_\star^t\rVert_{\mathbf{V_t^{-1}}}\right] +\sum_{t=1}^{T}\left[\Delta_t(x_t)+\Delta_t(x_\star^t)\right].
			\end{align*}
			\fi
		\end{restatable}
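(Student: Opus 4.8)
The plan is to establish the bound pathwise (it requires no high-probability event) by controlling the instantaneous regret $r_t \defeq \mu(\langle x_\star^t,\ts{t}\rangle)-\mu(\langle x_t,\ts{t}\rangle)$ at each round $t$ and then summing. The whole argument rests on two ingredients: the definition of the prediction error $\Delta_t$, which lets us swap the unknown parameter $\ts{t}$ for the plug-in estimate $\tilde\theta_t$ at the cost of an additive $\Delta_t$ term, and the optimistic selection rule $x_t\in\argmax_{x\in\mcal{X}_t}\mu(\langle x,\tilde\theta_t\rangle)+b_t(x)$ with $b_t(x)=2\Rm\beta_t(\delta)\lVert x\rVert_{\mathbf{V_t^{-1}}}$.

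Concretely, I would proceed in four short steps. First, since $\mu(\langle x_\star^t,\ts{t}\rangle)-\mu(\langle x_\star^t,\tilde\theta_t\rangle)\leq\Delta_t(x_\star^t)$ by definition of $\Delta_t$, I replace the optimal reward by its prediction under $\tilde\theta_t$, obtaining $\mu(\langle x_\star^t,\ts{t}\rangle)\leq \mu(\langle x_\star^t,\tilde\theta_t\rangle)+\Delta_t(x_\star^t)$. Second, I invoke optimism: because $x_t$ maximises the boosted index, $\mu(\langle x_\star^t,\tilde\theta_t\rangle)+b_t(x_\star^t)\leq \mu(\langle x_t,\tilde\theta_t\rangle)+b_t(x_t)$, hence $\mu(\langle x_\star^t,\tilde\theta_t\rangle)\leq \mu(\langle x_t,\tilde\theta_t\rangle)+b_t(x_t)-b_t(x_\star^t)$. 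Third, I convert the prediction at $x_t$ back to the ground truth, again using $\mu(\langle x_t,\tilde\theta_t\rangle)-\mu(\langle x_t,\ts{t}\rangle)\leq\Delta_t(x_t)$. Chaining these three inequalities yields the per-round bound
\begin{align*}
    r_t \leq \Delta_t(x_\star^t)+\Delta_t(x_t)+b_t(x_t)-b_t(x_\star^t).
\end{align*}
Fourth, I sum over $t=1,\dots,T$ and substitute $b_t(x)=2\Rm\beta_t(\delta)\lVert x\rVert_{\mathbf{V_t^{-1}}}$, which regroups the bonus contributions into $2\Rm\sum_t\beta_t(\delta)\bigl[\lVert x_t\rVert_{\mathbf{V_t^{-1}}}-\lVert x_\star^t\rVert_{\mathbf{V_t^{-1}}}\bigr]$ and reproduces exactly the claimed inequality.

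There is no genuine obstacle here: the derivation follows the classical optimism template, and each displayed inequality is immediate from either the definition of $\Delta_t$ (an absolute value, so it dominates the signed difference in both directions) or the arm-selection rule. The only points that need care are bookkeeping ones — keeping the correct signs when dropping the absolute values (the $x_\star^t$ term is bounded from above, the $x_t$ term likewise) and checking that the negative $-b_t(x_\star^t)$ produced by optimism lines up with the $-\lVert x_\star^t\rVert_{\mathbf{V_t^{-1}}}$ appearing in the statement. Notably, this lemma is purely structural: all the probabilistic content (controlling $\Delta_t$ through the learning guarantee of Lemma~\ref{lemma:confidence_set} and the tracking guarantee of Lemma~\ref{lemma:bound_first_drift}) is deferred to the subsequent analysis, so no event of probability $1-\delta$ is invoked at this stage.
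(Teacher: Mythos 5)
Your proof is correct and follows essentially the same route as the paper's: the paper telescopes $R_T$ by adding and subtracting $\mu(\langle x_\star^t,\tilde\theta_t\rangle)$ and $\mu(\langle x_t,\tilde\theta_t\rangle)$, then bounds the middle term via the arm-selection (optimism) inequality and the two outer terms by $\Delta_t(x_\star^t)$ and $\Delta_t(x_t)$ — exactly the three inequalities you chain, merely presented as a decomposition rather than a chain. Your observation that the lemma is purely pathwise, with no high-probability event needed, also matches the paper's proof.
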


\ifbulletpoint
{\color{red}
\begin{itemize}
		\item decompose delta pred, packing terms together
\end{itemize}
}
\else
\fi	
Thanks to Lemma~\ref{lemma:regret_decomposition} we are left to characterize the prediction error $\Delta_t(x)$ for any $x\in\mathcal{X}_t$. Following \citet {filippi2010parametric}, we rely on the mean-value theorem to ensure that it exists $\mathring{\theta}_t\in[\tilde\theta_t,\ts{t}]$   such that\footnote{Formally,   $\mathring{\theta}_t\in[\tilde\theta_t,\ts{t}]$ means that there
exists $v\in[0,1]$ such that $\mathring{\theta}_t = v \tilde\theta_t + (1-v) \ts{t}$.}:
		\begin{align}
				\Delta_t(x) \leq \km \left\langle x, \mathbf{H_t}(\mathring{\theta}_t) \left(g_t(\tilde\theta_t)-g_t(\ts{t})\right)\right\rangle,
			\label{eq:mvt_deltapred}
			\end{align}
			where $\mathbf{H_t(\theta)}\defeq \sum_{s=1}^{t-1}\dot\mu(\langle x_s,\theta \rangle)x_sx_s^\transp+\lambda\cm\mathbf{I_d}$. Since $\tilde\theta_t, \ts{t}\in\Theta$, we obtain by convexity that $\mathring{\theta}_t \in \Theta$ and we can use the lower bound $\mathbf{H_t}(\mathring{\theta}_t)\succeq \cm\mathbf{V_t}$. 	
			\begin{rem}
				In this last inequality resides the mistake that was made in previous extension of \citet{filippi2010parametric} to the non-stationary setting \citep{cheung2019hedging,zhao2020simple}. Indeed, if the prediction error is measured at $\hat\theta_t$, we are left with $\mathring{\theta}_t\in[\ts{t},\hat\theta_t]$, and $\mathring\theta_t$ can lie outside of the admissible set $\Theta$ (since $\hat\theta_t$ can). The lower-bound linking $\mathbf{H_t}(\mathring{\theta}_t)$  and $\mathbf{V_t}$ would therefore not hold. More precisely, and as detailed in Section~\ref{subsec:limitations}, when $\mathring\theta_t\in[\ts{t},\hat\theta_t]$ not much can be said on the link between $\mbold{H_t}(\mathring\theta_t)$ and $\mbold{V_t}$ without severely degrading the final regret guarantees.
			\end{rem}
			Adding and removing $g_t(\hat\theta_t)+g_t(\theta_t^p)+g_t(\bar\theta_t)$ inside the inner-product in Equation~\eqref{eq:mvt_deltapred}, followed by easy manipulations yields:
		\begin{align*}
				\Delta_t(x) \leq &\underbrace{\Rm\left\lVert  x\right\rVert_{\mathbf{V^{-1}_t}}\left(\left\lVert g_t(\tilde\theta_t) -g_t(\theta_t^p)\right\rVert_{\mathbf{\widetilde{V}^{-1}_t}} + \left\lVert g_t(\bar\theta_t) -g_t(\hat\theta_t)\right\rVert_{\mathbf{\widetilde{V}^{-1}_t}}\right)}_{\defeq \Delta_t^{\text{learn}}(x)} \\ 
				&+ \underbrace{\Rm\ltwo{x}\left(\left\lVert g_t(\theta_t^p) -g_t(\hat\theta_t)\right\rVert_{\mathbf{V^{-2}_t}} + \left\lVert g_t(\bar\theta_t) -g_t(\ts{t})\right\rVert_{\mathbf{V^{-2}_t}}\right)}_{^{\defeq \Delta_t^{\text{track}}(x)}}.
		\end{align*}

\paragraph{Leveraging the projection step}
\ifbulletpoint
{\color{red}
\begin{itemize}
		\item first bound on the learning
\end{itemize}
}
\else
\fi	
We can now bound the terms $\Delta_t^{\text{learn}}(x)$ and $ \Delta_t^{\text{track}}(x)$ separately. Lemma~\ref{lemma:confidence_set} along with the design $\tilde\theta_t\in\mcal{E}_t^\delta(\theta_t^p)$ leads to:
\begin{align}
		\Delta_t^{\text{learn}}(x) \leq 2\Rm\left\lVert  x\right\rVert_{\mathbf{V^{-1}_t}}\beta_t(\delta) \quad \text{w.h.p}
	    \label{eq:main_delta_pred}
\end{align}
\ifbulletpoint
{\color{red}
\begin{itemize}
		\item  second bound on tracking. technical lemma: prove tracking relationship between $\theta_t^p$ and $\hat{\theta}_t$
\end{itemize}}
\else
\fi
The first term in $\Delta_t^{\text{track}}(x)$ is kept under control by the specific design of the projection step \eqref{opt:program_0}. This is formalized in the following Lemma, whose
proof is deferred to Section~\ref{sec:proof_lemma:bound_second_drift} in the appendix.
\begin{restatable}{lemm}{lemmaboundseconddrift}\label{lemma:bound_second_drift}
			Under the event $\{\bar\theta_t\in\mcal{E}_t^\delta(\hat\theta_t)\}$ the following holds:
			\begin{align*}
				\lVert g_t(\theta_t^p) -g_t(\hat\theta_t)\rVert_{\mathbf{V_t^{-2}}} \leq \lVert g_t(\bar\theta_t) -g_t(\ts{t})\rVert_{\mathbf{V_t^{-2}}}\;.
			\end{align*}
\end{restatable}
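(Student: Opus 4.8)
The plan is to exploit the fact that $\theta_t^p$ is, by construction, a minimizer of the objective $\theta \mapsto \lVert g_t(\theta)-g_t(\hat\theta_t)\rVert_{\mathbf{V_t^{-2}}}$ over the feasible set $\{\theta : \Theta\cap\mcal{E}_t^\delta(\theta)\neq\emptyset\}$ of program~\eqref{opt:program_0}. Consequently, to establish the claimed bound it suffices to exhibit a \emph{single feasible} parameter $\theta_c$ whose objective value is at most $\lVert g_t(\bar\theta_t)-g_t(\ts{t})\rVert_{\mathbf{V_t^{-2}}}$: minimality of $\theta_t^p$ then transfers this bound. The first observation I would make is that $g_t$ is a bijection of $\mbb{R}^d$ onto itself. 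Indeed, $g_t$ is the gradient of $\theta\mapsto \sum_{s=1}^{t-1}\gamma^{t-1-s}b(\langle x_s,\theta\rangle)+\tfrac{\lambda\cm}{2}\ltwo{\theta}^2$, which is continuously differentiable, strictly convex, and coercive; injectivity of the gradient follows from strict convexity, and surjectivity from coercivity (for any target $y$, the function $\theta\mapsto G_t(\theta)-\langle y,\theta\rangle$ is coercive and attains a minimizer where $g_t=y$). This lets me freely prescribe the value of $g_t(\theta_c)$.

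I would then define the candidate $\theta_c$ through the identity $g_t(\theta_c) = g_t(\hat\theta_t)+g_t(\ts{t})-g_t(\bar\theta_t)$, which is well-posed by the bijectivity just noted. This translation in $g_t$-image space is engineered so that the two quantities of interest decouple across the two metrics. On the one hand,
\begin{align*}
g_t(\theta_c)-g_t(\hat\theta_t) = g_t(\ts{t})-g_t(\bar\theta_t),
\end{align*}
so that the objective of~\eqref{opt:program_0} at $\theta_c$ equals $\lVert g_t(\ts{t})-g_t(\bar\theta_t)\rVert_{\mathbf{V_t^{-2}}} = \lVert g_t(\bar\theta_t)-g_t(\ts{t})\rVert_{\mathbf{V_t^{-2}}}$, which is exactly the right-hand side of the Lemma. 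On the other hand,
\begin{align*}
g_t(\ts{t})-g_t(\theta_c) = g_t(\bar\theta_t)-g_t(\hat\theta_t),
\end{align*}
an identity I will read in the $\mbold{\widetilde{V}_t^{-1}}$ metric to certify feasibility.

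It remains to check that $\theta_c$ is feasible, i.e. $\Theta\cap\mcal{E}_t^\delta(\theta_c)\neq\emptyset$. I would verify that $\ts{t}$ itself lies in this intersection: membership $\ts{t}\in\Theta$ follows from Assumption~\ref{ass:bounded_decision_set}, while the second identity above gives, by the definition~\eqref{eq:generic.conf.set} of $\mcal{E}_t^\delta$, that $\lVert g_t(\ts{t})-g_t(\theta_c)\rVert_{\mbold{\widetilde{V}_t^{-1}}} = \lVert g_t(\bar\theta_t)-g_t(\hat\theta_t)\rVert_{\mbold{\widetilde{V}_t^{-1}}} \leq \beta_t(\delta)$, where the last inequality is precisely the assumed event $\{\bar\theta_t\in\mcal{E}_t^\delta(\hat\theta_t)\}$. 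Hence $\ts{t}\in\Theta\cap\mcal{E}_t^\delta(\theta_c)$, so $\theta_c$ is feasible, and minimality of $\theta_t^p$ yields $\lVert g_t(\theta_t^p)-g_t(\hat\theta_t)\rVert_{\mathbf{V_t^{-2}}}\leq\lVert g_t(\theta_c)-g_t(\hat\theta_t)\rVert_{\mathbf{V_t^{-2}}}=\lVert g_t(\bar\theta_t)-g_t(\ts{t})\rVert_{\mathbf{V_t^{-2}}}$, as desired.

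The conceptual crux — and the only genuinely nontrivial step — is guessing the right candidate $\theta_c$: the translation is chosen so that the \emph{learning} deviation $g_t(\bar\theta_t)-g_t(\hat\theta_t)$, controlled in $\mbold{\widetilde{V}_t^{-1}}$ by the confidence event, certifies feasibility, while simultaneously the \emph{tracking} deviation $g_t(\bar\theta_t)-g_t(\ts{t})$, measured in $\mathbf{V_t^{-2}}$, reproduces the target objective value. This is exactly the mechanism that the generalized projection was designed to exploit, and it crucially relies on the constraint and the objective living in two distinct metrics. Everything else is bookkeeping, contingent only on the invertibility of $g_t$ justified above.
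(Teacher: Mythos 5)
Your proof is correct and takes essentially the same route as the paper's: both arguments hinge on the comparator $\theta_c$ defined through the identity $g_t(\theta_c)=g_t(\hat\theta_t)+g_t(\ts{t})-g_t(\bar\theta_t)$, certify its feasibility for \eqref{opt:program_0} by showing $\ts{t}\in\Theta\cap\mcal{E}_t^\delta(\theta_c)$ under the event $\{\bar\theta_t\in\mcal{E}_t^\delta(\hat\theta_t)\}$, and conclude by optimality of $\theta_t^p$. The only cosmetic differences are that the paper realizes $\theta_c$ explicitly as the quasi-MLE associated with pseudo-rewards $\tilde r_{s+1}=\mu(\langle x_s,\ts{t}\rangle)+\epsilon_{s+1}$ (rather than invoking the bijectivity of $g_t$, which the paper proves separately and which you justify correctly), and it phrases the comparison as a proof by contradiction rather than as a direct consequence of minimality.
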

As a result, bounding $\Delta_t^{\text{track}}(x)$ reduces to bounding $\lVert g_t(\bar\theta_t)-g_t(\ts{t})\rVert_{\mathbf{V^{-2}_t}}$. Combined with Lemma~\ref{lemma:bound_first_drift}, this result states that the deviation between $\theta_t^p$ and $\hat\theta_t$ is characterized by $B_t$, the parameter-drift up to round $t$, as illustrated in Figure~\ref{fig:lemmaseconddriftillustration}. This leads to:
	\begin{align}
	\label{eq:main.delta.track}
		\Delta_t^{\text{track}}(x) \leq 2\Rm\left\lVert  x\right\rVert_2 \left(  \frac{2\km L^2 S}{\lambda} \frac{\gamma^D}{1-\gamma} + \km\sum_{s=t-D}^{t-1} \ltwo{\ts{s}-\ts{s+1}} \right) \quad \text{w.h.p}
	\end{align}

\paragraph{Putting everything together.} 
\ifbulletpoint
{\color{red}
\begin{itemize}
		\item final bound prediction error, plug in regret
\end{itemize}
}
\else
\fi

Combining Equations~\eqref{eq:main_delta_pred} and \eqref{eq:main.delta.track} with Lemma~\ref{lemma:regret_decomposition} and the Elliptical Lemma (Lemma~\ref{lemma:ellipticalpotential} in the supplementary material) yields:
		\begin{align*}
			R_T \leq C_1\Rm dT\log(1/\gamma) + C_2\Rm\gamma^DT/(1-\gamma) + C_3\Rm DB_T\qquad \text{w.h.p}
		\end{align*}
where the constants $C_{1}$, $C_{2}$ and $C_{3}$ hide $\log(T)$ multiplicative dependencies. A detailed proof of this result is deferred to Section~\ref{app:complete_regret_bound} in the supplementary material. Setting the hyper-parameters $D=\log(T)/(1-\gamma)$ and $\gamma=1-(\frac{B_T}{dT})^{2/3}$ concludes the proof of Theorem~\ref{thm:regret_bound}.

\section{Experiments}
\label{sec:exps}
We illustrate in Figure~\ref{fig:exps} the behavior and performance of \ouralgo{} with numerical simulations in a two-dimensional non-stationary logistic environment. Formally, we let $r_{t+1}\sim \text{Bernoulli}(\mu(\langle x_t, \ts{t}\rangle))$ where $\mu(z)=(1+e^{-z})^{-1}$ is the logistic function. The sequence $\{\ts{t} \}_{t \geq 1}$ evolves as follows: we let $\ts{t}= (0,1)$ for $t\in[1,T/3]$. Between $t=T/3$ and $t=2T/3$ we smoothly rotate $\ts{t}$ from $(0,1)$ to $(1,0)$. Finally  we let $\ts{t}= (0,1)$ for $t\in[2T/3,T]$. A thorough description of the experimental setting can be found in Appendix~\ref{app:exps}. We compare in Figure~\ref{fig:regrets} the four following algorithms: \texttt{OFUL} \citep{abbasi2011improved} (stationary, here mispecified), \texttt{GLM-UCB} \citep{filippi2010parametric} (stationary, here well-specified), \texttt{D-LinUCB} \citep{russac2019weighted} (an exponentially weighted LB algorithm, non-stationary but here mispecified) and \ouralgo{} (non-stationary, well-specified). For \texttt{D-LinUCB} and \ouralgo{} we use the value of $\gamma$ recommended by the asymptotic analysis. This figure highlights the necessity to employ algorithms that are well-specified; both \texttt{GLM-UCB} and \ouralgo{} outperform their linear counterparts (\texttt{OFUL} and \texttt{D-LinUCB}, respectively). Note that an appropriate treatment of non-stationarity is also crucial to obtain small regret as for the considered horizon the two best performing algorithms are \texttt{D-LinUCB} and \ouralgo. The latter being well-specified and resilient to non-stationary, it naturally performs best. In Figure~\ref{fig:projection} we highlight the fact that the projection step is necessary as, in this non-stationary setting, $\hat\theta_t$ regularly leaves the admissible set $\Theta$.

\begin{figure}[t]
\floatconts
{fig:exps}
{\caption{Numerical simulations in a non-stationary logistic setting. For the first figure, results are average over 50 independent runs and shaded areas represent one standard-deviation variation.}}
{
\subfigure[Regret bounds of different stochastic bandit algorithms under parameter-drift. The grey region indicates a smooth drift of $\ts{t}$.]{%
\centering
{\includegraphics[width=0.45\linewidth]{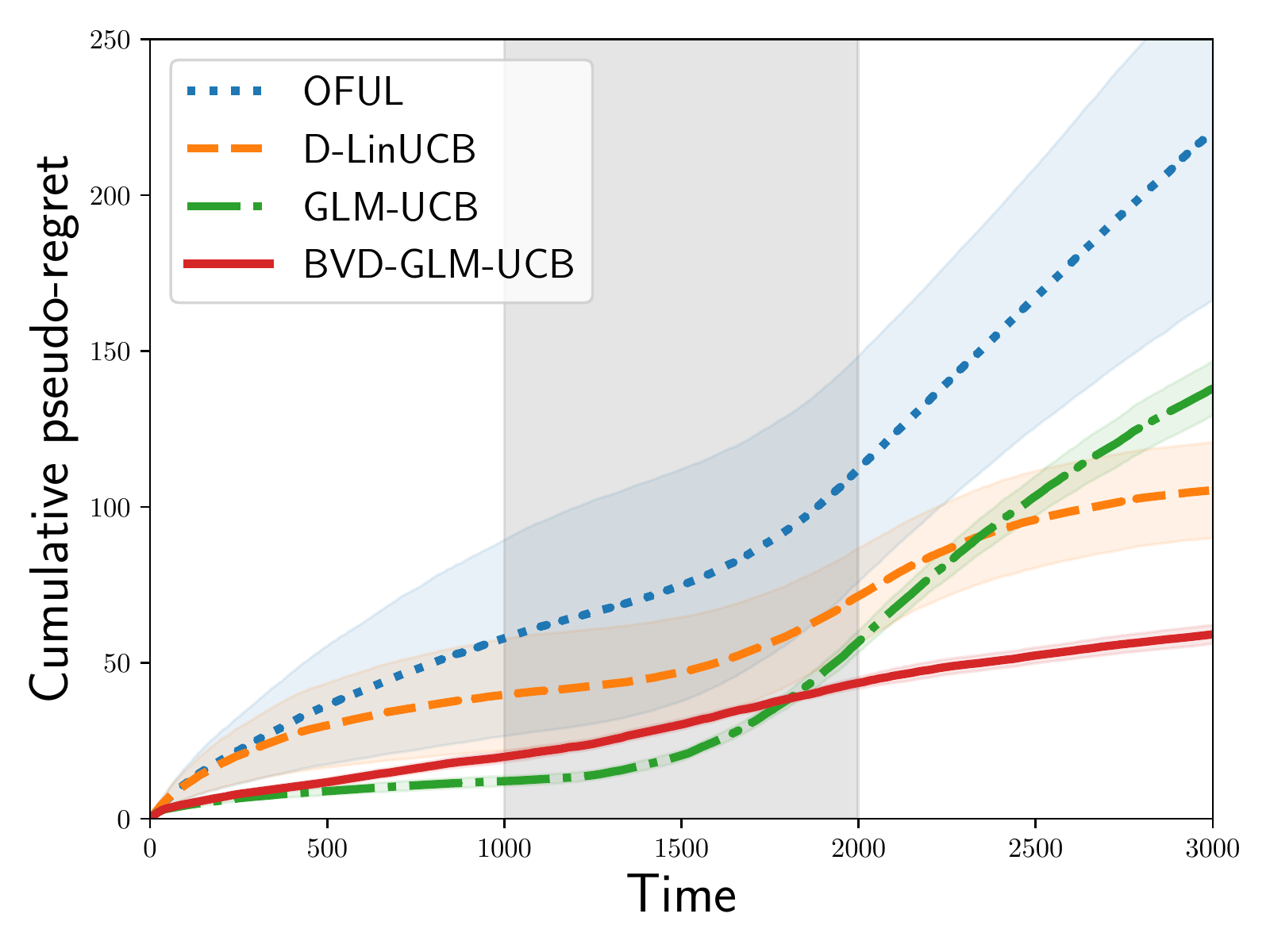}}
\label{fig:regrets}
}\quad
\subfigure[Evolution of the parameters of interest ($\ts{t},\hat\theta_t,\tilde\theta_t$) for \ouralgo. Note that in this non-stationary setting $\hat\theta_t\notin\Theta$ is frequent.]{%
\centering
{\includegraphics[width=0.45\linewidth]{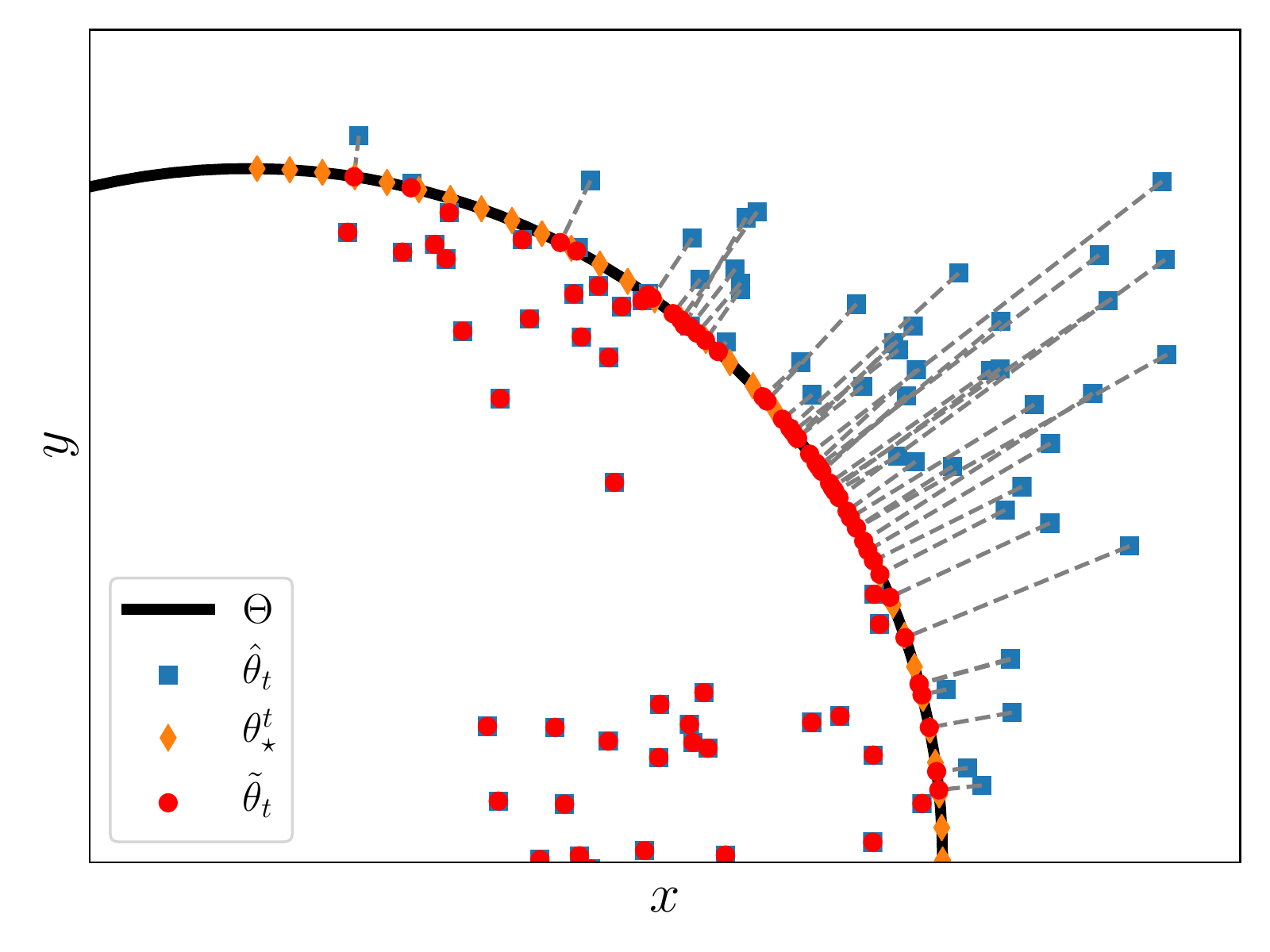}}
\label{fig:projection}
}
}
\end{figure}

\section*{Conclusion and future work}
We highlight in this paper a central difficulty in the theoretical treatment of non-stationary GLBs, overlooked in existing approaches and intimately linked to the non-linear nature of the reward function. To overcome this difficulty, we introduce a generalization of the projection step from \citep{filippi2010parametric}, which allows to simultaneously \emph{track} the non-stationary ground-truth while preserving the \emph{learning} guarantees of weighted maximum-likelihood strategies. This novel algorithmic design along with a careful analysis proves that an order-optimal (w.r.t $d$, $T$ and $B_T$) regret-bound can be achieved for GLBs under parameter-drift, although up to a rather restrictive assumption on the arm set's geometry. The nature of the minimax-rates in the general case is open, as in both LB \citep[\emph{c.f}][]{touati2020efficient} and GLB setting (this work) we observe a mismatch between existing upper-bounds and the lower-bound of \citep{cheung2019learning}. 

We underlined in Section~\ref{subsec:limitations} the problematic scaling of the problem-dependent constant $R_\mu$. Consequent research efforts have recently been deployed to reduce its impact on regret-bounds, both in the stationary \citep{faury2020improved,abeille2020instance,jun2020improved} and piece-wise stationary \citep{russac2020self} settings. What is the optimal dependency w.r.t $R_\mu$ in the more general parameter-drift setting, and how it can be achieved are exciting open questions that we here leave for future work.

\paragraph{Acknowledgements.} LF thanks Olivier Fercoq for his helpful observations regarding the derivation of the simplified projection step. YR thanks Arnaud Buisson and Jianjun Yuan for fruitful discussions on the BOB framework and its extension using discount factors.

\clearpage
\bibliography{bib}

\newpage
\appendix
\appendixtrue

\section*{Organization of the appendix}
The appendix is organized as follows:
\begin{itemize}
	\item In Section~\ref{app:concentration} we provide some concentration results, along with a bound on the prediction error $\Delta_t$ inherited from the design of the projection step.
	\item In Section~\ref{app:regret_bound} we link the prediction error $\Delta_t$ to the regret $R_T$ of \ouralgo{}. We then proceed to prove the bound on $R_T$ announced in Theorem~\ref{thm:regret_bound}.
	\item In Section~\ref{app:equivalent_min_proof} we provide a proof for the equivalence of the optimization programs \eqref{opt:program_0} (along with the computation of $\tilde\theta_t$) and \eqref{opt:program_2}.
	\item Section~\ref{app:useful} contains some secondary lemmas needed for the analysis, such as a version of the Elliptical Lemma for weighted matrices.
	\item In Section~\ref{app:bob} we provide a proof for the regret upper-bound of \texttt{BOB-BVD-GLM-UCB} claimed in Theorem~\ref{thm:regretmaster}.
	\item Finally, in Section~\ref{app:exps} we provide some details on our numerical simulations.
\end{itemize}

\section{Concentration and predictions bound}
\label{app:concentration}

\subsection{Confidence sets}
\lemmaconfidenceset*
\label{app:confidence_set_app}
\begin{proof}
		Recall that:
		\begin{align*}	
			\mcal{E}_t^\delta(\hat\theta_t) = \left\{\theta\in\mbb{R}^d \text{ s.t } \lVert g_t(\theta) - g_t(\hat\theta_t)\rVert_{\mbold{\widetilde{V}_t^{-1}}}\leq \beta_t(\delta) \right\} \;,
		\end{align*}
		where
		\begin{align*}
		\beta_t(\delta) = \sqrt{\lambda}\cm S + \sigma\sqrt{2\log(1/\delta)+d\log\left(1+\frac{L^2(1-\gamma^{2t})}{\lambda d(1-\gamma^2)}\right)}\;.
		\end{align*} 
		Also, from the definition of $\bar\theta_t$ in Equation~\eqref{eq:def_theta_bar}, by setting to 0 the differential of the convex objective minimized by $\bar\theta_t$ we obtain that:
		\begin{align}
			g_t(\bar\theta_t) = \sum_{s=1}^{t-1} \gamma^{t-1-s}\mu\left(\langle \ts{s},x_s\rangle\right)x_s + \lambda\cm\ts{t}\; .
			\label{eq:theta_bar_gt}
		\end{align}
		Further, for all $s\geq 1$, define 
		\begin{align}
			\epsilon_{s+1} = r_{s+1}-\mu(\langle \ts{s}, x_s\rangle)\;.
			\label{eq:eta_def}
		\end{align}
		Let $\tilde{\mcal{F}}_s=\sigma(x_1, r_2, .., x_{s-1},r_s,x_s)$, which compared to $\mcal{F}_s$ includes the arm $x_s$. Note that:
		\begin{equation*}
		\left\{
		\begin{aligned}
			\mbb{E}&\left[ \epsilon_{s+1}\middle| \tilde{\mcal{F}}_s \right] = 0   &\text{(Equation~\eqref{eq:reward_def})}\\
			-\mu(\langle \ts{s}, x_s\rangle) \leq &\epsilon_{s+1} \leq 2\sigma+\mu(\langle \ts{s}, x_s\rangle) \quad \text{a.s}  &\text{(Assumption~\ref{ass:bounded_reward})}
		\end{aligned}\right.
		\end{equation*}
		Therefore $\epsilon_{s+1}$ is $\sigma$-subGaussian conditionally on $\tilde{\mcal{F}}_s$.  Furthermore, by optimality of $\hat\theta_t$, differentiating the objective function in Equation~\eqref{eq:thetahatdef} yields:
		\begin{align}
			&\sum_{s=1}^{t-1} \gamma^{t-1-s}\left[\mu(\langle \hat\theta_t, x_s\rangle)-r_{s+1}\right]x_s + \lambda\cm\hat\theta_t = 0 &\notag\\
			&\pmb{\Leftrightarrow} g_t(\hat\theta_t) = \sum_{s=1}^{t-1}\gamma^{t-1-s}\mu(\langle \ts{s}, x_s\rangle)x_s + \sum_{s=1}^{t-1}\gamma^{t-1-s}\epsilon_{s+1}x_s &\text{(Equation~\eqref{eq:eta_def})}\notag\\
			& \pmb{\Leftrightarrow} g_t(\hat\theta_t) = g_t(\bar\theta_t)+\sum_{s=1}^{t-1}\gamma^{t-1-s}\epsilon_{s+1}x_s -\lambda\cm\ts{t}&\text{(Equation~\eqref{eq:theta_bar_gt})} \label{eq:noideafornamee}\\
			&\pmb{\Leftrightarrow}  \lVert g_t(\bar\theta_t) - g_t(\hat\theta_t)\rVert_{\mbold{\widetilde{V}_t^{-1}}}= \left\lVert\sum_{s=1}^{t-1}\gamma^{t-1-s}\epsilon_{s+1}x_s -\lambda\cm\ts{t} \right\rVert_{\mbold{\widetilde{V}_t^{-1}}}\notag \;.
		\end{align} %

Therefore since $\ts{t}\in\Theta$ and $\mbold{\widetilde{V}_t} \succeq \lambda \mbold{I_d}$ we obtain:
\begin{align*}
	\lVert g_t(\bar\theta_t) - g_t(\hat\theta_t)\rVert_{\mbold{\widetilde{V}_t^{-1}}} \leq \sqrt{\lambda}\cm S + \left\lVert\sum_{s=1}^{t-1}\gamma^{t-1-s}\epsilon_{s+1}x_s  \right\rVert_{\mbold{\widetilde{V}_t^{-1}}}\;.
\end{align*}	
Simplifying the factors $\gamma^{t-1}$ in the most right term and applying Proposition 1 of \citet{russac2019weighted} proves that with probability at least $1-\delta$, for all $t\geq 1$:
\begin{align*}
	\lVert g_t(\bar\theta_t) - g_t(\hat\theta_t)\rVert_{\mbold{\widetilde{V}_t^{-1}}} \leq \sqrt{\lambda}\cm S + \sigma\sqrt{2\log(1/\delta)+d\log\left(1+\frac{L^2(1-\gamma^{2t})}{\lambda d(1-\gamma^2)}\right)} = \beta_t(\delta)\;,
\end{align*}
hence proving the desired result.
\end{proof} 

\subsection{Bounding the prediction error}
 \begin{lemm}
 			Let $\delta\in(0,1]$ and $D\in\mathbb{N}^*$. With probability at least $1-\delta$: for all $t\geq 1$, for all $x\in\mcal{X}_t$, under Assumption
 			\ref{ass:ortho} the following holds.
 			\begin{align*}
 				\Delta_t(x) \leq \frac{2\km}{\cm}\beta_t(\delta) \lVert x\rVert_{\mathbf{V_t^{-1}}} \!+\! \frac{4\km^2 L^3 S}{\cm\lambda} \frac{\gamma^D}{(1-\gamma)} \!+\!  \frac{2\km^2L}{\cm} \sum_{s=t-D}^{t-1} \ltwo{\ts{s}-\ts{s+1}}.
 			\end{align*}
Without Assumption \ref{ass:ortho}, under general arm-set geometry, the following holds.
\begin{align*}
 \Delta_t(x) &\leq \frac{2\km}{\cm}\beta_t(\delta) \lVert x\rVert_{\mathbf{V_t^{-1}}} 
 \\
 & + \frac{2 \km L} {\cm} \sqrt{1 + \frac{L^2}{\lambda(1-\gamma)}} 
	    \left( \frac{2 \km S L^2}{\lambda} \frac{\gamma^D}{1-\gamma} + \km \sqrt{\frac{d}{\lambda(1-\gamma)}}
	    \sum_{s=t-D}^{t-1} \ltwo{\ts{s}-\ts{s+1}} \right)\; .
 			\end{align*}
 			\label{lemma:prediction_bound}
\end{lemm}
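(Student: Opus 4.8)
The plan is to take as given the learn/track decomposition of the prediction error derived in the proof sketch. Starting from the mean-value bound \eqref{eq:mvt_deltapred} and the add-and-subtract step that follows it, for every $x \in \mcal{X}_t$ one has $\Delta_t(x) \leq \Delta_t^{\text{learn}}(x) + \Delta_t^{\text{track}}(x)$, where $\Delta_t^{\text{learn}}(x) = \Rm \lVert x\rVert_{\mathbf{V_t^{-1}}}\left(\lVert g_t(\tilde\theta_t)-g_t(\theta_t^p)\rVert_{\mathbf{\widetilde{V}_t^{-1}}} + \lVert g_t(\bar\theta_t)-g_t(\hat\theta_t)\rVert_{\mathbf{\widetilde{V}_t^{-1}}}\right)$ and $\Delta_t^{\text{track}}(x) = \Rm \ltwo{x}\left(\lVert g_t(\theta_t^p)-g_t(\hat\theta_t)\rVert_{\mathbf{V_t^{-2}}} + \lVert g_t(\bar\theta_t)-g_t(\ts{t})\rVert_{\mathbf{V_t^{-2}}}\right)$. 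The first term is insensitive to the geometry of the arm-sets and is treated identically in both regimes; the whole difficulty lies in the tracking term.

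First I would dispatch the learning term. On the $1-\delta$ event of Lemma~\ref{lemma:confidence_set} we have $\lVert g_t(\bar\theta_t)-g_t(\hat\theta_t)\rVert_{\mathbf{\widetilde{V}_t^{-1}}} \leq \beta_t(\delta)$ for all $t$, while the design choice $\tilde\theta_t \in \mcal{E}_t^\delta(\theta_t^p)$ gives deterministically $\lVert g_t(\tilde\theta_t)-g_t(\theta_t^p)\rVert_{\mathbf{\widetilde{V}_t^{-1}}} \leq \beta_t(\delta)$. This yields \eqref{eq:main_delta_pred}, namely $\Delta_t^{\text{learn}}(x) \leq 2\Rm\beta_t(\delta)\lVert x\rVert_{\mathbf{V_t^{-1}}} = \tfrac{2\km}{\cm}\beta_t(\delta)\lVert x\rVert_{\mathbf{V_t^{-1}}}$, which is exactly the first summand of both claimed bounds. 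For the tracking term, Lemma~\ref{lemma:bound_second_drift} (valid on the same event) collapses the two summands into one, $\lVert g_t(\theta_t^p)-g_t(\hat\theta_t)\rVert_{\mathbf{V_t^{-2}}} \leq \lVert g_t(\bar\theta_t)-g_t(\ts{t})\rVert_{\mathbf{V_t^{-2}}}$, so that $\Delta_t^{\text{track}}(x) \leq 2\Rm\ltwo{x}\,\lVert g_t(\bar\theta_t)-g_t(\ts{t})\rVert_{\mathbf{V_t^{-2}}}$, and I would use $\ltwo{x}\leq L$.

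In the orthogonal case this finishes the proof immediately: plugging Lemma~\ref{lemma:bound_first_drift} into the displayed tracking bound and substituting $\Rm = \km/\cm$ reproduces the two remaining summands $\tfrac{4\km^2 L^3 S}{\cm\lambda}\tfrac{\gamma^D}{1-\gamma}$ and $\tfrac{2\km^2 L}{\cm}\sum_{s=t-D}^{t-1}\ltwo{\ts{s}-\ts{s+1}}$. For the general case Lemma~\ref{lemma:bound_first_drift} is unavailable (it is coordinate-wise and relies on Assumption~\ref{ass:ortho}), so I would establish a geometry-free replacement for $\lVert g_t(\bar\theta_t)-g_t(\ts{t})\rVert_{\mathbf{V_t^{-2}}}$. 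The starting identity is exact: the first-order condition \eqref{eq:theta_bar_gt} gives $g_t(\bar\theta_t) = \sum_{s=1}^{t-1}\gamma^{t-1-s}\mu(\langle\ts{s},x_s\rangle)x_s + \lambda\cm\ts{t}$, and since $g_t(\ts{t}) = \sum_{s=1}^{t-1}\gamma^{t-1-s}\mu(\langle\ts{t},x_s\rangle)x_s + \lambda\cm\ts{t}$ the regularization cancels, leaving $g_t(\bar\theta_t)-g_t(\ts{t}) = \sum_{s=1}^{t-1}\gamma^{t-1-s}\big(\mu(\langle\ts{s},x_s\rangle)-\mu(\langle\ts{t},x_s\rangle)\big)x_s$. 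I would then split the sum at $s = t-D$. For old indices ($s < t-D$) the discount $\gamma^{t-1-s}\leq\gamma^D$ and the crude bound $|\mu(\langle\ts{s},x_s\rangle)-\mu(\langle\ts{t},x_s\rangle)| \leq 2\km S L$ produce, after a geometric summation and $\mathbf{V_t^{-2}}\preceq\lambda^{-1}\mathbf{V_t^{-1}}$, the $\gamma^D/(1-\gamma)$ contribution. For recent indices ($s \geq t-D$) I would use the Lipschitz bound $|\mu(\langle\ts{s},x_s\rangle)-\mu(\langle\ts{t},x_s\rangle)| \leq \km L\,\ltwo{\ts{s}-\ts{t}} \leq \km L\sum_{u=s}^{t-1}\ltwo{\ts{u}-\ts{u+1}}$, rearrange the resulting double sum into $\sum_{s=t-D}^{t-1}\ltwo{\ts{s}-\ts{s+1}}$, and control the matrix norm by a Cauchy–Schwarz/trace argument on the weighted design, exploiting $\sum_{s}\gamma^{t-1-s}\lVert x_s\rVert_{\mathbf{V_t^{-1}}}^2 = \mathrm{tr}(\mathbf{I_d}-\lambda\mathbf{V_t^{-1}}) \leq d$ together with $\lambda_{\max}(\mathbf{V_t})\leq\lambda + L^2/(1-\gamma)$. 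These two facts are precisely what produce, respectively, the dimensional factor $\sqrt{d/(\lambda(1-\gamma))}$ and the spectral prefactor $\sqrt{1+L^2/(\lambda(1-\gamma))}$ appearing in the stated general bound; multiplying the resulting estimate of $\lVert g_t(\bar\theta_t)-g_t(\ts{t})\rVert_{\mathbf{V_t^{-2}}}$ by $2\Rm L$ gives the claim.

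The main obstacle is this last matrix-norm control in the general case. Under Assumption~\ref{ass:ortho} the quadratic form $\lVert\cdot\rVert_{\mathbf{V_t^{-2}}}$ decouples along the orthonormal directions, which is exactly how Lemma~\ref{lemma:bound_first_drift} avoids any dimension or conditioning penalty; without it one is forced into the coarser Cauchy–Schwarz estimate, whose $\sqrt{d}$ and $\sqrt{1+L^2/(\lambda(1-\gamma))}$ factors are the ultimate source of the degradation of the regret rate from $T^{2/3}$ to $T^{4/5}$. Finally, I would note that the only stochastic ingredient is the event of Lemma~\ref{lemma:confidence_set}: Lemma~\ref{lemma:bound_second_drift} holds on that event, and both tracking estimates as well as the feasibility of \eqref{opt:program_0} are deterministic, so the whole statement holds with probability at least $1-\delta$ uniformly in $t$.
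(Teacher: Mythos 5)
Your treatment of the first claim (under Assumption~\ref{ass:ortho}) is correct and coincides with the paper's proof: the learning term is dispatched via Lemma~\ref{lemma:confidence_set} together with the design constraint $\tilde\theta_t\in\mcal{E}_t^\delta(\theta_t^p)$, and the tracking term via Lemma~\ref{lemma:bound_second_drift} followed by Lemma~\ref{lemma:bound_first_drift} and $\ltwo{x}\leq L$. The gap is in the general case, and it sits at the very first step, where you ``take as given'' the decomposition $\Delta_t^{\text{track}}(x)\leq \Rm\ltwo{x}\bigl(\lVert g_t(\theta_t^p)-g_t(\hat\theta_t)\rVert_{\mathbf{V_t^{-2}}}+\lVert g_t(\bar\theta_t)-g_t(\ts{t})\rVert_{\mathbf{V_t^{-2}}}\bigr)$ for arbitrary arm sets. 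That inequality comes from Cauchy--Schwarz, $\km\vert\langle x,\mathbf{G_t^{-1}}v\rangle\vert\leq\km\ltwo{x}\,\lVert v\rVert_{\mathbf{G_t^{-2}}}$, followed by $\lVert v\rVert_{\mathbf{G_t^{-2}}}\leq \cm^{-1}\lVert v\rVert_{\mathbf{V_t^{-2}}}$; the latter requires $\mathbf{G_t^{-2}}\preceq\cm^{-2}\mathbf{V_t^{-2}}$, which does \emph{not} follow from $\mathbf{G_t}\succeq\cm\mathbf{V_t}$, because squaring is not operator monotone. It is valid when $\mathbf{G_t}$ and $\mathbf{V_t}$ commute, which is exactly what Assumption~\ref{ass:ortho} buys (both matrices are diagonal in the basis $\{e_i\}$); the paper's proof of Lemma~\ref{lemma:trackortho} flags this explicitly, noting the step can be obtained \emph{only because} the two matrices commute under orthogonality. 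The proof-sketch decomposition you start from is itself stated under the standing assumption that Assumption~\ref{ass:ortho} holds, so reusing it for general arm sets repeats precisely the kind of non-commutative matrix-ordering slip that Section~\ref{subsec:limitations} criticizes in earlier extensions.

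The paper's actual route in the general case is different: it inserts $\mathbf{V_t}\mathbf{V_t^{-1}}$ before applying Cauchy--Schwarz, giving $\km\vert\langle x,\mathbf{G_t^{-1}}\mathbf{V_t}\mathbf{V_t^{-1}}v\rangle\vert\leq \km\lVert x\rVert_{\mathbf{G_t^{-1}}\mathbf{V_t^{2}}\mathbf{G_t^{-1}}}\lVert v\rVert_{\mathbf{V_t^{-2}}}$, and then controls the arm-side quadratic form using only congruences (which \emph{do} preserve the semidefinite order): $\lVert x\rVert_{\mathbf{G_t^{-1}}\mathbf{V_t^{2}}\mathbf{G_t^{-1}}}\leq\sqrt{\lambda_{\max}(\mathbf{V_t})}\,\lVert x\rVert_{\mathbf{G_t^{-1}}\mathbf{V_t}\mathbf{G_t^{-1}}}\leq\sqrt{\lambda_{\max}(\mathbf{V_t})}\,\cm^{-1/2}\lVert x\rVert_{\mathbf{G_t^{-1}}}\leq \frac{L}{\cm\sqrt{\lambda}}\sqrt{\lambda_{\max}(\mathbf{V_t})}$. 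Together with $\lambda_{\max}(\mathbf{V_t})\leq\lambda+L^2/(1-\gamma)$, this is the origin of the spectral prefactor $\sqrt{1+L^2/(\lambda(1-\gamma))}$ in the stated bound. Your proposal misattributes that factor to the estimate of $\lVert g_t(\bar\theta_t)-g_t(\ts{t})\rVert_{\mathbf{V_t^{-2}}}$: in fact the geometry-free Lemma~\ref{lemma:bound_first_drift_bis} bounds that quantity by $\frac{2\km L^2S}{\lambda}\frac{\gamma^D}{1-\gamma}+\km\sqrt{\frac{d}{\lambda(1-\gamma)}}\sum_{s=t-D}^{t-1}\ltwo{\ts{s}-\ts{s+1}}$ with no spectral factor at all (your split at $s=t-D$, the telescoping, and the trace/Cauchy--Schwarz argument producing $\sqrt{d/(\lambda(1-\gamma))}$ are the right ingredients for that lemma). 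Had your decomposition been legitimate, combining it with that lemma would yield a strictly stronger bound than the one stated --- a signal that the decomposition, not the drift estimate, is where the argument breaks.
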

\begin{proof}
In the following, we assume that the event $E_\delta = \{\bar\theta_t \in \mcal{E}_t^\delta(\hat\theta_t)  \text{ for all } t\geq 1\}$ holds, which happens with probability at least $1-\delta$ (Lemma~\ref{lemma:confidence_set}). From the definition of the prediction error:
\begin{align}
	\Delta_t(x) &= \left\vert \mu(\langle x, \tilde\theta_t\rangle)-\mu(\langle x, \ts{t}\rangle)\right\vert&\notag \\
			&\leq \left(\sup_{x\in \mcal{X},\theta\in\Theta} \dot{\mu}\left(\langle x,\theta\rangle\right)\right) \left\vert \langle x,  \tilde\theta_t -\ts{t}\rangle\right\vert & (x \in\mcal{X}_t, \ts{t}\in\Theta, \tilde\theta_t\in\Theta) \notag\\
			&\leq \km \left\vert \langle x,  \tilde\theta_t -\ts{t}\rangle\right\vert\;. &(\text{by definition of } \km)
			\label{eq:delta_pref_lip}
\end{align}
Further, thanks to the mean value theorem:
\begin{align}
	g_t(\tilde\theta_t) - g_t(\ts{t}) &= \sum_{s=1}^{t-1} \gamma^{t-1-s}\left[\mu(\langle \tilde\theta_t,x_s\rangle)-\mu(\langle \ts{t},x_s\rangle)\right] + \lambda\cm( \tilde\theta_t -\ts{t})\notag\\
	&= \sum_{s=1}^{t-1} \gamma^{t-1-s}\left[\int_{v=0}^1 \dot{\mu}\left(\langle x_s, (1-v)\ts{t} + v\tilde\theta_t\rangle\right)dv\right] x_sx_s^\transp (\tilde\theta_t-\ts{t})+\lambda\cm( \tilde\theta_t -\ts{t})\notag\\
	&= \mathbf{G_t}\cdot(\tilde\theta_t-\ts{t})\;, \label{eq:diff_gt}
\end{align}
where:
\begin{align*}
	\mathbf{G_t} \defeq  &\sum_{s=1}^{t-1} \gamma^{t-1-s}\left[\int_{v=0}^1 \dot{\mu}\left(\langle x_s, (1-v)\ts{t} + v\tilde\theta_t\rangle\right) \, dv\right] 
	x_sx_s^\transp + \lambda\cm\mathbf{I_d}  \succeq \cm \mathbf{V_t} \;. 
\end{align*}
Note that because $x_s\in\mcal{X}$ for all $s\in[t-1]$ and $\tilde\theta_t,\ts{t}\in\Theta$ we have $\mathbf{G_t} \geq \cm\mathbf{V_t}$.
Assembling together Equations~\eqref{eq:delta_pref_lip} and \eqref{eq:diff_gt} we get:
\begin{align}
	\Delta_t(x) &\leq k_\mu \left\vert \left\langle  x,  \mathbf{G_t^{-1}}(g_t(\tilde\theta_t) - g_t(\ts{t})) \right\rangle \right\vert \notag\\
	&\leq k_\mu \left\vert \left\langle  x,  \mathbf{G_t^{-1}}(g_t(\tilde\theta_t) -g_t(\theta_t^p)+g_t(\theta_t^p)-g_t(\hat\theta_t)+g_t(\hat\theta_t)-g_t(\bar\theta_t)+g_t(\bar\theta_t)- g_t(\ts{t})) \right\rangle \right\vert \notag\\
	&\leq \underbrace{\km\left\vert \left\langle  x,  \mathbf{G_t^{-1}}(g_t(\tilde\theta_t) -g_t(\theta_t^p)+g_t(\hat\theta_t) -g_t(\bar\theta_t))\right\rangle \right\vert}_{\defeq\Delta_t^{\text{learn}}(x) }\notag \\&\qquad +\underbrace{ \km\left\vert \left\langle  x,  \mathbf{G_t^{-1}}(g_t(\theta_t^p) -g_t(\hat\theta_t)+g_t(\bar\theta_t) -g_t(\ts{t}))\right\rangle \right\vert}_{\defeq\Delta_t^{\text{track}}(x)}\notag\\
	&\leq \Delta_t^{\text{learn}}(x)+\Delta_t^{\text{track}}(x)\;.
	\label{eq:delta_decomposition}
\end{align}

This decomposition brings out the contribution of two different phenomenons (\emph{learning} and \emph{tracking}) which will be handled separately. Starting with the learning:

\begin{align}
	\Delta_t^{\text{learn}}(x) &= \km\left\vert \left\langle   x,  \mathbf{G_t^{-1}}(g_t(\tilde\theta_t) -g_t(\theta_t^p)+g_t(\hat\theta_t) -g_t(\bar\theta_t))\right\rangle \right\vert &\notag\\
	&= \km\left\vert \left\langle \mathbf{\widetilde{V}_t^{1/2}} \mathbf{G_t^{-1}}x,  \mathbf{\widetilde{V}_t^{-1/2}}(g_t(\tilde\theta_t) -g_t(\theta_t^p)+g_t(\hat\theta_t) -g_t(\bar\theta_t))\right\rangle \right\vert &\notag\\
	&\leq \km \lVert x\rVert_{\mathbf{G_t^{-1}\widetilde{V}_t}\mathbf{G_t^{-1}}}\left( \lVert g_t(\tilde\theta_t) -g_t(\theta_t^p)\rVert_{ \mathbf{\widetilde{V}_t^{-1}} }+\lVert g_t(\hat\theta_t) -g_t(\bar\theta_t)\rVert_{ \mathbf{\widetilde{V}_t^{-1}} }\right) &\text{(Cauchy-Schwarz)} 
	\notag\\
	&\leq \km \lVert x\rVert_{\mathbf{G_t^{-1}V_t}\mathbf{G_t^{-1}}}\left( \lVert g_t(\tilde\theta_t) -g_t(\theta_t^p)\rVert_{ \mathbf{\widetilde{V}_t^{-1}} }+\lVert g_t(\hat\theta_t) -g_t(\bar\theta_t)\rVert_{ \mathbf{\widetilde{V}_t^{-1}} }\right) &(\mathbf{\widetilde{V}_t}\leq \mathbf{V}_t) \notag\\
	&\leq \frac{\km}{\sqrt{\cm}} \lVert x\rVert_{\mathbf{G_t^{-1}}}\left( \lVert g_t(\tilde\theta_t) -g_t(\theta_t^p)\rVert_{ \mathbf{\widetilde{V}_t^{-1}} }+\lVert g_t(\hat\theta_t) -g_t(\bar\theta_t)\rVert_{ \mathbf{\widetilde{V}_t^{-1}} }\right) &(\mathbf{V_t}\leq \cm^{-1} \mathbf{G_t}) \notag\\
	&\leq \frac{\km}{\cm} \lVert x\rVert_{\mathbf{V_t^{-1}}}\left( \lVert g_t(\tilde\theta_t) -g_t(\theta_t^p)\rVert_{ \mathbf{\widetilde{V}_t^{-1}} }+\lVert g_t(\hat\theta_t) -g_t(\bar\theta_t)\rVert_{ \mathbf{\widetilde{V}_t^{-1}} }\right) &( \mathbf{G_t^{-1}} \leq \cm^{-1} \mathbf{V_t^{-1}}) \notag\\
	&\leq  \frac{\km}{\cm} \lVert x\rVert_{\mathbf{V_t^{-1}}}\left( \beta_t(\delta) +\lVert g_t(\hat\theta_t) -g_t(\bar\theta_t)\rVert_{ \mathbf{\widetilde{V}_t^{-1}} }\right) &(\tilde\theta_t\in\mcal{E}_t^\delta(\theta_t^p))\notag \\
	&\leq   \frac{\km}{\cm} \lVert x\rVert_{\mathbf{V_t^{-1}}}\left( \beta_t(\delta) + \beta_t(\delta)\right)\;. &(E_\delta \text{ holds})\notag
\end{align}
We used $\mathbf{\widetilde{V}_t}\leq \mathbf{V}_t$ which is a consequence of $\gamma\in(0,1)$. 
As a result:
\begin{equation}
		\Delta_t^{\text{learn}}(x) \leq \frac{2\km}{\cm}\beta_t(\delta) \lVert x\rVert_{\mathbf{V_t^{-1}}}\;.
	\label{eq:bound_delta_learn}
\end{equation}

The tracking term is bounded differently when the action set satisfies Assumption \ref{ass:ortho} or for 
general arm-set geometry. The bound on the tracking term is reported in Lemma~\ref{lemma:trackortho} and its proof is reported in 
Section~\ref{sec:prooflemma:drift}
%
%

\begin{restatable}{lemm}{lemmatrackortho}
\ifappendix
	Let $D\in\mbb{N}^*$. When Assumption~\ref{ass:ortho} holds, we have the following:
    \begin{equation}
	\Delta_t^{\text{track}}(x) \leq \frac{4\km^2 L^3 S}{\cm\lambda} \frac{\gamma^D}{(1-\gamma)} +  \frac{2\km^2 L}{\cm}\sum_{s=t-D}^{t-1} \ltwo{\ts{s}-\ts{s+1}} \;.
    \end{equation}
	For general arm-set geometry, we have the following
	\begin{equation*}
	    \Delta_t^{\text{track}}(x) \leq  \frac{2 \km L} {\cm} \sqrt{1 + \frac{L^2}{\lambda(1-\gamma)}} 
	    \left( \frac{2 \km S L^2}{\lambda} \frac{\gamma^D}{1-\gamma} + \km \sqrt{\frac{d}{\lambda(1-\gamma)}}
	    \sum_{s=t-D}^{t-1} \ltwo{\ts{s}-\ts{s+1}}
	    \right)
	\end{equation*}
	\label{lemma:trackortho}
\else
\fi
\end{restatable}

Assembling Equations~\eqref{eq:delta_decomposition}, \eqref{eq:bound_delta_learn} and the two different 
inequalities from Lemma~\ref{lemma:trackortho} gives the two statements of the proof. 
\end{proof}

\subsection{Proof of Lemma~\ref{lemma:trackortho}}
\label{sec:prooflemma:drift}
\lemmatrackortho*
\begin{proof}
Throughout the proof, we will use the following lemma, proven in Section~\ref{sec:proof_lemma:bound_second_drift}.
\lemmaboundseconddrift*
\noindent \underline{With Assumption 4.} In this first part of the proof, we assume that Assumption~\ref{ass:ortho} holds. 
We have the following:
\begin{align}
	\Delta_t^{\text{track}}(x) &= \km\left\vert \left\langle  x,  \mathbf{G_t}^{-1}(g_t(\theta_t^p) -g_t(\hat\theta_t)+g_t(\bar\theta_t) -g_t(\ts{t}))\right\rangle \right\vert  & \notag \\
	&\leq \km\ltwo{x}\left\lVert g_t(\theta_t^p) -g_t(\hat\theta_t)+g_t(\bar\theta_t) -g_t(\ts{t})\right\rVert_{\mathbf{G}_t^{-2}}&\text{(Cauchy-Schwarz)}
	\notag \\
	&\leq \frac{\km L}{\cm} \left\lVert g_t(\theta_t^p) -g_t(\hat\theta_t)+g_t(\bar\theta_t) -g_t(\ts{t})\right\rVert_{\mathbf{V}_t^{-2}}&(\ltwo{x}\leq L, \mathbf{G_t}^2\succeq \cm^2\mathbf{V_t}^2)\notag \\
	&\leq  \frac{\km L}{\cm} \left( \left\lVert g_t(\theta_t^p) -g_t(\hat\theta_t)\right\rVert_{\mathbf{V}_t^{-2}}+\left\lVert g_t(\bar\theta_t) -g_t(\ts{t})\right\rVert_{\mathbf{V}_t^{-2}}\right)&\text{(Triangle inequality)}\notag \\
	&\leq \frac{2\km L}{\cm}\left\lVert g_t(\bar\theta_t) -g_t(\ts{t})\right\rVert_{\mathbf{V}_t^{-2}} &\text{(Lemma~\ref{lemma:bound_second_drift}})\notag
\end{align}
where the third inequality can be obtained only because when Assumption~\ref{ass:ortho} holds $\mbold{G}_t$ and $\mbold{V}_t$ commute.
The final is obtained using Lemma~\ref{lemma:bound_first_drift} reported here and established in 
Section~\ref{lemma:bound_first_drift}
\lemmaboundfirstdrift*
\noindent\underline{Without Assumption 4.} We now explain how to extend the analysis with general arm-set geometry.
\begin{align}
	\Delta_t^{\text{track}}(x) &= \km\left\vert \left\langle  x,  \mathbf{G_t}^{-1}(g_t(\theta_t^p) -g_t(\hat\theta_t)+g_t(\bar\theta_t) -g_t(\ts{t}))\right\rangle \right\vert  & \notag \\
	& = \km\left\vert \left\langle  x,  \mathbf{G_t}^{-1}
	\mathbf{V_t} \mathbf{V_t}^{-1}
	(g_t(\theta_t^p) -g_t(\hat\theta_t)+g_t(\bar\theta_t) -g_t(\ts{t}))\right\rangle \right\vert  & \notag \\
	&\leq \km \lVert x \rVert_{\mathbf{G_t}^{-1} \mathbf{V_t}^{2} \mathbf{G_t}^{-1}}
	\left\lVert g_t(\theta_t^p) -g_t(\hat\theta_t)+g_t(\bar\theta_t) -g_t(\ts{t})\right\rVert_{\mathbf{V}_t^{-2}} \quad \text{(Cauchy-Schwarz)}
	\notag \\
	&\leq \km  \sqrt{\lambda_{\textnormal{max}}(\mathbf{V_t})} 
	\lVert x \rVert_{\mathbf{G_t}^{-1} \mathbf{V_t} \mathbf{G_t}^{-1}}
	\left\lVert g_t(\theta_t^p) -g_t(\hat\theta_t)+g_t(\bar\theta_t) -g_t(\ts{t})\right\rVert_{\mathbf{V}_t^{-2}}\notag \\
	&\leq \frac{\km}{\sqrt{\cm}} \sqrt{\lambda_{\textnormal{max}}(\mathbf{V_t})} 
	\lVert x \rVert_{\mathbf{G_t}^{-1}} \left\lVert g_t(\theta_t^p) -g_t(\hat\theta_t)+g_t(\bar\theta_t) -g_t(\ts{t})\right\rVert_{\mathbf{V}_t^{-2}} \quad
	(\mathbf{G_t}\succeq \cm\mathbf{V_t}) 
	\notag\\
	&\leq \frac{\km L }{\sqrt{\lambda} \cm} \sqrt{\lambda_{\textnormal{max}}(\mathbf{V_t})} 
	\left\lVert g_t(\theta_t^p) -g_t(\hat\theta_t)+g_t(\bar\theta_t) -g_t(\ts{t})\right\rVert_{\mathbf{V}_t^{-2}} \quad 
	(\ltwo{x}\leq L, \mathbf{G_t}\succeq \lambda \cm\mathbf{I_d})
	\notag\\ 
	&\leq  \frac{\km L}{\sqrt{\lambda} \cm} \sqrt{\lambda_{\textnormal{max}}(\mathbf{V_t})}  
	\left( \left\lVert g_t(\theta_t^p) -g_t(\hat\theta_t)\right\rVert_{\mathbf{V}_t^{-2}}+\left\lVert g_t(\bar\theta_t) -g_t(\ts{t})\right\rVert_{\mathbf{V}_t^{-2}}\right)\quad \text{(Triangle inequality)}\notag \\
	&\leq  \frac{ 2 \km L}{\sqrt{\lambda} \cm} \sqrt{\lambda_{\textnormal{max}}(\mathbf{V_t})}  
	\left\lVert g_t(\bar\theta_t) -g_t(\ts{t})\right\rVert_{\mathbf{V}_t^{-2}}
	\quad \text{(Lemma~\ref{lemma:bound_second_drift}})\notag
\end{align}
We then use:
\begin{equation}
    \lambda_{\textnormal{max}}(\mathbf{V_t}) \leq \frac{L^2}{1-\gamma} + \lambda\; .
\end{equation}
That can be obtained by computing the operator norm of the matrix $\mathbf{V_t}$.
Combining this with Lemma~\ref{lemma:bound_first_drift_bis} reported here and proved in 
Section~\ref{sec:prooflemma:bound_first_drift_bis} achieves the proof.

\begin{restatable}{lemm}{lemmaboundfirstdriftbis}[Tracking with general action sets]
	Let $D\in\mbb{N}^*$. The following holds:
	\begin{align*}
		\lVert g_t(\bar\theta_t) -g_t(\ts{t})\rVert_{\mathbf{V_t^{-2}}} \leq \frac{2\km L^2 S}{\lambda} \frac{\gamma^D}{1-\gamma} + \frac{\km}{\sqrt{\lambda}} \frac{\sqrt{d}}{\sqrt{1-\gamma}} \sum_{s=t-D}^{t-1} \ltwo{\ts{s}-\ts{s+1}}.
	\end{align*}
	\label{lemma:bound_first_drift_bis}
\end{restatable}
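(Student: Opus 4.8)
The starting point is the first-order optimality condition for $\bar\theta_t$, Equation~\eqref{eq:theta_bar_gt}, which yields $g_t(\bar\theta_t)=\sum_{s=1}^{t-1}\gamma^{t-1-s}\mu(\langle x_s,\ts{s}\rangle)x_s+\lambda\cm\ts{t}$. Subtracting $g_t(\ts{t})$, the regularisation terms $\lambda\cm\ts{t}$ cancel and leave $g_t(\bar\theta_t)-g_t(\ts{t})=\sum_{s=1}^{t-1}\gamma^{t-1-s}\big(\mu(\langle x_s,\ts{s}\rangle)-\mu(\langle x_s,\ts{t}\rangle)\big)x_s$. I would then invoke the mean-value theorem to rewrite each increment as $\dot\mu_s\langle x_s,\ts{s}-\ts{t}\rangle x_s=\dot\mu_s\,x_sx_s^\transp(\ts{s}-\ts{t})$ with $\dot\mu_s\in[\cm,\km]$ (legitimate since $\ts{s},\ts{t}\in\Theta$), and split the sum at index $t-D$ into an \emph{old} block ($s\le t-D-1$) and a \emph{recent} block ($s\ge t-D$), exactly as in the orthogonal case of Lemma~\ref{lemma:bound_first_drift}.

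For the old block I would bound its $\mathbf{V_t^{-2}}$-norm by the triangle inequality and control each summand with three elementary estimates: $|\mu(\langle x_s,\ts{s}\rangle)-\mu(\langle x_s,\ts{t}\rangle)|\le \km L\ltwo{\ts{s}-\ts{t}}\le 2\km LS$ (using $\ts{s},\ts{t}\in\Theta$), $\ltwo{x_s}_{\mathbf{V_t^{-2}}}\le L/\lambda$ (from $\mathbf{V_t}\succeq\lambda\mathbf{I_d}$), and the geometric tail $\sum_{s\le t-D-1}\gamma^{t-1-s}\le \gamma^D/(1-\gamma)$. Their product reproduces the first term $\tfrac{2\km L^2 S}{\lambda}\tfrac{\gamma^D}{1-\gamma}$, identical to the orthogonal bound.

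The recent block is the crux, and this is where the loss of orthogonality must be paid for. Since $\mathbf{V_t}$ no longer commutes with the data there is no dimension-free bound, so I would proceed in three moves. First, reduce the metric using $\mathbf{V_t^{-2}}\preceq\lambda^{-1}\mathbf{V_t^{-1}}$, i.e. $\ltwo{\cdot}_{\mathbf{V_t^{-2}}}\le\lambda^{-1/2}\ltwo{\cdot}_{\mathbf{V_t^{-1}}}$ (this produces the $1/\sqrt\lambda$). Second, telescope $\ts{s}-\ts{t}=\sum_{u=s}^{t-1}(\ts{u}-\ts{u+1})$ and swap the order of summation to rewrite the recent block as $\sum_{u=t-D}^{t-1}\mathbf{M}_u(\ts{u}-\ts{u+1})$ with $\mathbf{M}_u\defeq\sum_{s=t-D}^{u}\gamma^{t-1-s}\dot\mu_s\,x_sx_s^\transp$; this is what keeps \emph{single} increments $\ltwo{\ts{u}-\ts{u+1}}$ rather than a cumulative drift. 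Third, bound each $\ltwo{\mathbf{M}_u(\ts{u}-\ts{u+1})}_{\mathbf{V_t^{-1}}}$ by the triangle inequality followed by a Cauchy–Schwarz split of the weight $\gamma^{t-1-s}=\sqrt{\gamma^{t-1-s}}\cdot\sqrt{\gamma^{t-1-s}}$, which factorises the estimate into a data-direction term (controlled by $\km$, the boundedness of the actions, and the geometric sum $\le 1/(1-\gamma)$, giving the $\km/\sqrt{1-\gamma}$ scaling) and a \emph{trace} term $\sqrt{\sum_{s}\gamma^{t-1-s}\ltwo{x_s}_{\mathbf{V_t^{-1}}}^2}$. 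The latter is exactly where $\sqrt d$ enters: by the weighted elliptical-potential identity (Lemma~\ref{lemma:ellipticalpotential}), $\sum_{s=1}^{t-1}\gamma^{t-1-s}\ltwo{x_s}_{\mathbf{V_t^{-1}}}^2=\operatorname{Tr}\big(\mathbf{V_t^{-1}}(\mathbf{V_t}-\lambda\mathbf{I_d})\big)\le d$. Summing over $u$ and collecting the three factors gives the second term $\tfrac{\km}{\sqrt\lambda}\tfrac{\sqrt d}{\sqrt{1-\gamma}}\sum_{s=t-D}^{t-1}\ltwo{\ts{s}-\ts{s+1}}$.

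The main obstacle is precisely the bookkeeping in the recent block: one must produce the $\sqrt d$ factor \emph{and} a single-increment drift sum \emph{simultaneously}, while converting from the $\mathbf{V_t^{-2}}$ to the $\mathbf{V_t^{-1}}$ geometry. The telescope-and-swap is essential to avoid a quadratic (cumulative) drift sum, and the Cauchy–Schwarz split is what allows the trace bound to deliver $\sqrt d$ instead of the cruder operator-norm estimate. This is the analytic manifestation of dropping Assumption~\ref{ass:ortho}: the clean, dimension-free $\km$ available when $\mathbf{M}_u$ and $\mathbf{V_t}$ are simultaneously diagonalisable is replaced by the $\sqrt{d/(\lambda(1-\gamma))}$ price in the general geometry.
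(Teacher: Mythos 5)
Your proposal is correct and takes essentially the same route as the paper's own proof: mean-value theorem, split at index $t-D$, geometric-tail bound for the old block, and for the recent block the telescope-and-swap, a Cauchy--Schwarz split, and the trace bound $\text{tr}\big(\mathbf{V_t^{-1}}(\mathbf{V_t}-\lambda\mathbf{I_d})\big)\le d$ to produce $\sqrt{d}$, the only cosmetic difference being that you pass to the $\mathbf{V_t^{-1}}$ geometry upfront via $\mathbf{V_t^{-2}}\preceq\lambda^{-1}\mathbf{V_t^{-1}}$, whereas the paper keeps a dual unit vector $x$ from the variational form of the operator norm and extracts the factor $1/\sqrt{\lambda(1-\gamma)}$ from $\sqrt{\sum_s\gamma^{t-1-s}x^\transp\mathbf{V_t^{-1}}x}$ at the end. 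One shared caveat: bounding $|x_s^\transp(\ts{u}-\ts{u+1})|$ via $\ltwo{x_s}\le L$ (your ``boundedness of the actions'') introduces a factor $L$ in the second term that the stated lemma omits --- the paper's own proof drops it silently at the same step, so your argument matches the paper's, that imprecision included.
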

\end{proof}

\subsection{Proof of Lemma~\ref{lemma:bound_second_drift}}
\label{sec:proof_lemma:bound_second_drift}
\lemmaboundseconddrift*

\begin{proof}
	We prove this result by \underline{contradiction}. Assume that:
	\begin{align}
		\lVert g_t(\theta_t^p) -g_t(\hat\theta_t)\rVert_{\mathbf{V_t^{-2}}} > \lVert g_t(\bar\theta_t) -g_t(\ts{t})\rVert_{\mathbf{V_t^{-2}}},
		\label{eq:contradiction}
	\end{align}
	For all $s\geq 1$ define:
	\begin{align}
		\tilde{r}_{s+1} \defeq \mu(\langle x_s, \ts{t}\rangle) + \epsilon_{s+1} \;,
		\label{eq:rtilde_def}
	\end{align}
	where $\{\epsilon_s\}_{s}$ is defined in Equation~\eqref{eq:eta_def}. Further, let:
	\begin{align*}
		\theta_c \defeq  \argmin_{\theta\in\mathbb{R}^d} \sum_{s=1}^{t-1} \gamma^{t-1-s} \left[b(\langle \theta, x_s\rangle) - \tilde{r}_{s+1}\langle \theta, x_s\rangle)\right] + \frac{\lambda\cm}{2}\ltwo{\theta}^2\;,
	\end{align*}
	which is well-defined as the minimizer of a strictly convex, coercive function. Upon differentiating we get:
	\begin{align}
		g_t(\theta_c) &= \sum_{s=1}^{t-1} \gamma^{t-1-s}\tilde{r}_{s+1}x_s & \notag \\
		&= \sum_{s=1}^{t-1} \gamma^{t-1-s}\epsilon_{s+1}x_s  + \sum_{s=1}^{t-1} \gamma^{t-1-s}\mu(\langle x_s,\ts{t} \rangle)x_s &\text{(Equation~\eqref{eq:rtilde_def})} \notag\\
		&= g_t(\hat\theta_t) - g_t(\bar\theta_t) + \lambda\cm\ts{t} + \sum_{s=1}^{t-1} \gamma^{t-1-s}\mu(\langle x_s,\ts{t} \rangle)x_s &\text{(Equation~\eqref{eq:noideafornamee})} \notag\\
		&= g_t(\hat\theta_t) - g_t(\bar\theta_t)+ g_t(\ts{t})\;. &\label{eq:diffgt_thetac}
	\end{align}
	Therefore:
	\begin{align*}
		\lVert  g_t(\theta_c) - g_t(\hat\theta_t) \rVert_{\mathbf{V_t^{-2}}} &= \lVert g_t(\bar\theta_t)- g_t(\ts{t})\rVert_{\mathbf{V_t^{-2}}}&\\ 
		&< \lVert g_t(\theta_t^p) -g_t(\hat\theta_t)\rVert_{\mathbf{V_t^{-2}}}\;.& \text{(Equation~\ref{eq:contradiction})}
	\end{align*}
	Further from Equation~\eqref{eq:diffgt_thetac} we get:
	\begin{align*}
		\lVert  g_t(\theta_c) - g_t(\ts{t}) \rVert_{\mathbf{\tilde{V}_t^{-1}}} &= \lVert  g_t(\bar\theta_t) - g_t(\hat\theta_t) \rVert_{\mbold{\tilde{V}_t^{-1}}} &\\
		  &\leq \beta_t(\delta) &(\bar\theta_t\in\mcal{E}_t^{\delta}(\hat\theta_t) )\\
		&\pmb{\Leftrightarrow} \ts{t}\in\mcal{E}_t^\delta(\theta_c)\;.
	\end{align*}
	
	To sum-up, we have $\lVert  g_t(\theta_c) - g_t(\hat\theta_t) \rVert_{\mathbf{V_t^{-2}}} < \lVert g_t(\theta_t^p) -g_t(\hat\theta_t)\rVert_{\mathbf{V_t^{-2}}}$ and $\mcal{E}_t^\delta(\theta_c)\cap\Theta\neq\emptyset$ since $\ts{t}\in\Theta \cap \mcal{E}_t^\delta(\theta_c)$. This contradicts the definition of $\theta_t^p$ (in \eqref{opt:program_0}) and therefore Equation~\eqref{eq:contradiction} must be wrong, which proves the announced result.
\end{proof}

\subsection{Proof of Lemma~\ref{lemma:bound_first_drift}}
\label{sec:prooflemma:bound_first_drift}
\lemmaboundfirstdrift*
\begin{proof}
	Thanks to Equation~\eqref{eq:theta_bar_gt} we have:
	\begin{align*}
		&g_t(\bar\theta_t) = \sum_{s=1}^{t-1} \gamma^{t-1-s}\mu(\langle x_s,\ts{s}\rangle)x_s + \lambda\cm \ts{t}&\\
		&\pmb{\Leftrightarrow} g_t(\bar\theta_t)-g_t(\ts{t}) =  \sum_{s=1}^{t-1}\gamma^{t-1-s} \left[\mu(\langle x_s,\ts{s}\rangle)-\mu(\langle x_s,\ts{t}\rangle)\right]x_s &\\
		 &\pmb{\Leftrightarrow} g_t(\bar\theta_t)-g_t(\ts{t}) =  \sum_{s=1}^{t-1}\gamma^{t-1-s} \left[\int_{v=0}^1 \dot{\mu}\left(\left\langle x_s, v\ts{t}+(1-v)\ts{s}\right\rangle\right)dv\right]x_sx_s^\transp(\ts{s}-\ts{t}) &\text{(mean-value theorem)}\\
		 &\pmb{\Leftrightarrow} g_t(\bar\theta_t)-g_t(\ts{t}) =  \sum_{s=1}^{t-1}\gamma^{t-1-s} \alpha_sx_sx_s^\transp(\ts{s}-\ts{t})\;,
	\end{align*}
	where we defined:
	\begin{align*}
		\alpha_s \defeq \int_{v=0}^1 \dot{\mu}\left(\left\langle x_s, v\ts{t}+(1-v)\ts{s}\right\rangle\right)dv \in [\cm,\km]\;.
	\end{align*}
	Therefore:
	\begin{align}
	\label{eq:usefull_g_t}
		\lVert g_t(\bar\theta_t) -g_t(\ts{t})\rVert_{\mathbf{V_t^{-2}}} &= \left\lVert \sum_{s=1}^{t-1}\gamma^{t-1-s} \alpha_sx_sx_s^\transp(\ts{s}-\ts{t})\right\rVert_{\mathbf{V_t^{-2}}}
		\;.
	\end{align}
	The rest of the proof follows the strategy of \citet{russac2019weighted} to yield the announced result. Let $D\in\mbb{N}^*$ and notice that:
	\begin{align*}
		\left\lVert \sum_{s=1}^{t-1}\gamma^{t-1-s} \alpha_s x_sx_s^\transp(\ts{s}-\ts{t})\right\rVert_{\mathbf{V_t^{-2}}} &\leq \underbrace{\left\lVert \sum_{s=1}^{t-D-1}\gamma^{t-1-s} \alpha_s  x_sx_s^\transp(\ts{s}-\ts{t})\right\rVert_ \mathbf{V_t^{-2}}}_{:=d_1}   \\& +  \underbrace{\left\lVert\sum_{s=t-D}^{t-1}\gamma^{t-1-s}\alpha_s  x_sx_s^\transp(\ts{s}-\ts{t})\right\rVert_\mathbf{V_t^{-2}} }_{:=d_2}\;.
	\end{align*}
	Both terms are bounded separately; starting with $d_1$:
	\begin{align*}
	d_1  &\leq \lambda^{-1} \left\lVert \sum_{s=1}^{t-D-1}\gamma^{t-1-s} \alpha_s x_sx_s^\transp(\ts{s}-\ts{t})\right\rVert &(\mathbf{V_t}\geq \lambda\mathbf{I_d})\\
	   &\leq \lambda^{-1} \sum_{s=1}^{t-D-1} \gamma^{t-1-s} \vert \alpha_s \vert \left\lVert x_sx_s^\transp(\ts{s}-\ts{t})\right\rVert &\text{(Triangle inequality)} \\
	   &\leq 2\km \lambda^{-1}SL^2 \sum_{s=1}^{t-D-1} \gamma^{t-1-s}&(\ltwo{x_s}\leq L, \, \ts{s},\ts{t}\in\Theta, \vert \alpha_s\vert \leq \km)\\
	   &\leq 2\km\lambda^{-1}S L^2 \gamma^D(1-\gamma)^{-1} \;.
	\end{align*}
	For $d_2$ a careful analysis is required.
	\begin{align*}
		d_2 &=  \left\lVert \mathbf{V_t^{-1}}\sum_{s=t-D}^{t-1}\gamma^{t-1-s} \alpha_s x_sx_s^\transp(\ts{s}-\ts{t})\right\rVert &\\
		&= \left\lVert \sum_{s=t-D}^{t-1} \mathbf{V_t^{-1}}\gamma^{t-1-s} \alpha_s x_sx_s^\transp(\ts{s}-\ts{t})\right\rVert&\\
		&=  \left\lVert \sum_{s=t-D}^{t-1} \mathbf{V_t^{-1}}\gamma^{t-1-s} \alpha_s x_sx_s^\transp\sum_{p=s}^{t-1}\left( \ts{p}-\ts{p+1}\right)\right\rVert &\text{(Telescopic sum)}\\ 
		&\leq \left\lVert \sum_{p=t-D}^{t-1} \mathbf{V_t^{-1}}
		\sum_{s=t-D}^{p} \gamma^{t-1-s}  \alpha_s x_sx_s^\transp
		\left( \ts{p}-\ts{p+1}\right)\right\rVert &\text{(Re-arranging)}\\
		&\leq \sum_{p=t-D}^{t-1}   \left\lVert \mathbf{V_t^{-1}}\sum_{s=t-D}^{p} \gamma^{t-1-s} \alpha_s x_sx_s^\transp\left( \ts{p}-\ts{p+1}\right)\right\rVert &\text{(Triangle inequality)}
	\end{align*}
	
	At this point, Assumption~\ref{ass:ortho} can be used to upper-bound the operator norm of the matrix 
	$\mathbf{V_t^{-1}}\sum_{s=t-D}^{p} \gamma^{t-1-s} \alpha_s x_sx_s^\transp$.
	Under Assumption~\ref{ass:ortho}, the following holds:
	\begin{equation}
	\label{eq:matrix_link}
	\mathbf{V_t^{-1}}\sum_{s=t-D}^{p} \gamma^{t-1-s} \alpha_s x_sx_s^\transp
	= \mathbf{V_t^{-1/2}}\sum_{s=t-D}^{p} \gamma^{t-1-s} \alpha_s x_sx_s^\transp \mathbf{V_t^{-1/2}}
	:= \mathbf{M_t}
	\end{equation}
	The advantage, now is that the matrix on the right-hand side of Equation~\eqref{eq:matrix_link} is
	symmetric and we can use the relation $\lVert M x \rVert \leq \lVert M \rVert \, \lVert x\rVert_2$ that holds for all symmetric matrix $M$ and where $\lVert M \rVert$ denotes the operator norm of $M$.
	The final step consists in upper-bounding the operator norm of  $\mathbf{M_t}$.
	We have, 
	\begin{align*}
	    \forall x, \lVert x \rVert_2 \leq 1, \quad x^\transp \mathbf{M_t} x
	    &= x^\transp \mathbf{V_t^{-1/2}}\sum_{s=t-D}^{p} \gamma^{t-1-s} \alpha_s x_sx_s^\transp \mathbf{V_t^{-1/2}} x \\ 
	    &= \sum_{s=t-D}^p  \alpha_s x^\transp \mathbf{V_t^{-1/2}} x_s x_s^\transp \mathbf{V_t^{-1/2}} x \\
	    &= \sum_{s=t-D}^{p}\gamma^{t-1-s}\alpha_s  \left(x_s^\transp \mathbf{V_t^{-1/2}} x\right)^2 \\
	    &\leq \km \sum_{s=t-D}^{p}\gamma^{t-1-s}  \left(x_s^\transp \mathbf{V_t^{-1/2}} x\right)^2 \\ 
	    & \leq \km x^\transp \mathbf{V_t^{-1/2}}\sum_{s=t-D}^{p} \gamma^{t-1-s} x_sx_s^\transp \mathbf{V_t^{-1/2}} x\;.
	\end{align*}
	Furthermore,
	\begin{align*}
	    \forall x, \lVert x \rVert_2 \leq 1, \quad
	    x^\transp \mathbf{V_t^{-1/2}}\sum_{s=t-D}^{p} \gamma^{t-1-s} x_s x_s^\transp \mathbf{V_t^{-1/2}} x
	    &\leq x^\transp \mathbf{V_t^{-1/2}} \left( \sum_{s=1}^{t-1} \gamma^{t-1-s} x_s x_s^\transp \right)
	    \mathbf{V_t^{-1/2}} x \\
	    &\leq  x^\transp x \leq 1 \;.
	\end{align*}
	Combining the two inequalities ensures that 
	\begin{equation}
	    \label{eq:useful_matrix_2}
	    \lVert \mathbf{M_t} \rVert \leq \km
	\end{equation}
	
	Finally,
	
	\begin{align*}
	    d_2 \leq \km \sum_{p=t-D}^{t-1}\lVert \ts{p}-\ts{p+1}\rVert \;.
	\end{align*}
\end{proof}

\subsection{Proof of Lemma~\ref{lemma:bound_first_drift_bis}}
\label{sec:prooflemma:bound_first_drift_bis}
\lemmaboundfirstdriftbis*
\begin{proof}
    Following the proof of Lemma~\ref{lemma:bound_first_drift}, one has:
    
    \begin{align*}
    \lVert g_t(\bar\theta_t) -g_t(\ts{t})\rVert_{\mathbf{V_t^{-2}}} &= \left\lVert \mathbf{V_t^{-1}} \sum_{s=1}^{t-1}\gamma^{t-1-s} \alpha_sx_sx_s^\transp(\ts{s}-\ts{t})\right\rVert_{2} 
    \end{align*}
    We follow the line of proof from \citep[Appendix C]{touati2020efficient} where the only difference is the $\alpha_s$ term. We use 
    \begin{equation}
        \label{eq:proof_lemma_7}
        \left\lVert \mathbf{V_t^{-1}} \sum_{s=1}^{t-1}\gamma^{t-1-s} \alpha_sx_sx_s^\transp(\ts{s}-\ts{t})\right\rVert_{2}  = \max_{x: \lVert x \rVert_2=1} 
        \left
        |x^\transp \mathbf{V_t^{-1}} \sum_{s=1}^{t-1}\gamma^{t-1-s} \alpha_sx_sx_s^\transp(\ts{s}-\ts{t})
        \right|
    \end{equation}

Let $x \in \mathbb{R}^d$ such that $\lVert x \rVert_2= 1$, we have
\begin{align*}
    \left|x^\transp \mathbf{V_t^{-1}} \sum_{s=1}^{t-1}\gamma^{t-1-s} \alpha_sx_sx_s^\transp(\ts{s}-\ts{t})
    \right| &\leq
    \left|x^\transp \mathbf{V_t^{-1}} \sum_{s=1}^{t-D-1}\gamma^{t-1-s} \alpha_sx_sx_s^\transp(\ts{s}-\ts{t})
    \right| \\
    & +  
    \left|x^\transp \mathbf{V_t^{-1}} \sum_{s=t-D}^{t-1}\gamma^{t-1-s} \alpha_sx_sx_s^\transp(\ts{s}-\ts{t})
    \right|
\end{align*}

For the first term, using Cauchy-Schwarz, we obtain the term $d_1$ from the proof of Lemma~\ref{lemma:bound_first_drift}.
Hence,
\begin{align*}
    \left|x^\transp \mathbf{V_t^{-1}} \sum_{s=1}^{t-D-1}\gamma^{t-1-s} \alpha_s x_s x_s^\transp(\ts{s}-\ts{t})
    \right| \leq \lVert x \rVert_2 \frac{2 \km S L^2}{\lambda} \frac{\gamma^D}{1- \gamma}
\end{align*}

Let $b = \left|x^\transp \mathbf{V_t^{-1}} \sum_{s=t-D}^{t-1}\gamma^{t-1-s} \alpha_sx_sx_s^\transp(\ts{s}-\ts{t})
    \right|$. One has,
    
\begin{align*}
    b &=  \sum_{s= t-D}^{t-1} \gamma^{t-1-s} |x^\transp \mathbf{V_t}^{-1} x_s | |\alpha_s| |x_s^\transp(\theta^s_\star-
    \theta_\star^t)| \quad  \textnormal{(Triangle inequality)} \\
    & \leq \km \sum_{s= t-D}^{t-1} |x^\transp \mathbf{V_t}^{-1} x_s | |x_s^\transp \sum_{p=s}^{t-1}(\theta^p_\star-
    \theta_\star^{p+1})| \\
    & \leq \km L  \sum_{s= t-D}^{t-1} |x^\transp \mathbf{V_t}^{-1} x_s| \left\lVert \sum_{p=s}^{t-1} (\theta^p_\star-
    \theta_\star^{p+1}) \right\rVert_2 \quad \text{(Cauchy-Schwarz, } \lVert x_s\rVert_2 \leq L) \\
    & \leq \sum_{p = t-D}^{t-1} \sum_{s= t-D}^p \gamma^{t-1-s} |x^\transp \mathbf{V_t}^{-1} x_s|
    \lVert \theta^p_\star-
    \theta_\star^{p+1} \rVert_2 \\
    & \leq \sum_{p = t-D}^{t-1} \sum_{s= t-D}^p \gamma^{t-1-s} \sqrt{x^\transp \mathbf{V_t}^{-1} x}
    \sqrt{x_s^\transp \mathbf{V_t}^{-1} x_s}
    \lVert \theta^p_\star-
    \theta_\star^{p+1} \rVert_2 \quad \text{(Cauchy-Schwarz)} \\
    & \leq \sum_{p = t-D}^{t-1} \sqrt{\sum_{s=t-D}^{t-1} \gamma^{t-1-s} x^\transp \mathbf{V_t}^{-1} x}
    \sqrt{\sum_{s=t-D}^{t-1} \gamma^{t-1-s} x_s^\transp \mathbf{V_t}^{-1} x_s} \lVert \theta^p_\star-
    \theta_\star^{p+1} \rVert_2 \; \text{(Cauchy-Schwarz)}
\end{align*}

Now, 
\begin{align*}
    \sqrt{\sum_{s=t-D}^{t-1} \gamma^{t-1-s} x_s^\transp \mathbf{V_t}^{-1} x_s} &\leq
    \sqrt{\text{tr}\left(\sum_{s=t-D}^{t-1} \gamma^{t-1-s} x_s^\transp \mathbf{V_t}^{-1} x_s\right)} \\
    & \leq \sqrt{\text{tr}\left( \mathbf{V_t}^{-1} \sum_{s=t-D}^{t-1} \gamma^{t-1-s} x_s x_s^\transp \right)}
    & \leq \sqrt{\text{tr}(\mathbf{I_d})} = \sqrt{d}
\end{align*}
Further, 
\begin{align*}
     \sqrt{\sum_{s=t-D}^{t-1} \gamma^{t-1-s} x^\transp \mathbf{V_t}^{-1} x} &\leq
     \frac{1}{\sqrt{\lambda}} \lVert x \rVert_2 \frac{1}{\sqrt{1-\gamma}}
\end{align*}

Bringing things together, yields the announced result.
\end{proof}
\section{Regret bound}
\label{app:regret_bound}

\subsection{Regret decomposition}
\lemmaregretdecomposition*
\label{app:regret_decomposition}

\begin{proof}
We recall that $x_\star^t = \argmax_{x\in\mcal{X}_t} \mu(\langle \ts{t},x\rangle)$.
Note that:
\begin{equation*}
\begin{split}
	R_T &= \sum_{t=1}^{T}  \mu(\langle x_\star^t, \ts{t}\rangle)-\mu(\langle x_t, \ts{t}\rangle)\\
	        &= \sum_{t=1}^{T}  \mu(\langle x_\star^t, \ts{t}\rangle)-\mu(\langle x_\star^t, \tilde\theta_t\rangle)+\mu(\langle x_\star^t, \tilde\theta_t\rangle)- \mu(\langle x_t, \tilde\theta_t\rangle)+\mu(\langle x_t, \tilde\theta_t\rangle)-\mu(\langle x_t, \ts{t}\rangle)\\
    	        &=  \sum_{t=1}^{T}\left[\mu(\langle x_\star^t, \tilde\theta_t\rangle)- \mu(\langle x_t, \tilde\theta_t\rangle)\right] + \sum_{t=1}^{T}\left[\mu(\langle x_\star^t, \ts{t}\rangle) - \mu(\langle x_\star^t, \tilde\theta_t\rangle)\right] + \sum_{t=1}^{T}\left[\mu(\langle x_t, \tilde\theta_t\rangle)-\mu(\langle x_t, \ts{t}\rangle)\right]\\
	        &\leq \frac{2k_\mu}{c_\mu}\sum_{t=1}^{T} \beta_t(\delta)\left[\lVert x_t\rVert_{\mathbf{V_t^{-1}}}-\lVert x_\star^t\rVert_{\mathbf{V_t^{-1}}}\right] \\
	        & \quad 
	        + \sum_{t=1}^{T}\left[\mu(\langle x_\star^t, \ts{t}\rangle) - \mu(\langle x_\star^t, \tilde\theta_t\rangle)\right] + \sum_{t=1}^{T}\left[\mu(\langle x_t, \tilde\theta_t\rangle)-\mu(\langle x_t, \ts{t}\rangle)\right] \;.
\end{split}
\end{equation*}
In the last inequality, we used the fact that $x_t = \argmax_{x\in\mcal{X}}\left\{ \mu(\langle x,\tilde\theta_t\rangle)+ \frac{2k_\mu}{c_\mu}\beta_t(\delta)\lVert x\rVert_{\mathbf{V_t^{-1}}}\right\}$. Using the definition of $\Delta_t(x)$ we conclude that:
\begin{align*}
	R_T \leq \frac{2k_\mu}{c_\mu}\sum_{t=1}^{T} \beta_t(\delta)\left[\lVert x_t\rVert_{\mathbf{V_t^{-1}}}-\lVert x_\star^t\rVert_{\mathbf{V_t^{-1}}}\right] +\sum_{t=1}^{T}\left[\Delta_t(x_t)+\Delta_t(x_\star^t)\right]\;.
\end{align*}
\end{proof}

\subsection{Regret bound}
\label{app:complete_regret_bound}
We now claim Theorem~\ref{thm:regret_bound}, bounding the regret of \ouralgo. 
\thmregretbound*
\begin{proof}
In the following, we assume that the event $\{\bar\theta_t\in\mcal{E}_t^{\delta}(\hat\theta_t), \forall t\geq 1\}$ holds, which happens with probability at least $1-\delta$ (Lemma~\ref{lemma:confidence_set}). Thanks to Lemma~\ref{lemma:prediction_bound}, when Assumption~\ref{ass:ortho} the following holds:
\begin{align*}
	\Delta_t(x_t) + \frac{2\km}{\cm}\beta_t(\delta)\lVert x_t\rVert_{\mathbf{V_t^{-1}}} &\leq 
	\frac{4\km}{\cm}\beta_t(\delta)\lVert x_t\rVert_{\mathbf{V_t^{-1}}}+ \frac{4\km^2 L^3 S}{\cm\lambda(1-\gamma)} \gamma^D +  \frac{2\km^2 L}{\cm}\sum_{s=t-D}^{t-1} \ltwo{\ts{s}-\ts{s+1}}\\
	\Delta_t(x_\star^t) - \frac{2\km}{\cm}\beta_t(\delta)\lVert x_\star^t\rVert_{\mathbf{V_t^{-1}}} &\leq 
	 \frac{4\km^2 L^3 S}{\cm\lambda(1-\gamma)} \gamma^D +  \frac{2\km^2 L}{\cm}\sum_{s=t-D}^{t-1} \ltwo{\ts{s}-\ts{s+1}}
\end{align*}
Assembling this result with Lemma~\ref{lemma:regret_decomposition} yields:
\begin{align*}
	R_T \leq \underbrace{\sum_{t=1}^T \frac{4\km}{\cm}\beta_t(\delta)\lVert x_t\rVert_{\mathbf{V_t^{-1}}}}_{R_T^{\text{learn}}}+  \underbrace{\sum_{t=1}^T \left[\frac{8\km^2 L^3 S}{\cm\lambda(1-\gamma)} \gamma^D +  \frac{4\km^2 L}{\cm}\sum_{s=t-D}^{t-1} \ltwo{\ts{s}-\ts{s+1}}\right] }_{R_T^\text{track}} \;.
\end{align*}
We now bound each term separately. Starting with $R_T^{\text{learn}}$:
 \begin{align*}
 	R_T^{\text{learn}} &\leq  \frac{4\km}{\cm}\beta_T(\delta)\sum_{t=1}^T \lVert x_t\rVert_{\mathbf{V_t^{-1}}} &(t\to\beta_t(\delta) \text{ increasing})\\
	&\leq \frac{4\km}{\cm}\beta_T(\delta)\sqrt{T}\sqrt{\sum_{t=1}^T \lVert x_t\rVert^2_{\mathbf{V_t^{-1}}}}  &\text{(Cauchy-Schwarz)}\\
	&\leq  \frac{4\km}{\cm}\beta_T(\delta)\sqrt{2T\max(1,L^2/\lambda)}\sqrt{dT\log(1/\gamma) +\log\left(\frac{\det\mathbf{V_{T+1}}}{\lambda^d}\right)} &\text{(Lemma~\ref{lemma:ellipticalpotential})}\\
	  &\leq \frac{4\km}{\cm}\beta_T(\delta)\sqrt{2dT\max(1,L^2/\lambda)}\sqrt{T\log(1/\gamma) +\log\left(1+\frac{L^2(1-\gamma^{T})}{\lambda d(1-\gamma)}\right)} \;. &\text{(Lemma~\ref{lemma:determinant_trace_inequality})} 
 \end{align*}
The bounding of the tracking term is straight-forward:
 \begin{align*}
 	R_T^{\text{track}} &= \frac{8\km^2 L^3 S}{\cm\lambda(1-\gamma)} \gamma^DT + \frac{4\km^2 L}{\cm}\sum_{t=1}^T\sum_{s=t-D}^{t-1} \ltwo{\ts{s}-\ts{s+1}} \\
	&\leq \frac{8\km^2 L^3 S}{\cm\lambda(1-\gamma)} \gamma^DT + \frac{4\km^2 L}{\cm}DB_T \;.
 \end{align*}
 Assembling this two bounds ($R_T^{\text{learn}}$ and $R_T^{\text{track}}$)  yields the first announced result, with the following constants:
 \begin{align*}
 	C_1 &=  \sqrt{32\max(1,L^2/\lambda)}\;. \\
	C_2 &= \frac{8\km L^3 S}{\lambda}\;. \\
	C_3 &=  4\km L \;.
 \end{align*}
 The last part of the proof follows the asymptotic argument of \citet{russac2019weighted}. We assume that $B_T$ is sub-linear and let:
 \begin{align*}
 	D = \frac{\log T}{1-\gamma}\;, \qquad \gamma =1- \left( \frac{B_T}{dT}\right)^{2/3} \;.
 \end{align*}
 We therefore have the following asymptotic equivalences (omitting logarithmic dependencies):
 \begin{align*}
 	\beta_T(\delta)\sqrt{dT}\sqrt{T\log(1/\gamma)} &\sim dT\cdot \left( \frac{B_T}{dT}\right)^{1/3} &= d^{2/3}B_T^{1/3}T^{2/3}\\
	\gamma^D T/(1-\gamma)&\sim  \exp(-\log T)T\left(\frac{B_T}{dT}\right)^{-2/3}  &= d^{2/3}B_T^{-2/3}T^{2/3}\\
	DB_T &\sim B_T \left(\frac{B_T}{dT} \right)^{-2/3} &= d^{2/3}B_T^{1/3}T^{2/3}
 \end{align*}
 Merged with the regret-bound we just proved, this yields the announced result.

 \noindent 
 Without Assumption~\ref{ass:ortho} similar results can be obtained. The main difference consists in using the upper-bound 
 for the tracking term under general arm-set geometry which is slightly more complicated. Plugging the bound from 
 Lemma~\ref{lemma:prediction_bound} and upper bounding $\sqrt{1 + \frac{L^2}{\lambda(1-\gamma)}}$ by 
 $1+ \frac{L}{\sqrt{\lambda(1-\gamma)}}$ gives the announced regret decomposition.
Let:
$$
\gamma = 1- \frac{B_T^{2/5}}{d^{1/5} T^{2/5}}
$$

 \begin{align*}
 	\beta_T(\delta)\sqrt{dT}\sqrt{T\log(1/\gamma)} &\sim dT\cdot \frac{B_T^{1/5} d^{-1/10}}{T^{1/5}} &= d^{9/10}B_T^{1/5}T^{4/5}\\
	\gamma^D T/(1-\gamma)^{3/2} &\sim  \exp(-\log T)T \left(\frac{d^{1/5} T^{2/5} }{B_T^{2/5}}\right)^{3/2} 
	&= d^{3/10}B_T^{-3/5}T^{3/5}\\
	\frac{\sqrt{d}}{1-\gamma}DB_T &\sim d^{1/2} B_T \left(\frac{d^{1/5} T^{2/5} }{B_T^{2/5}}\right)^{2} &= d^{9/10}B_T^{1/5}T^{4/5}
 \end{align*}
\end{proof}

\section{On the projection step}
\label{app:equivalent_min_proof}
\subsection{Equivalent minimization program}
Recall the original minimization program for finding $\theta_t^p$:
\begin{align}
	\theta_t^p \in \argmin_{\theta\in\mbb{R}^d}\left\{ \left\lVert g_t(\theta)-g_t(\hat\theta_t) \right\rVert_{\mathbf{V^{-2}_t}} \text{ s.t } \Theta\cap\mcal{E}_t^\delta(\theta)\neq \emptyset \right\}\tag{\textbf{P1}} \;.
	\label{opt:program_1}
\end{align}
Note that this minimum exists ($0_d$ is feasible) and is indeed attained (the feasible set is compact and the objective smooth). The following reformulation is motivated by the fact that only $\tilde\theta_t\in \Theta\cap\mcal{E}_t^\delta(\theta_t^p)$ is needed for the algorithm. To this end, we explicitly introduce $\tilde\theta_t$ in the program via a slack variable. Formally, we study:
\begin{align}
	\begin{pmatrix} \tilde{\theta}_t\\\theta_t^p  \end{pmatrix} \in\argmin_{\theta' \in 
	\mbb{R}^d, \theta\in\mbb{R}^d}\left\{ \left\lVert g_t(\theta)-g_t(\hat\theta_t) \right\rVert_{\mathbf{V^{-2}_t}} \text{ s.t } \theta'\in\mcal{E}_t^\delta(\theta)\cap\Theta\right\}\tag{\textbf{P1'}} \;.
	\label{opt:program_1}
\end{align}
We also introduce the following program:
\begin{align}
		\begin{pmatrix} \tilde{\theta}_t\\ \eta  \end{pmatrix} \in\argmin_{\theta' \in 
		\mbb{R}^d,\eta \in\mbb{R}^d}\left\{ \left\lVert g_t(\theta')+\beta_t(\delta)	\mathbf{\widetilde{V}^{1/2}_t}\eta-g_t(\hat\theta_t) \right\rVert_{\mathbf{V^{-2}_t}} \text{ s.t } \ltwo{\theta'}\leq S, \ltwo{\eta}\leq 1\right\}\tag{\textbf{P2}} \;.
		\label{opt:program_2}
\end{align}
We claim and prove the following result, which is an equivalent reformulation of Proposition~\ref{prop:program1}.\begin{restatable}{prop}{propoptimequivalent}
	The programs \eqref{opt:program_1} and  \eqref{opt:program_2} are equivalent.
\end{restatable}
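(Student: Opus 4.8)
The plan is to exhibit an objective-preserving correspondence between the feasible sets of \eqref{opt:program_1} and \eqref{opt:program_2}, from which equality of the optimal values and a matching of minimizers follow immediately. The pivotal observation is that the implicit membership constraint $\theta'\in\mcal{E}_t^\delta(\theta)$ can be linearized through a slack variable: since $\mcal{E}_t^\delta(\theta)$ is, via $g_t$, the $\beta_t(\delta)$-ball for the norm $\lVert\,\cdot\,\rVert_{\mbold{\widetilde{V}_t^{-1}}}$ centered at $\theta$, one has $\theta'\in\mcal{E}_t^\delta(\theta)$ if and only if there exists $\eta\in\mbb{R}^d$ with $\ltwo{\eta}\le 1$ and $g_t(\theta)=g_t(\theta')+\beta_t(\delta)\mathbf{\widetilde{V}^{1/2}_t}\eta$; indeed $\lVert\mathbf{\widetilde{V}^{1/2}_t}\eta\rVert_{\mbold{\widetilde{V}_t^{-1}}}=\ltwo{\eta}$. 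I would first record that $g_t$ is a bijection of $\mbb{R}^d$: it is the gradient of the strongly convex, coercive, everywhere-differentiable potential $\theta\mapsto\sum_{s=1}^{t-1}\gamma^{t-1-s}b(\langle x_s,\theta\rangle)+\tfrac{\lambda\cm}{2}\ltwo{\theta}^2$, so for any $z$ the shifted potential $G_t(\theta)-\langle z,\theta\rangle$ admits a unique minimizer, which yields existence and uniqueness of $g_t^{-1}(z)$. Note also $\beta_t(\delta)\ge\sqrt{\lambda}\cm S>0$, so dividing by $\beta_t(\delta)$ is harmless.

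With this in hand I would verify the two inclusions. Given a feasible $(\theta',\theta)$ of \eqref{opt:program_1}, set $\eta\defeq\beta_t(\delta)^{-1}\mathbf{\widetilde{V}^{-1/2}_t}\big(g_t(\theta)-g_t(\theta')\big)$; then $\ltwo{\eta}\le 1$ because $\theta'\in\mcal{E}_t^\delta(\theta)$, the pair $(\theta',\eta)$ is feasible for \eqref{opt:program_2} since $\ltwo{\theta'}\le S$, and by construction $g_t(\theta')+\beta_t(\delta)\mathbf{\widetilde{V}^{1/2}_t}\eta=g_t(\theta)$, so the two objectives coincide. Conversely, given a feasible $(\theta',\eta)$ of \eqref{opt:program_2}, set $\theta\defeq g_t^{-1}\big(g_t(\theta')+\beta_t(\delta)\mathbf{\widetilde{V}^{1/2}_t}\eta\big)$; the linearization identity above shows $\theta'\in\mcal{E}_t^\delta(\theta)$, while $\ltwo{\theta'}\le S$ gives $\theta'\in\Theta$, so $(\theta',\theta)$ is feasible for \eqref{opt:program_1} with the identical objective value $\lVert g_t(\theta)-g_t(\hat\theta_t)\rVert_{\mathbf{V^{-2}_t}}$.

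Since these two maps preserve the objective and cover all feasible points, both programs attain the same optimal value, and the $\theta'$-component of any minimizer of \eqref{opt:program_2} is the $\tilde\theta_t$-component of a minimizer of \eqref{opt:program_1}; conversely, $\theta_t^p=g_t^{-1}(g_t(\tilde\theta_t)+\beta_t(\delta)\mathbf{\widetilde{V}^{1/2}_t}\eta)$ recovers a solution of \eqref{opt:program_0} with $\tilde\theta_t\in\Theta\cap\mcal{E}_t^\delta(\theta_t^p)$, which is exactly the statement of Proposition~\ref{prop:program1}. The only genuinely delicate point is the well-posedness of $g_t^{-1}$ used in the backward direction: I would justify the \emph{surjectivity} of $g_t$ carefully through the strong-convexity/coercivity argument above, rather than merely invoking local invertibility from the (uniformly) positive-definite Jacobian of $g_t$, since global surjectivity is precisely what guarantees that every feasible $(\theta',\eta)$ of \eqref{opt:program_2} genuinely arises from a feasible pair of \eqref{opt:program_1}.
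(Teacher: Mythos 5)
Your proof is correct and follows essentially the same route as the paper's: both hinge on rewriting the membership constraint $\theta'\in\mcal{E}_t^\delta(\theta)$ as the existence of a slack variable $\eta$ with $\ltwo{\eta}\le 1$ and $g_t(\theta)=g_t(\theta')+\beta_t(\delta)\mathbf{\widetilde{V}^{1/2}_t}\eta$, together with the global bijectivity of $g_t$ (surjectivity being exactly what the backward direction needs), which the paper likewise isolates as a lemma proved by a strongly convex, coercive minimization argument. The differences are purely presentational: you organize the argument as a two-way objective-preserving correspondence between feasible sets (from which equality of optimal values and matching of minimizers follow at once), whereas the paper traces an explicit bijection between minimizers through a chain of equivalences, and your surjectivity argument via minimizing $G_t(\theta)-\langle z,\theta\rangle$ is a streamlined version of the paper's construction.
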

\begin{proof}
	The proof consists in building a bijection between the solutions of \eqref{opt:program_1} and \eqref{opt:program_2}. Let us introduce the mapping:
	\begin{align*}
		f\, : \, \Theta \times \mbb{R}^d &\to  \Theta \times \mbb{R}^d\\
				\begin{pmatrix}x\\ y\end{pmatrix}&\to\begin{pmatrix}f_1(x)\\ f_2(x,y)\end{pmatrix} = \begin{pmatrix}x \\\beta_t^{-1}(\delta)\mathbf{\widetilde{V}^{-1/2}_t}(g_t(y)-g_t(x))\end{pmatrix} 
	\end{align*}
We now claim the following Lemma, which proof is deferred to Section~\ref{sec:gt_bijective}.
	\begin{restatable}{lemma}{gtbijective}
	The function:
	\begin{align*}
		g_t\, : \, &\mbb{R}^d \to \mbb{R}^d\\
		& \theta \to \sum_{s=1}^{t-1}\gamma^{t-1-s}\mu(\langle \theta, x_s\rangle)x_s + \lambda\cm\theta
	\end{align*}
	is a bijection.
\end{restatable}%
A straight-forward implication of this Lemma is the \underline{bijectivity} of $f$. Let $(\tilde\theta^1,\theta^p)$ be a solution of \eqref{opt:program_1} and let:
\begin{align*}
	\begin{pmatrix} \tilde\theta^2\\ \eta^p\end{pmatrix} = f\begin{pmatrix} \tilde\theta^1\\\theta^p \end{pmatrix} \;.
\end{align*}
We are going to show that $(\tilde\theta^2,\eta^p)$ is a solution of \eqref{opt:program_2}. Because $(\tilde\theta^1,\theta^p)$ is optimal for \eqref{opt:program_1}, we have that:
\begin{align*}
	\lVert g_t(\theta^p)-g_t(\hat\theta_t)\rVert_{\mathbf{V^{-2}_t}}  &\leq \lVert g_t(\theta)-g_t(\hat\theta_t)\rVert_{\mathbf{V^{-2}_t}}&\\
	 &\qquad \forall(\theta',\theta)\in\Theta\times\mbb{R}^d \text{ s.t } \theta'\in\mcal{E}_t^\delta(\theta)\\
	\pmb{\Leftrightarrow} \lVert g_t(\theta^p)-g_t(\hat\theta_t)\rVert_{\mathbf{V^{-2}_t}} & \leq \lVert g_t(\theta)-g_t(\hat\theta_t)\rVert_{\mathbf{V^{-2}_t}}&(\text{definition of }\mcal{E}_t^\delta(\theta))\\
	& \qquad\forall(\theta',\theta)\in\Theta\times\mbb{R}^d \text{ s.t } {\color{red}\lVert g_t(\theta')-g_t(\theta)\rVert_{\mathbf{\widetilde{V}^{-1}_t}}\leq \beta_t(\delta)}&\\
	\pmb{\Leftrightarrow} \lVert g_t(\theta^p)-g_t(\hat\theta_t)\rVert_{\mathbf{V^{-2}_t}} & \leq \lVert g_t(\theta)-g_t(\hat\theta_t)\rVert_{\mathbf{V^{-2}_t}}&\\
	& \qquad \forall(\theta',\theta)\in\Theta\times\mbb{R}^d \text{ s.t } {\color{red}\ltwo{f_2(\theta',\theta)}\leq 1} &
\end{align*} 

Noticing that for all $(x,y)\in\Theta\times\mbb{R}^d$ we have $g_t(y) = g_t(x) + \beta_t(\delta)\mathbf{V^{1/2}_t}f_2(x,y)$ we therefore obtain:
\begin{align*}
	 \lVert g_t(\tilde\theta^1) + {\color{red}\beta_t(\delta)\mathbf{\widetilde{V}^{1/2}_t}f_2(\tilde\theta^1,\theta^p)} -g_t(\hat\theta_t)\rVert_{\mathbf{V^{-2}_t}} & \leq \lVert g_t(\theta') + {\color{red}\beta_t(\delta)\mathbf{\widetilde{V}^{1/2}_t}f_2(\theta',\theta)}-g_t(\hat\theta_t)\rVert_{\mathbf{V^{-2}_t}}&\\
	& \qquad \forall(\theta',\theta)\in\Theta\times\mbb{R}^d \text{ s.t } \ltwo{f_2(\theta',\theta)}\leq 1 &\\
	\pmb{\Leftrightarrow}  \lVert g_t(\tilde\theta^1) + \beta_t(\delta)\mathbf{\widetilde{V}^{1/2}_t}{\color{red}\eta^p} -g_t(\hat\theta_t)\rVert_{\mathbf{V^{-2}_t}} & \leq \lVert g_t(\theta') + \beta_t(\delta)\mathbf{\widetilde{V}^{1/2}_t}f_2(\theta',\theta)-g_t(\hat\theta_t)\rVert_{\mathbf{V^{-2}_t}}&\\
	& \qquad \forall(\theta',\theta)\in\Theta\times\mbb{R}^d \text{ s.t } \ltwo{f_2(\theta',\theta)}\leq 1 &\\
	\pmb{\Leftrightarrow}  \lVert g_t({\color{red}\tilde\theta^2}) + \beta_t(\delta)\mathbf{\widetilde{V}^{1/2}_t}\eta^p -g_t(\hat\theta_t)\rVert_{\mathbf{V^{-2}_t}} & \leq \lVert g_t(\theta') + \beta_t(\delta)\mathbf{\widetilde{V}^{1/2}_t}f_2(\theta',\theta)-g_t(\hat\theta_t)\rVert_{\mathbf{V^{-2}_t}}&\\
	& \qquad \forall(\theta',\theta)\in\Theta\times\mbb{R}^d \text{ s.t } \ltwo{f_2(\theta',\theta)}\leq 1 &(\tilde\theta^1=\tilde\theta^2)\\
	\pmb{\Leftrightarrow}  \lVert g_t(\tilde\theta^2) + \beta_t(\delta)\mathbf{\widetilde{V}^{1/2}_t}\eta^p -g_t(\hat\theta_t)\rVert_{\mathbf{V^{-2}_t}} & \leq \lVert g_t(\theta') + \beta_t(\delta)\mathbf{\widetilde{V}^{1/2}_t}f_2(\theta',\theta)-g_t(\hat\theta_t)\rVert_{\mathbf{V^{-2}_t}}&\\
	& \qquad \forall(\theta',\theta)\text{ s.t } \ltwo{f_2(\theta',\theta)}\leq 1, {\color{red}\ltwo{\theta'}\leq S}\\
	\pmb{\Leftrightarrow}  \lVert g_t(\tilde\theta^2) + \beta_t(\delta)\mathbf{\widetilde{V}^{1/2}_t}\eta^p -g_t(\hat\theta_t)\rVert_{\mathbf{V^{-2}_t}} & \leq \lVert g_t(\theta') + \beta_t(\delta)\mathbf{\widetilde{V}^{1/2}_t}{\color{red}{\eta}}-g_t(\hat\theta_t)\rVert_{\mathbf{V^{-2}_t}}&\\
	& \qquad \forall(\theta',{\color{red}{\eta}})\text{ s.t } \ltwo{{\color{red}{\eta}}}\leq 1,\ltwo{\theta'}\leq S
\end{align*}
where we last used the fact that $f_2$ spans $\mbb{R}^d$ (surjectivity). Finally, we have that:
\begin{align*}
	\ltwo{\tilde\theta^2}&\leq S &(\tilde\theta^2=\tilde\theta^1\in\Theta)\\
	\ltwo{\eta^p} &= \beta_t^{-1}(\delta)\left\lVert  g_t(\theta^p)-g_t(\tilde\theta^1)\right\rVert_{\mathbf{V^{-1}_t}} \leq 1  &(\tilde\theta^1\in\mcal{E}_t^\delta(\theta^p))
\end{align*}
Combining the last two results proves that $(\tilde\theta^2,\eta^p)$ is feasible for \eqref{opt:program_2}, and optimal within the feasible set. As a consequence, $(\tilde\theta^2,\eta^p)$ \underline{is a solution of} \eqref{opt:program_2}. Therefore, $f$ is a bijection between the minimizers of  \eqref{opt:program_1} and \eqref{opt:program_2}, which concludes the proof.
\end{proof}

\subsection{Bijectivity of $g_t$}
\label{sec:gt_bijective}
\gtbijective*

\begin{proof}
	\underline{Injectivity.} Notice that $\forall \theta\in\mbb{R}^d$:
	\begin{align*}
		\nabla_\theta g(\theta) = \sum_{s=1}^{t-1} \gamma^{t-1-s} \dot{\mu}(\langle \theta, x_s\rangle)x_sx_s^\transp + \lambda\cm\mathbf{I_d} \succ 0 \;.
	\end{align*}
	Hence $\nabla_\theta g$ is P.S.D, and a simple integral Taylor expansion is enough to prove injectivity. 

	\noindent\underline{Surjectivity} Let $z\in\mathbb{R}^d$. Let $A=\text{Span}(x_1,..,x_{t-1})$ be the vectorial space spanned by $\{x_s\}_{s=1}^{t-1}$. Let $z_\perp$ be the orthogonal projection of $z$ on $A$ and $z_\parallel = z-z_\perp$.  Since $z_\perp\in A$, there exists $\{\alpha_{s}\}_{s=1}^{t-1}\in\mathbb{R}^{t-1}$ such that:
	\begin{align*}
		z_\perp = \sum_{s=1}^{t-1} \alpha_sx_s \;.
	\end{align*}
	 Recall that $b(\cdot)$ is a primitive of $\mu$, which is convex since $\mu$ is strictly increasing. Define:
	 \begin{align*}
	 	L(\theta) = \sum_{s=1}^{t-1} \gamma^{t-1-s}\left[b(\langle \theta, x_s \rangle) - \frac{\alpha_s}{\gamma^{t-1-s}} \langle \theta, x_s \rangle\right] + \frac{\lambda\cm}{2}\left\lVert \theta - \frac{z_\parallel}{\lambda\cm}\right\rVert^2\;.
	 \end{align*}
	 which is a strictly convex, coercive function. Its minimum $\theta_z$ (which therefore exists and is uniquely defined) checks:
	 \begin{align*}
	 	&\nabla_\theta L(\theta_z) = 0 \\
		&\pmb{\Leftrightarrow} \sum_{s=1}^{t-1} \gamma^{t-1-s} \left[\mu(\langle \theta_z, x_s\rangle)-\frac{\alpha_s}{\gamma^{t-1-s}}\right]x_s + \lambda\cm
		\left(\theta_z-\frac{z_\parallel}{\lambda\cm}\right) = 0\\
		& \pmb{\Leftrightarrow} g(\theta_z)  = \sum_{s=1}^{t-1}\alpha_s x_s + z_\parallel \\
		& \pmb{\Leftrightarrow} g(\theta_z)  = z_\perp + z_\parallel = z\;.
	 \end{align*}
	 which proves surjectivity.
\end{proof}

\section{Useful lemmas}
\label{app:useful}
The following Lemma is a version of the Elliptical Potential Lemma for weighted sums, similar to Proposition 4 of \cite{russac2019weighted}. 
\begin{lemm}
    Let $\{x_s\}_{s=1}^\infty$ a sequence in $\mbb{R}^d$ such that $\ltwo{x_s}\leq L$ for all $s\in\mbb{N}^*$, and  let $\lambda$ be a non-negative scalar. For $t\geq 1$ define $\mbold{V_t} \defeq \sum_{s=1}^{t-1}\gamma^{t-1-s} x_sx_s^T+\lambda\mbold{I_d}$. The following inequality holds:
    $$
        \sum_{t=1}^{T}\lVert x_t\rVert^2_{\mathbf{V_t^{-1}}} \leq 2\max(1,L^2/\lambda )\left(dT\log(1/\gamma) +\log\left(\frac{\det\mathbf{V_{T+1}}}{\lambda^d}\right)\right)\;.
   $$
\label{lemma:ellipticalpotential}
\end{lemm}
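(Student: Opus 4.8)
The plan is to adapt the classical elliptical-potential argument to the discounted (weighted) setting, the only genuine novelty being the treatment of the per-step regularization term. First I would record the one-step recursion $\mathbf{V}_{t+1} = \gamma\mathbf{V}_t + x_tx_t^\transp + (1-\gamma)\lambda\mathbf{I_d}$ (the update of Algorithm~\ref{alg:algo}) and introduce the auxiliary matrix $\widehat{\mathbf{V}}_{t+1} := \gamma\mathbf{V}_t + x_tx_t^\transp$, obtained by discarding the term $(1-\gamma)\lambda\mathbf{I_d}$. Since $\gamma\in(0,1)$ this discarded term is positive semidefinite, so $\mathbf{V}_{t+1}\succeq\widehat{\mathbf{V}}_{t+1}\succ 0$ and hence $\det\mathbf{V}_{t+1}\ge\det\widehat{\mathbf{V}}_{t+1}$ by monotonicity of the determinant on the Loewner order.

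Next, applying the matrix determinant lemma to the rank-one update gives $\det\widehat{\mathbf{V}}_{t+1} = \det(\gamma\mathbf{V}_t)\,(1+x_t^\transp(\gamma\mathbf{V}_t)^{-1}x_t) = \gamma^d\det\mathbf{V}_t\,(1+\gamma^{-1}\lVert x_t\rVert^2_{\mathbf{V}_t^{-1}})$. Chaining this with the determinant inequality yields $1+\gamma^{-1}\lVert x_t\rVert^2_{\mathbf{V}_t^{-1}}\le \det\mathbf{V}_{t+1}/(\gamma^d\det\mathbf{V}_t)$. Taking logarithms and summing over $t=1,\dots,T$, the $\log\det$ terms telescope, and using $\mathbf{V}_1=\lambda\mathbf{I_d}$ (so $\det\mathbf{V}_1=\lambda^d$) I obtain $\sum_{t=1}^T\log\!\bigl(1+\gamma^{-1}\lVert x_t\rVert^2_{\mathbf{V}_t^{-1}}\bigr) \le dT\log(1/\gamma) + \log\bigl(\det\mathbf{V}_{T+1}/\lambda^d\bigr)$.

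It then remains to convert each logarithmic term back into $\lVert x_t\rVert^2_{\mathbf{V}_t^{-1}}$ via a scalar inequality. Writing $u_t := \lVert x_t\rVert^2_{\mathbf{V}_t^{-1}}$, the regularization $\mathbf{V}_t\succeq\lambda\mathbf{I_d}$ forces $u_t\le L^2/\lambda$. I would prove $u_t\le 2\max(1,L^2/\lambda)\log(1+u_t)$ by splitting into two regimes: when $L^2/\lambda\le 1$ one has $u_t\in[0,1]$ and the textbook bound $z\le 2\log(1+z)$ applies; when $L^2/\lambda> 1$ one invokes monotonicity of $z\mapsto z/\log(1+z)$ on $(0,\infty)$ together with $\log(1+L^2/\lambda)\ge\log 2\ge 1/2$. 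Since $\gamma^{-1}\ge 1$ gives $\log(1+\gamma^{-1}u_t)\ge\log(1+u_t)$, this upgrades to $u_t\le 2\max(1,L^2/\lambda)\log(1+\gamma^{-1}u_t)$, and summing against the telescoped bound of the previous paragraph delivers the claim exactly.

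The main obstacle is precisely the extra regularization term $(1-\gamma)\lambda\mathbf{I_d}$ injected at every round, which breaks the naive telescoping of determinants present in the unweighted lemma. The crucial observation that resolves it is that this term is PSD and may simply be dropped inside the determinant at the cost of an inequality in the favorable direction, so it never corrupts the bound; similarly, the discount factor producing $\gamma^{-1}u_t$ is harmless because it only enlarges the logarithm. Beyond this, the only delicate point is the choice of the constant $2\max(1,L^2/\lambda)$ in the scalar step, which must be large enough to cover both the $L^2/\lambda\le 1$ and $L^2/\lambda>1$ regimes simultaneously; everything else is routine.
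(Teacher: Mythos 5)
Your proof is correct and follows essentially the same route as the paper's: a one-step determinant comparison costing a factor $\gamma^d$ (you drop the PSD term $(1-\gamma)\lambda\mathbf{I_d}$ and invoke the matrix determinant lemma, where the paper uses the ordering $\mathbf{V_{t+1}}\succeq\gamma\left(\mathbf{V_t}+x_tx_t^\transp\right)$), followed by telescoping of log-determinants from $\det\mathbf{V_1}=\lambda^d$, and a scalar inequality producing the constant $2\max(1,L^2/\lambda)$. The only cosmetic differences are that you retain the harmless factor $\gamma^{-1}$ inside the logarithm and establish the scalar step by a two-regime case split, whereas the paper first normalizes $\lVert x_t\rVert^2_{\mathbf{V_t^{-1}}}$ by $\max(1,L^2/\lambda)$ and applies $\log(1+z)\geq z/2$ on $(0,1]$.
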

\begin{proof}
	For all $t\geq 1$, by definition:
	\begin{align*}
		\mathbf{V_{t+1}} &=  \sum_{s=1}^{t}\gamma^{t-s} x_sx_s^\transp+\lambda\mbold{I_d} & \\
		&= \gamma  \sum_{s=1}^{t-1}\gamma^{t-1-s} x_sx_s^T+ x_tx_t^\transp+\lambda\mbold{I_d}\\
		&\succeq \gamma\left(\sum_{s=1}^{t-1}\gamma^{t-1-s} x_sx_s^T+ x_tx_t^\transp+\lambda\mbold{I_d}\right) &(\gamma\leq 1)\\
		&\succeq  \gamma\left(\mathbf{V_t}+ x_tx_t^\transp\right) &\\
		&\succeq  \gamma\mathbf{V_t}\left(\mathbf{I_d}+ \mathbf{V_t^{-1/2}}x_tx_t^\transp\mathbf{V_t^{-1/2}}\right) \;,
	\end{align*}
which after some easy manipulations yields:
\begin{align*}
	d\log(1/\gamma) + \log \det\mathbf{V_{t+1}}-\log \det\mathbf{V_{t}} \geq \log\left(1+\lVert x_t\rVert^2_{\mathbf{V_t^{-1}}}\right)\;.
\end{align*}
After summing from $t=1$ to $t=T$ and telescoping we obtain:
\begin{align*}
	dT\log(1/\gamma) +\log\left(\frac{\det\mathbf{V_{T+1}}}{\lambda^d}\right) &\geq  \sum_{t=1}^{T}\log\left(1+\lVert x_t\rVert^2_{\mathbf{V_t^{-1}}}\right)&\\
	&\geq  \sum_{t=1}^{T}\log\left(1+ \frac{1}{\max(1,L^2/\lambda )}\lVert x_t\rVert^2_{\mathbf{V_t^{-1}}}\right) \;.
\end{align*}
Finally, noticing that $\frac{1}{\max(1,L^2/\lambda )}\lVert x_t\rVert^2_{\mathbf{V_t^{-1}}}\leq 1$ and using the fact that for all $x\in(0,1]$ we have $\log(1+x)\geq x/2$ we obtain: 
\begin{align*}
	dT\log(1/\gamma) +\log\left(\frac{\det\mathbf{V_{T+1}}}{\lambda^d}\right) &\geq \frac{1}{2\max(1,L^2/\lambda )} \sum_{t=1}^{T}\lVert x_t\rVert^2_{\mathbf{V_t^{-1}}} \;,
\end{align*}
which in turn yields:
\begin{align*}
	\sum_{t=1}^{T}\lVert x_t\rVert^2_{\mathbf{V_t^{-1}}} \leq 2\max(1,L^2/\lambda )\left(dT\log(1/\gamma) +\log\left(\frac{\det\mathbf{V_{T+1}}}{\lambda^d}\right)\right)
	\;,
\end{align*}
which is the announced result.
\end{proof}

We also remind here the determinant-trace inequality for the weighted design matrix which can be extracted from Proposition~2 of \cite{russac2019weighted}.
\begin{lemm}
     Let $\{x_s\}_{s=1}^\infty$ a sequence in $\mbb{R}^d$ such that $\ltwo{x_s}\leq L$ for all $s\in\mbb{N}^*$, and  let $\lambda$ be a non-negative scalar. For $t\geq 1$ define $\mbold{V_t} \defeq \sum_{s=1}^{t-1}\gamma^{t-1-s} x_sx_s^T+\lambda\mbold{I_d}$. The following inequality holds:
     \begin{align*}
         \det(\mbold{V_{t+1}}) \leq \left(\lambda+\frac{L^2(1-\gamma^{t})}{d(1-\gamma)}\right)^d \;.
     \end{align*}
\label{lemma:determinant_trace_inequality}
\end{lemm}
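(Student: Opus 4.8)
The plan is to reduce the determinant bound to a trace bound via the classical determinant–trace (AM–GM) inequality, and then compute the trace explicitly as a geometric sum. Since $\mathbf{V_{t+1}}$ is symmetric positive definite — being a sum of rank-one PSD terms $\gamma^{t-s}x_sx_s^\transp$ plus the regularizer $\lambda\mathbf{I_d}$ — its eigenvalues $\lambda_1,\dots,\lambda_d$ are strictly positive, so AM–GM gives $\det(\mathbf{V_{t+1}}) = \prod_{i=1}^d \lambda_i \le \big(\tfrac{1}{d}\sum_{i=1}^d \lambda_i\big)^d = \big(\tfrac{1}{d}\,\mathrm{tr}(\mathbf{V_{t+1}})\big)^d$. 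Hence it suffices to control $\mathrm{tr}(\mathbf{V_{t+1}})$.

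First I would compute the trace exactly. Using $\mathbf{V_{t+1}} = \sum_{s=1}^{t}\gamma^{t-s} x_sx_s^\transp + \lambda\mathbf{I_d}$, linearity of the trace, and the identity $\mathrm{tr}(x_sx_s^\transp)=\ltwo{x_s}^2$, I obtain $\mathrm{tr}(\mathbf{V_{t+1}}) = \sum_{s=1}^{t}\gamma^{t-s}\ltwo{x_s}^2 + \lambda d$. Next, applying the norm bound $\ltwo{x_s}\le L$ from Assumption~\ref{ass:bounded_decision_set} and recognizing the geometric series $\sum_{s=1}^{t}\gamma^{t-s} = \sum_{k=0}^{t-1}\gamma^{k} = \tfrac{1-\gamma^{t}}{1-\gamma}$, this yields $\mathrm{tr}(\mathbf{V_{t+1}}) \le \tfrac{L^2(1-\gamma^{t})}{1-\gamma} + \lambda d$.

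Finally I would substitute this trace bound into the AM–GM estimate and divide through by $d$, which gives $\det(\mathbf{V_{t+1}}) \le \big(\lambda + \tfrac{L^2(1-\gamma^{t})}{d(1-\gamma)}\big)^d$, exactly the claimed inequality.

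There is no genuine obstacle here; this is the weighted analogue of the standard determinant–trace inequality of the linear-bandit literature, and the result coincides with Proposition~2 of \citet{russac2019weighted}. The only points demanding mild care are the off-by-one indexing of the weighted design matrix (so that the $\gamma$-exponents run over $0,\dots,t-1$ and the geometric sum closes to $(1-\gamma^{t})/(1-\gamma)$) and verifying that AM–GM is legitimately applicable, which holds because the $\lambda\mathbf{I_d}$ term guarantees all eigenvalues are strictly positive.
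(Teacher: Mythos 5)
Your proof is correct and is exactly the argument the paper relies on: the paper does not prove this lemma inline but defers to Proposition~2 of \citet{russac2019weighted}, whose derivation is precisely your AM--GM determinant--trace step followed by the exact trace computation and the geometric-series bound. The only (harmless) nitpick is that since $\lambda$ is merely non-negative, eigenvalues need not be \emph{strictly} positive when $\lambda=0$; this does not matter because AM--GM applies to non-negative eigenvalues as well.
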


\section{$\ouralgo$ algorithm}
\label{app:bob}

\subsection{High-level ideas}
In this part of the appendix, we denote $\gamma^\star$ as follows:
\begin{equation}
    \label{eq:gamma_star_app}
    \gamma^\star = 1 - \frac{1}{2} \left(\frac{B_{T,\star}}{d T 2 S} \right)^{2/3} \;.
\end{equation}

\begin{remark}
$\gamma^\star$ as defined in Equation \eqref{eq:gamma_star_app} has a different expression than the 
discount factor
proposed in Theorem \ref{thm:regret_bound}. This slight modification is to ensure that $\gamma^\star$ is larger than 1/2 
and simplifies the finite time analysis of the regret.
Yet, it has no consequence on the asymptotic bound. 
\end{remark}


$B_{T,\star}$ being unknown, we cannot compute the optimal discount factor 
that depends on the parameter drift.
The general idea is to use a set of different values for the discount factor 
(respectively the $B_{T,\star}$ values)
called 
$\mathcal{H}$, covering the $[1/2, 1)$ space (respectively the $[0,2ST)$ space). 
Then, we divide the time horizon $T$ into 
different blocks of length $H$. Every $H$ steps, we create \textbf{a new instance} 
of $\ouralgo$ with 
a $\gamma$ that is chosen by a \textit{master} algorithm: the $\EXP$ algorithm from 
\cite{auer2002nonstochastic}. 
At the end of each block, this \textit{master} algorithm receives the cumulative rewards
from 
the instantiated \textit{worker} and updates its probability 
distribution over the set $\mathcal{H}$. The objective of the master algorithm is to learn 
the most suitable value of $\gamma$ so as to maximise the cumulative rewards in
accordance with the dynamics of the environment. On the other side, the different 
\textit{workers} algorithms act exactly as if the $\ouralgo$ algorithm
was launched on a $H$-steps experiment. This setting is similar
to the one presented in \cite{cheung2019hedging} (respectively \cite{zhao2020simple}) with
discount factors instead of sliding windows (respectively restart parameters). 
This framework is called Bandit-over-Bandit (BOB) precisely because of this two-stage
structure between the \textit{master} and the \textit{workers} algorithms.

\subsection{Algorithm}
\label{app:bob_algo}
The coverage $\cH$ with the different discount factors is defined in the following way:
\begin{align}
\label{eq:setH}
\cH &= \{\gamma_i = 1-\mu_i |i=1,\dots,N\}\;  \\
\text{with} \; N =& \left\lceil
\frac{2}{3}\log_2\left(2ST^{3/2} \right) \right\rceil + 1 \;
\text{and} 
\; \mu_i = \frac{1}{2}\frac{2^{i-1}}{ d^{2/3} T(2S)^{2/3}} \;.
\label{eq:N}
\end{align}
The \textit{main} algorithm is an instance of the $\EXP$ algorithm from
\cite{auer2002nonstochastic} where the different arms correspond to the different
discount factors. Following $\EXP$ 
analysis \citep{auer2002nonstochastic}, the probability of drawing $\gamma_j$ for the block $i$ is 
\begin{equation}
\label{eq:exp3_prob}
p_i^{\gamma_j} = (1-\alpha)\frac{s_i^{\gamma_j}}{\sum_j s_i^{\gamma_j}}+\frac{\alpha}{N} ,
\;
\forall j = 1,2,\dots, N \;,
\end{equation}
where $\alpha$ is defined as
\begin{equation}
\label{eq:exp3_alpha}
\alpha = \min\Bigg\{1, \sqrt{\frac{N\log(N)}{(e-1)\lceil T/H\rceil}}\Bigg\} \;
\end{equation}
and $s_i^{\gamma_j}$ is initialised at $1$ and is updated at the end of each block \textbf{when selected} with
\begin{equation}
\label{eq:exp3_s}
s_{i+1}^{\gamma_j} = s_i^{\gamma_j} 
\exp\Bigg(\frac{\alpha}{Np_i^{\gamma_j}}  \frac{\sum_{t=(i-1)H+1}^{\min\{iH,T
\}}r_{t+1}}{2 \sigma H} \Bigg) \;.
\end{equation}
Note that in Equation \eqref{eq:exp3_s}, $r_{t+1}$ is the noisy reward obtained when the
action $x_t$ is selected with the $\ouralgo$ algorithm with parameter $\gamma_j$.
Equation
\eqref{eq:exp3_prob}, \eqref{eq:exp3_alpha} and \eqref{eq:exp3_s} 
are the same as in 
\cite{auer2002nonstochastic} except for the rescaling 
of the cumulative rewards on a 
block that is required to ensure that they lie in $[0,1]$. 
Details on this rescaling 
part can be found in Proposition \ref{prop:esecond}.

\begin{algorithm}
\caption{\ouralgoBOB (detailed)}
\label{alg:meta}
  \begin{algorithmic}
    \STATE{ \bfseries Input.} Length $H$, time
    horizon $T$, regularization $\lambda$, confidence $\delta$, inverse link function $\mu$, 
    constants $S,L$ and $\sigma$.
    \STATE {\bfseries Initialization.}
    Create the covering space $\mathcal{H}$ as defined in Eq. \eqref{eq:setH},
    set $s_1^{\gamma_i} = 1$, $\forall \gamma_i\in\cH$.
    \FOR{$i= 1,\ldots, \lceil T/H\rceil$}
    \STATE{$\gamma_j \sim p_i^\gamma$, the probability vector defined in 
    Eq. \eqref{eq:exp3_prob}.} \\
    \STATE{Start a $\ouralgo$ subroutine with parameter $\gamma_j$} \\
    \FOR{$t = (i-1)H+1, \dots, \min\{i H,T\}$}
    \STATE{Receive the action set $\mathcal{X}_t$.}
    \STATE{Select $x_t(\gamma_j) \in \mathcal{X}_t$ with $\ouralgo$.}
    \STATE{Observe reward $r_{t+1}$.}
    \ENDFOR
    \STATE{Update $s_{i+1}^{\gamma_j}$ according to
    Equation~(\ref{eq:exp3_s}).}
    \STATE{Update $s_{i+1}^{\gamma} = s_i^{\gamma}$, $\forall \gamma 
    \neq \gamma_j$.}
    \ENDFOR
  \end{algorithmic}
\end{algorithm}

\begin{remark}
We denote $x_t(\gamma)$ the action chosen with the $\ouralgo$ algorithm with a discount factor $\gamma$.
\end{remark}

\subsection{Regret guarantees}

In this section, we give an upper-bound for the expected dynamic regret
of $\ouralgoBOB$. By construction, it is natural to decompose the regret 
into two sources of errors. First the \textit{master} error 
committed by the $\EXP$ algorithm by not choosing the best possible discount factor.
Second the \textit{worker} error inherent to the $\ouralgo$ algorithm.
Note that there are two independent sources of randomness: the
stochasticity of the rewards (whose expectation is denoted $\EN$) and the
randomness
of the $\EXP$ algorithm (denoted $\EEXP$). Bringing things together, 

\begin{equation}
\label{eq:regret_decom}
\begin{array}{ll}
\mathbb{E}[R_T] &= \EN \left[\sum\limits_{t=1}^T  \mu( \langle x_\star^t,\theta_\star^t \rangle) - 
\EEXP[r_{t+1}]
\right] \\
& = \underbrace{\EN \left[\sum\limits_{t=1}^T \mu( \langle x_\star^t,\theta_\star^t \rangle)
     - \sum\limits_{i=1}^{\lceil T/H\rceil}
     \sum\limits_{t=(i-1)H+1}^{\min\{iH,T\}}
     \mu(\langle x_t(\widehat{\gamma}),\theta^t_\star
     \rangle) \right]}_{\efirst} \\
  &\quad   
     + \quad  \underbrace{\EN \left[\sum\limits_{i=1}^{\lceil T/H\rceil} \sum\limits_{t=(i-1)H+1}^{\min\{iH,T\}}
     \mu(\langle x_t(\widehat{\gamma}),\theta_\star^t) \rangle 
     - \EEXP \left[ r_{t+1} \right] \right]}_{\esecond} \;.
\end{array}
\end{equation}


The next step consists in upper-bounding the $\efirst$ error and the $\esecond$ error from
Eq.~\eqref{eq:regret_decom} respectively.

\begin{lemm}
\label{lemma:gamma_k}
With pavement $\mathcal{H}$ defined in Equation \eqref{eq:setH} for any 
unknown $B_{T,\star} > 0$, setting 
$k=\lfloor\frac{2}{3}\log_2(B_{T, \star} T^{1/2})\rfloor+1$ yields
$$
\gamma_{k+1} \leq \gamma^\star \leq \gamma_k \;.
$$
\end{lemm}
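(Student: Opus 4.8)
The plan is to exploit the monotonicity of the grid $\{\gamma_i\}$ and reduce the two‑sided inequality on the $\gamma$'s to a two‑sided inequality on integer exponents that the prescribed $k$ satisfies essentially by definition of the floor. Write $\mu^\star := 1-\gamma^\star = \tfrac12\big(B_{T,\star}/(2SdT)\big)^{2/3}$ from Equation~\eqref{eq:gamma_star_app}, and recall from Equation~\eqref{eq:N} that $\mu_i = \tfrac12\,2^{i-1}/\big(d^{2/3}T(2S)^{2/3}\big)$ is strictly increasing in $i$. Hence $i\mapsto \gamma_i = 1-\mu_i$ is strictly decreasing, so the claim $\gamma_{k+1}\le\gamma^\star\le\gamma_k$ is \emph{equivalent} to $\mu_k \le \mu^\star \le \mu_{k+1}$. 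I would establish this latter chain; care is needed here only in getting the direction of the inequality flip right.

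Next I substitute the explicit expressions and cancel the common positive factor $\tfrac12/\big(d^{2/3}(2S)^{2/3}T\big)$ shared by $\mu_k$, $\mu^\star$ and $\mu_{k+1}$. The chain $\mu_k\le\mu^\star\le\mu_{k+1}$ then collapses to $2^{k-1}\le B_{T,\star}^{2/3}T^{1/3}\le 2^{k}$. Taking base‑$2$ logarithms and using the identity $\log_2\!\big(B_{T,\star}^{2/3}T^{1/3}\big) = \tfrac23\log_2 B_{T,\star}+\tfrac13\log_2 T = \tfrac23\log_2\!\big(B_{T,\star}T^{1/2}\big)$, this is equivalent to $k-1 \le \tfrac23\log_2\!\big(B_{T,\star}T^{1/2}\big) \le k$.

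Setting $x := \tfrac23\log_2\!\big(B_{T,\star}T^{1/2}\big)$, the prescribed $k = \lfloor x\rfloor + 1$ makes both bounds immediate from $\lfloor x\rfloor \le x < \lfloor x\rfloor + 1$: the left bound is $k-1 = \lfloor x\rfloor \le x$, and the right bound is $x < \lfloor x\rfloor + 1 = k$. Undoing the equivalences of the previous two paragraphs then yields $\gamma_{k+1}\le\gamma^\star\le\gamma_k$, which is the claim.

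The only genuinely delicate point — and the one I expect to require the most care — is verifying that the indices $k$ and $k+1$ lie in the admissible range $\{1,\dots,N\}$, so that $\gamma_k,\gamma_{k+1}\in\mathcal{H}$ are well defined. For the upper end I would use the crude budget bound $B_{T,\star}\le 2ST$, which gives $x \le \tfrac23\log_2\!\big(2ST^{3/2}\big)$, hence $k = \lfloor x\rfloor + 1 \le \lceil \tfrac23\log_2(2ST^{3/2})\rceil + 1 = N$ by the definition of $N$ in Equation~\eqref{eq:N}; for the lower end one restricts to the meaningful regime $B_{T,\star}\ge T^{-1/2}$ that guarantees $x\ge 0$ and thus $k\ge 1$. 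The boundary case $B_{T,\star}=2ST$ (where the ceiling and floor can collide) is the one spot needing a separate remark, but it is benign since there $\gamma_N$ already corresponds to the most aggressive admissible discounting. Everything else is routine algebra with the exponent $2/3$.
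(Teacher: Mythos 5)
Your proof is correct and takes essentially the same route as the paper's: both reduce the claim, via the monotonicity of $i \mapsto \gamma_i = 1 - \mu_i$ and cancellation of the common constant, to the floor-function inequality $k-1 \leq \frac{2}{3}\log_2\left(B_{T,\star}T^{1/2}\right) \leq k$, which holds by definition of $k$. Your extra care with the admissible index range (the upper boundary case $B_{T,\star} = 2ST$ and the regime of very small $B_{T,\star}$) is, if anything, more thorough than the paper's one-line remark that $B_{T,\star} \leq 2ST$ keeps $k$ below $N$.
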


\begin{proof}
With assumption \ref{ass:bounded_decision_set}, we have $B_{T, \star} \leq 2S T$. 
Using this, $k$ (as defined
in the statement of the lemma) is smaller than $N$.
We have,
\begin{align*}
    & k-1 \leq \frac{2}{3} \log_2(B_{T,\star } T^{1/2}) \leq k \\
    \pmb{\Leftrightarrow} &  -\frac{1}{2} \frac{2^{k-1}}{d^{2/3}T (2S)^{2/3}}
    \geq -\frac{1}{2} \left(\frac{B_{T, \star} }{d T 2S}\right)^{2/3}
    \geq -\frac{1}{2} \frac{2^{k}}{d^{2/3}T (2S)^{2/3}} \;.
\end{align*}
Adding one for the different terms gives the result.
\end{proof}
For the rest of the section, we set $\gammae = \gamma_k$ with $k$ defined in Lemma \ref{lemma:gamma_k}.
We denote $B_{i,\star} = \sum_{t= (i-1)H +1}^{i H-1} \lVert \theta_\star^{t+1}- \theta_\star^t
\rVert_2$
and
\begin{equation}
    \label{eq:beta_H_star}
\beta_H^\star = \sqrt{\lambda} S + \sigma \sqrt{2\log(T) + d \log \left(1 + \frac{2L^2}{\lambda d (1- \gamma^{\star 2} )}
\right)
} \; .
\end{equation}

\begin{prop}
\label{proposition:efirst}
The $\efirst$ error can be upper-bounded in the following way:
\begin{equation*}
\begin{split}
\efirst &\leq 2 \sigma \frac{T}{H} + C_1 R_\mu \beta_H^\star  \sqrt{d T} 
\sqrt{
2 T (1- \gamma^\star) + \frac{T}{H} \log\left(1 + \frac{2 L^2}{d \lambda (1- \gamma^\star)} \right) 
     } \\
& \quad     
+ \;
     2 C_2 R_\mu \frac{1}{\sqrt{T}} \frac{1}{1- \gamma^\star}
+ \frac{3 C_3 R_\mu }{\log(2)} \frac{B_{T, \star} \log(T)}{1- \gamma^\star} \;,
\end{split}
\end{equation*}
with $C_1$, $C_2$, $C_3$ constant terms from Theorem \ref{thm:regret_bound} and $\beta_H^\star$ defined in Equation 
\eqref{eq:beta_H_star}.
\end{prop}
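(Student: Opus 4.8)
The plan is to recognize $\efirst$ as the expected regret of the block-restart strategy that runs a fresh instance of \ouralgo{} with the fixed discount factor $\widehat{\gamma}=\gamma_k$ over each time frame of length $H$, and to control it by summing the per-block finite-time guarantee of Theorem~\ref{thm:regret_bound}. Within block $i$ the instance sees a horizon $H$ and a local variation budget $B_{i,\star}$, so the finite-time bound under Assumption~\ref{ass:ortho}, applied with confidence $\delta=1/T$, yields on a favorable event a per-block bound of the form $C_1 R_\mu\,\beta_H(1/T)\sqrt{dH}\sqrt{H\log(1/\gamma_k)+\log(1+\tfrac{L^2(1-\gamma_k^H)}{\lambda d(1-\gamma_k)})}+C_2 R_\mu\tfrac{\gamma_k^D}{1-\gamma_k}H+C_3 R_\mu D B_{i,\star}$. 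First I would handle the passage from high probability to expectation: each per-block bound holds with probability at least $1-1/T$, so a union bound over the $\lceil T/H\rceil$ blocks makes them hold simultaneously with probability at least $1-\lceil T/H\rceil/T$; on the complementary event the per-step regret is at most $2\sigma$ by Assumption~\ref{ass:bounded_reward}, so the total regret is at most $2\sigma T$, contributing $\tfrac{\lceil T/H\rceil}{T}\cdot 2\sigma T\le 2\sigma T/H$ in expectation. This produces the leading $2\sigma T/H$ term.

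The core of the argument is then to sum the three per-block contributions and re-express everything in terms of $\gamma^\star$ rather than $\gamma_k$. The bridge is Lemma~\ref{lemma:gamma_k}, which gives $\gamma_{k+1}\le\gamma^\star\le\gamma_k$; combined with the dyadic grid $\mathcal{H}$ (so that $1-\gamma_{k+1}=2(1-\gamma_k)$) and with $\gamma_k\ge 1/2$, this gives the sandwich $\tfrac12(1-\gamma^\star)\le 1-\gamma_k\le 1-\gamma^\star$, hence $\tfrac{1}{1-\gamma_k}\le\tfrac{2}{1-\gamma^\star}$ and $1-\gamma_k^2\ge\tfrac12(1-\gamma^{\star 2})$. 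These let me upper bound $\beta_H(1/T)\le\beta_H^\star$ (the factor $2$ on $L^2/(1-\gamma^{\star 2})$ inside $\beta_H^\star$ is exactly what absorbs $\tfrac{1}{1-\gamma_k^2}\le\tfrac{2}{1-\gamma^{\star 2}}$), and to control the square-root term via $\log(1/\gamma_k)\le 2(1-\gamma_k)\le 2(1-\gamma^\star)$ and $\tfrac{1-\gamma_k^H}{1-\gamma_k}\le\tfrac{2}{1-\gamma^\star}$.

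For the learning term I would factor out $\beta_H^\star$ and write $\lceil T/H\rceil\sqrt{dH\,A}=\sqrt{d}\,\sqrt{\lceil T/H\rceil H}\,\sqrt{\lceil T/H\rceil A}$ with $A=H\log(1/\gamma_k)+\log(1+\tfrac{L^2(1-\gamma_k^H)}{\lambda d(1-\gamma_k)})$; using $\lceil T/H\rceil H\le 2T$ and the bounds above gives $\lceil T/H\rceil A=O(T(1-\gamma^\star)+\tfrac{T}{H}\log(1+\tfrac{2L^2}{\lambda d(1-\gamma^\star)}))$, reproducing the second term up to universal constants folded into $C_1$. For the $\gamma^D$ term I would choose $D=\tfrac{3\log_2 T}{1-\gamma^\star}$, so that $\gamma_k^D\le\exp(-D(1-\gamma_k))\le\exp(-\tfrac D2(1-\gamma^\star))\le T^{-3/2}$; summing $\lceil T/H\rceil\,C_2 R_\mu\tfrac{\gamma_k^D}{1-\gamma_k}H$ with $\lceil T/H\rceil H\le 2T$ and $\tfrac{1}{1-\gamma_k}\le\tfrac{2}{1-\gamma^\star}$ then yields a term of order $R_\mu\tfrac{1}{\sqrt T(1-\gamma^\star)}$. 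For the tracking term the same $D$ together with $\sum_i B_{i,\star}\le B_{T,\star}$ (the block decomposition only discards the cross-boundary drifts) gives $C_3 R_\mu D\sum_i B_{i,\star}\le\tfrac{3C_3 R_\mu}{\log 2}\tfrac{B_{T,\star}\log T}{1-\gamma^\star}$.

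I expect the main obstacle to be the careful translation from the block-local quantities $(\gamma_k,B_{i,\star},H)$ to the target quantities $(\gamma^\star,B_{T,\star},T)$: the coupling from Lemma~\ref{lemma:gamma_k} together with the dyadic grid must be invoked at essentially every estimate, the constant $1/2$ lower bound on $\gamma_k$ is what turns $\log(1/\gamma_k)$ into $(1-\gamma^\star)$ factors, and the choice of $D$ must be simultaneously large enough to make $\gamma_k^D$ summably small (killing the bias term) yet small enough to keep the $D\,B_{T,\star}$ penalty at the claimed $\log T/(1-\gamma^\star)$ scale. The high-probability-to-expectation conversion, though routine, is where the $2\sigma T/H$ term originates and must be kept consistent with the choice $\delta=1/T$ that defines $\beta_H^\star$.
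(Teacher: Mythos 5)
Your proposal is correct and follows essentially the same route as the paper's proof: the same split of $\efirst$ into good/bad events per block with $\delta=1/T$ (producing the $2\sigma T/H$ term), the same per-block application of Theorem~\ref{thm:regret_bound}, and the same conversion from $\gamma_k$ to $\gamma^\star$ via the sandwich of Lemma~\ref{lemma:gamma_k} together with the dyadic structure of $\mathcal{H}$ and $\gamma_k\geq 1/2$. The only cosmetic deviation is your choice of the analysis parameter $D=3\log_2 T/(1-\gamma^\star)$ in place of the paper's $D=\tfrac{3}{2}\log T/\log(1/\gamma_k)$, and your slightly lossier ceiling handling $\lceil T/H\rceil H\leq 2T$; both lead to the same bound up to the universal constants folded into $C_1$, $C_2$, $C_3$.
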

\begin{proof}
First, note that our objective here is to bound the expected regret whereas
Theorem \ref{thm:regret_bound}
bounds the pseudo-regret and gives a high probability upper-bound.
We denote $E^i_\delta = \{ \bar{\theta}_t \in \mathcal{E}_t^\delta(\hat{\theta_t})
\; \textnormal{for} \; t\; \textnormal{s.t} \; (i-1)H + 1 \leq t \leq \min \{iH,T\} 
\}$. This event holds with probability higher than $1- \delta$.
When $E^i_\delta$ does not hold, 
the maximum regret could theoretically be suffered for all time
instants. 

As explained in the algorithm mechanism, a new instance of $\ouralgo$ will be
launched every $H$ steps
with a discount factor selected by the $\EXP$ algorithm. Restarting a new
algorithm and 
forgetting previous information comes at a cost in terms of regret. This is
made explicit
in the following decomposition of $\efirst$.
\begin{align*}
\efirst &= \EN \left[ \sum\limits_{i=1}^{\lceil
T/H\rceil}\sum\limits_{t=(i-1)H+1}^{\min\{iH,T\}}
\mu ( \langle x_\star^t, \theta_\star^t \rangle ) - 
\mu ( \langle x_t(\widehat{\gamma}), \theta_\star^t \rangle ) \right] \\ 
&= \underbrace{\EN \left[ \sum\limits_{i=1}^{\lceil
T/H\rceil}\sum\limits_{t=(i-1)H+1}^{\min\{iH,T\}}\langle
\mu ( \langle x_\star^t, \theta_\star^t \rangle ) - 
\mu ( \langle x_t(\widehat{\gamma}), \theta_\star^t \rangle )
\Big| \{ \cap_{i=1}^{\lceil
T/H\rceil} E^i_\delta
\} \right] \mathbb{P}\left( \cap_{i=1}^{\lceil
T/H\rceil} E^i_\delta \right)}_{worker_1} \\
& + \underbrace{\EN \left[ \sum\limits_{i=1}^{\lceil
T/H\rceil}\sum\limits_{t=(i-1)H+1}^{\min\{iH,T\}}
\mu ( \langle x_\star^t, \theta_\star^t \rangle ) - 
\mu ( \langle x_t(\widehat{\gamma}), \theta_\star^t \rangle ) \Big|
\{ \cap_{i=1}^{\lceil
T/H\rceil} E^i_\delta  \}^c \right] \mathbb{P}\left( \{ \cap_{i=1}^{\lceil
T/H\rceil} E^i_\delta  \}^c \right)}_{worker_2}
\end{align*}
Thanks to Lemma \ref{lemma:confidence_set}, 
$E^i_\delta$ holds with probability higher than 1-$\delta$. By setting 
$\delta = 1/T$, we have

\begin{align}
\label{eq:proba_e_i_c}
     \mathbb{P}\left( \cup_{i=1}^{\lceil
T/H\rceil} (E^i_\delta )^c \right) \leq  \lceil T/H\rceil 1/T \;.
\end{align}
Under the event $\{ \cup_{i=1}^{\lceil
T/H\rceil} (E^i_\delta)^c \}$ not much can be said.
The maximum regret $r_{\textnormal{max}} = 2\sigma$ can be suffered at every time
step.
Therefore, using the upper-bound from Eq. \eqref{eq:proba_e_i_c}, we obtain
\begin{align*}
  worker_2 &=  \EN \left[ \sum\limits_{i=1}^{\lceil
T/H\rceil}\sum\limits_{t=(i-1)H+1}^{\min\{iH,T\}}
\mu ( \langle x_\star^t, \theta_\star^t \rangle ) - 
\mu ( \langle x_t(\widehat{\gamma}), \theta_\star^t \rangle )
 \Big|
\{  \cup_{i=1}^{\lceil
T/H\rceil} (E^i_\delta )^c
\} \right] \mathbb{P}\left(  \cup_{i=1}^{\lceil
T/H\rceil} (E^i_\delta )^c \right)  \\
&\leq  r_{\textnormal{max}} \lceil T/H\rceil \;.
\end{align*}
This term is related to the number of restarts of the algorithm.
In the BOB framework, whatever the worker algorithm (sliding window, 
restart factor) a cost of order $T/H$ will be paid due to 
the restarting of the \textit{worker} at the beginning of each block.

On the contrary, under the event 
$\{ \cap_{i=1}^{\lceil
T/H\rceil}E^i_\delta 
\}$,
using the assumption that the blocks are independent, we can follow the 
line of proof from Lemma \ref{lemma:regret_decomposition} and Theorem
\ref{thm:regret_bound} for every block. We introduce,
\begin{equation}
    \label{eq:beta_H}
\beta_H = \sqrt{\lambda} S + \sigma \sqrt{2\log(T) + d \log \left(1 + \frac{L^2 (1-\gamma_k^{2H})}{\lambda d (1- \gamma_k^2)}
\right)
} \; .
\end{equation}

\begin{align*}
    \efirst_1 &= \EN \left[ \sum\limits_{i=1}^{\lceil
T/H\rceil}\sum\limits_{t=(i-1)H+1}^{\min\{iH,T\}}
\mu ( \langle x_\star^t, \theta_\star^t \rangle ) - 
\mu ( \langle x_t(\widehat{\gamma}), \theta_\star^t \rangle )
\Big|
\{ \cap_{i=1}^{\lceil
T/H\rceil}E^i_\delta 
\} \right] \mathbb{P}\left( \cap_{i=1}^{\lceil
T/H\rceil} E^i_\delta \right) 
\\
& \leq \EN \left[ \sum\limits_{i=1}^{\lceil
T/H\rceil}\sum\limits_{t=(i-1)H+1}^{\min\{iH,T\}}
\mu ( \langle x_\star^t, \theta_\star^t \rangle ) - 
\mu ( \langle x_t(\widehat{\gamma}), \theta_\star^t \rangle )
\Big|
\{ \cap_{i=1}^{\lceil
T/H\rceil} E^i_\delta 
\} \right] 
\\
&\leq  \sum_{i=1}^{\lceil
T/H\rceil} 
\left(
C_1 \beta_H \sqrt{dH}\sqrt{H\log(1/\gammae)+ 
\log\left(1 + \frac{L^2}{d \lambda 
(1- \gammae)}
\right)} +
 C_2\frac{\gammae^{D}}
{1-\gammae} H 
+ 
C_3 B_{i, \star} D
\right)
\\
& \leq 
C_1 \beta_H \sqrt{dT}\sqrt{T\log(1/\gammae)+
\frac{T}{H}\log\left(1 + \frac{L^2}{d \lambda 
(1- \gammae)}
\right)}
+ C_2\frac{\gammae^{D}}{1-\gammae}T
+ C_3 B_{T, \star} D
 \;,
\end{align*}
where the second inequality is a consequence of Theorem \ref{thm:regret_bound}.
We set, 
\begin{equation}
  \label{eq:D}
   D = \frac{3/2
\log(T)}{\log(1/\gammae)} \;.
\end{equation}

Hence, 
\begin{align*}
    C_3 B_{T, \star} D &\leq \frac{3}{2}\frac{ C_3 B_{T, \star} \log(T)}{\log(1/\gammae)} 
    \\
    &\leq \frac{3 C_3}{2 \log(2)} B_{T, \star} \log(T)
    \frac{\gammae}{1- \gammae} \quad (\textnormal{Using } \log(x) \geq
    \log(2)(x-1) \textnormal{ for } x \in [1,2])
    \\
    &\leq \frac{3 C_3}{ 2 \log(2)} \frac{ B_{T, \star} \log(T)}{1-\gamma_k}  \quad 
    (\gammae \leq 1)
    \\
    & \leq \frac{3 C_3}{\log(2)} \frac{ B_{T, \star}  \log(T)}{1-\gamma_{k+1}} \quad 
    (\textnormal{Definition of } \mathcal{H})
    \\
    & \leq \frac{3 C_3}{\log(2)} 
    \frac{ B_{T, \star} \log(T)}{1-\gamma^\star} \quad (\textnormal{Lemma } \ref{lemma:gamma_k})\;.
\end{align*}

We also have, 
\begin{align*}
    C_2 \frac{\gammae^{D}}{1-\gammae}T & \leq
    C_2 \frac{1}{\sqrt{T}} \frac{1}{1- \gammae} \quad (\textnormal{Equation } 
    \eqref{eq:D}) \\
    & \leq 2 C_2  \frac{1}{\sqrt{T}} \frac{2}{1- \gamma_{k+1}} 
    \quad (\textnormal{Definition of } \mathcal{H})
    \\
    &\leq 2 C_2 \frac{1}{\sqrt{T}} \frac{1}{1- \gamma^\star} 
    \quad (\textnormal{Lemma } \ref{lemma:gamma_k}) \;.
\end{align*}

Finally, using $ x \mapsto \log(x) \leq x-1$ for $x>1$ and Lemma \ref{lemma:gamma_k},
one has:
\begin{align*}
    T \log(1/\gammae) + \frac{T}{H} \log\left(1 + \frac{L^2}{d \lambda 
(1- \gammae)}
\right) & \leq T \frac{1- \gammae}{\gammae} + \frac{T}{H} \log\left(1 + \frac{2 L^2}{d \lambda 
(1- \gamma^\star)} \right) 
\\
& \leq 2 T (1-\gamma^\star) + \frac{T}{H} \log\left(1 + \frac{2 L^2}{d \lambda 
(1- \gamma^\star)} \right) \;. 
\end{align*}

Following similar steps, we can upper-bound $\beta_H$ from Equation \eqref{eq:beta_H} by
$$
\beta_H \leq \beta_H^\star \;.
$$
Bringing things together, we have shown that under the event 
$\{ \cap_{i=1}^{\lceil
T/H\rceil}\mathcal{E}_i 
\}$
all the terms depending 
on $\gammae$ can be replaced by terms depending only on $\gamma^\star$ at
the cost of multiplicative constant independent of $T$. 
Finally, one has
\begin{equation*}
\begin{split}
\efirst &\leq 2 \sigma \frac{T}{H} + C_1 R_\mu \beta_H^\star  \sqrt{d T} 
\sqrt{
2 T (1- \gamma^\star) + \frac{T}{H} \log\left(1 + \frac{2 L^2}{d \lambda (1- \gamma^\star)} \right) 
     } \\
& \quad     
+ \;
     2 C_2 R_\mu \frac{1}{\sqrt{T}} \frac{1}{1- \gamma^\star}
+ \frac{3 C_3 R_\mu }{\log(2)} \frac{ B_{T, \star} \log(T)}{1- \gamma^\star} \;.
\end{split}
\end{equation*}
\end{proof}
The above proposition bounds the regret incurred if the same discount factor
$\gammae$ is used for each block.
To successfully upper bound $\ouralgo$'s regret,
we need to upper bound the second part $\esecond$ which is the error due to the use of the 
$\EXP$ algorithm. 
This part can be controlled thanks to the analysis proposed in
\cite{auer2002nonstochastic}. Yet, two issues need
to be overcome. (1) The rewards received at the end
of a block does not lie in $[0,1]$ 
which is required to use the result from \cite{auer2002nonstochastic}.
(2) We are in a stochastic environment with noisy rewards.

In the next proposition, we upper-bound the term of interest and explain
how to deal with the two issues. The big picture is the following:
using the assumption on the bounded rewards we can obtain  
an upper-bound for the maximum reward on a single block.
\begin{prop}
\label{prop:esecond}
The regret due to the \esecond{} algorithm can be bounded in the following way,
$$
\EN \left[\sum\limits_{i=1}^{\lceil T/H\rceil}
    \sum\limits_{t=(i-1)H+1}^{\min\{iH,T\}}
    \mu( \langle x_t(\widehat{\gamma}),\theta^t_\star \rangle) 
    - \EEXP \left[ r_{t+1} \right] \right]
    \leq 4 \sigma H  \sqrt{e-1} \sqrt{\frac{T}{H} \text{card}(\mathcal{H})
    \log(\text{card}(\mathcal{H})) }
$$
\end{prop}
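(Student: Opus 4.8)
The plan is to recognize $\esecond$ as the expected regret of the master $\EXP$ algorithm in a meta-bandit whose $N := \text{card}(\mathcal{H})$ arms are the discount factors and whose $\lceil T/H\rceil$ rounds are the blocks, and then to invoke the standard $\EXP$ guarantee of \cite{auer2002nonstochastic}. Two discrepancies separate our setting from the textbook statement: the per-block feedback does not live in $[0,1]$, and the rewards are stochastic rather than adversarially fixed. I would address them in turn.

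First, the rescaling. By Assumption~\ref{ass:bounded_reward} each reward satisfies $0\le r_{t+1}\le 2\sigma$, so the cumulative reward of any block, $G_i := \sum_{t=(i-1)H+1}^{\min\{iH,T\}} r_{t+1}$, obeys $0\le G_i\le 2\sigma H$ (the bound remains valid for the last, possibly shorter, block). Hence $\tilde G_i := G_i/(2\sigma H)\in[0,1]$, and this is exactly the quantity fed to $\EXP$ in the update rule~\eqref{eq:exp3_s}; moreover the exploration weight $\alpha$ of~\eqref{eq:exp3_alpha} is precisely the value that optimizes the $\EXP$ bound for $N$ arms over $\lceil T/H\rceil$ rounds.

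Second, I would reduce the stochastic instance to an oblivious adversarial one. Since a fresh \ouralgo{} worker is spawned at the start of every block (Algorithm~\ref{alg:meta}), the reward distribution collected by any fixed expert $\gamma_j$ during block $i$ depends only on the prescribed sequence $\{\theta_\star^t\}$ and on that block's own reward noise, and \emph{not} on which experts the master selected in earlier blocks. Conditioning on all the reward noise therefore freezes each $\tilde G_i^{(j)}$ into a fixed array of numbers independent of the master's past decisions, which is exactly the oblivious adversarial setting covered by \cite[Corollary~3.2]{auer2002nonstochastic}. That result yields, for the rescaled gains,
\begin{align*}
\max_{j}\sum_{i=1}^{\lceil T/H\rceil}\tilde G_i^{(j)} - \EEXP\!\left[\sum_{i=1}^{\lceil T/H\rceil}\tilde G_i\right] \le 2\sqrt{(e-1)\,\lceil T/H\rceil\, N\log N}\;.
\end{align*}
Specializing the maximizing arm to the fixed $\widehat\gamma$ and taking the expectation $\EN$ over the noise, while using $\EN[r_{t+1}\mid\cdot]=\mu(\langle x_t(\widehat\gamma),\theta_\star^t\rangle)$ so that $\EN[G_i]=\EN[\sum_t \mu(\langle x_t(\widehat\gamma),\theta_\star^t\rangle)]$ for the comparator, turns the left-hand side into $\esecond/(2\sigma H)$. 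Multiplying through by $2\sigma H$ (and reading $\lceil T/H\rceil$ as $T/H$) then gives the claimed $4\sigma H\sqrt{e-1}\sqrt{(T/H)\,\text{card}(\mathcal{H})\log(\text{card}(\mathcal{H}))}$.

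The main obstacle I anticipate is this second step: justifying rigorously that, despite each worker choosing its actions adaptively \emph{within} a block, the block-level feedback observed by the master is oblivious — i.e. that the restart-induced independence of blocks lets one condition on the noise and apply the fixed-sequence $\EXP$ bound — and then verifying that the comparator produced by that bound (the best fixed arm with respect to the realized gains) correctly dominates the expected-reward comparator $\sum_t\mu(\langle x_t(\widehat\gamma),\theta_\star^t\rangle)$ appearing in $\esecond$. The rescaling and the closing arithmetic are routine.
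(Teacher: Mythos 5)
Your proposal is correct and follows essentially the same route as the paper: rescale the per-block cumulative rewards by $2\sigma H$ using Assumption~\ref{ass:bounded_reward}, upper-bound the comparator $\sum_i Q_i(\widehat\gamma)$ by the best fixed arm's realized gain, apply \cite[Corollary~3.2]{auer2002nonstochastic} to the rescaled gains, and remove the noise via its zero conditional mean under $\EN$. Your explicit justification of the oblivious-adversary reduction (fresh worker per block, so an expert's block gains do not depend on the master's past choices) is in fact spelled out more carefully than in the paper, which invokes the corollary directly after conditioning.
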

\begin{proof}
We denote $\gamma_i$ the discount factor chosen by the EXP3 algorithm
in the $i$-th block. The regret due to the use of 
the EXP3 \textit{main} algorithm can be written as
follows:
\begin{align*}
    \esecond 
    &= \EN \left[\sum\limits_{i=1}^{\lceil T/H\rceil}
    \sum\limits_{t=(i-1)H+1}^{\min\{iH,T\}}
     \mu( \langle x_t(\widehat{\gamma}),\theta_\star^t \rangle )
     - \EEXP \left[ \sum_{i=1}^{\lceil T/H\rceil}
     \sum\limits_{t=(i-1)H+1}^{\min\{iH,T\}}
     r_{t+1}
       \right] \right] \;.
\end{align*}

We introduce $Q_i(\gamma_j) = \sum\limits_{t=(i-1)H+1}^{\min\{iH,T\}} r_{t+1}(\gamma_j) =\sum\limits_{t=(i-1)H+1}^{\min\{iH,T\}}
\mu(\langle x_t(\gamma_j), \theta_\star^t \rangle) + \epsilon_{t+1}$, using
Equation \eqref{eq:eta_def}.
This quantity corresponds to the reward obtained on the $i$-th block 
when using $\ouralgo$ with the discount factor $\gamma_j$. We also use
$Q_i = \max_{\gamma \in \mathcal{H}} Q_i(\gamma)$.

Contrarily to existing works in the linear setting 
(e.g \cite[Lemma3]{cheung2019learning}) 
our assumption on the bounded rewards is sufficient to solve both problems.
We have, $|Q_i| \leq 2\sigma H$ almost surely using $r_t \leq 2\sigma$ for all time instants.

Let $\mathcal{U} = \{ \forall t \leq T, 0 \leq r_t \leq 2 \sigma \} $. Thanks to Assumption \ref{ass:bounded_reward}, 
we have $\mathbb{P}(\mathcal{U}) = 1$.


One has,
\begin{align*}
    \esecond &\leq \EN \left[\sum\limits_{i=1}^{\lceil T/H\rceil}
    Q_i(\gamma_k) 
     - \max_{\gamma \in \mathcal{H}} \sum\limits_{i=1}^{\lceil T/H\rceil}
     Q_i(\gamma) 
     + \max_{\gamma \in \mathcal{H}} \sum\limits_{i=1}^{\lceil T/H\rceil}
     Q_i(\gamma) 
     - \EEXP \left[ \sum_{i=1}^{\lceil T/H\rceil}
        Q_i(\gamma_i)
       \right] \right] 
       \\
    & \leq \EN \left[ \max_{\gamma \in \mathcal{H}} \sum\limits_{i=1}^{\lceil T/H\rceil}
     Q_i(\gamma) 
     - \EEXP \left[ \sum_{i=1}^{\lceil T/H\rceil}
        Q_i(\gamma_i)
       \right] \right] 
    \\
    & \leq
    \EN \left[ \max_{\gamma \in \mathcal{H}} \sum\limits_{i=1}^{\lceil T/H\rceil}
     Q_i(\gamma) 
     - \EEXP \left[ \sum_{i=1}^{\lceil T/H\rceil}
        Q_i(\gamma_i)
       \right]  \Big| \; \mathcal{U} \right] \mathbb{P}(\mathcal{U}) \; .
\end{align*}

We introduce 
$$
Y_i(\gamma_j) = \frac{Q_i(\gamma_j)}
{2 \sigma H}\;.
$$

For all $\gamma$ in $\mathcal{H}$, $Y_i(\gamma)$ lies in $[0,1]$.
Therefore,
$$
\esecond \leq 2 \sigma H \EN \left[ \max_{\gamma \in \mathcal{H}} \sum_{i =1}^
{\lceil T/H \rceil} Y_i(\gamma) - \EEXP \left[ \sum_{i =1}^
{\lceil T/H \rceil} Y_i(\gamma_i) \right] \Big| \; \mathcal{U} \right] \;.
$$

The last step consists in using \cite[Corollary 3.2]{auer2002nonstochastic}.
We have,
$$
\max_{\gamma \in \mathcal{H}} \sum\limits_{i=1}^{\lceil T/H\rceil} Y_i(\gamma) \leq \frac{T}{H} \;.
$$
All the conditions of Corollary 3.2 in \cite{auer2002nonstochastic} are met
and we obtain:
$$
\esecond \leq 4 \sigma H  \sqrt{e-1} \sqrt{\frac{T}{H} \text{card}(\mathcal{H})
    \log(\text{card}(\mathcal{H})) }  \;.
$$
\end{proof}


The two parts of regret in Equation~(\ref{eq:regret_decom}) are bounded in 
Proposition \ref{proposition:efirst} and Proposition \ref{prop:esecond} 
respectively. Combining them, we get our main result below:


\thmregretmaster*

\begin{remark}
This theorem establishes an upper-bound for the expected regret in the Generalized Linear Bandits framework when the variational budget
is unknown. When $B_{T, \star}$ is sufficiently large ($B_{T, \star} 
\geq d^{-1/2} T^{1/4}$) the obtained bound can not be improved.
Yet, there is still a gap with the lower bound when the variation budget is small. This can be explained by the 
frequent restarts in the BOB framework.
\end{remark}
\begin{proof}
Using Proposition \ref{prop:esecond} and
Proposition \ref{proposition:efirst}, we obtain: 
\begin{equation*}
\begin{split}
\mathbb{E}\left[ R_T \right] 
& \leq 2 \sigma \frac{T}{H} + C_1 R_\mu \beta_H^\star  \sqrt{d T} 
\sqrt{
2 T (1- \gamma^\star) + \frac{T}{H} \log\left(1 + \frac{2 L^2}{d \lambda (1- \gamma^\star)} \right) 
     } \\
& \quad     
+ \;
    C_2 R_\mu \frac{2}{\sqrt{T}} \frac{1}{1- \gamma^\star}
+ \frac{3 C_3 R_\mu }{\log(2)} \frac{B_{T,\star} \log(T)}{1- \gamma^\star}
+ 4 \sigma H  \sqrt{e-1} \sqrt{\frac{T}{H} \text{card}(\mathcal{H})
    \log(\text{card}(\mathcal{H})) } 
\end{split}
\end{equation*}

First note that card$(\cH) = N$ defined in Equation \eqref{eq:N}
scales as $\log(T)$ and $\beta^\star_H$ scales as $\sqrt{d \log(T)}$. 
By plugging $H = \lfloor d \sqrt{T}\rfloor$ in the upper-bound we obtain:

$$
\frac{T}{H} = \mathcal{O} (d^{-1/2} \sqrt{T}) \; . 
$$

\begin{align*}
\beta_H^\star  \sqrt{d T} 
\sqrt{
2 T (1- \gamma^\star) + \frac{T}{H} \log\left(1 + \frac{2 L^2}{d \lambda (1- \gamma^\star)} \right) 
     } 
& = \widetilde{\mathcal{O}} \left( d \sqrt{T} \sqrt{
     \max \left( \frac{T B_{T,\star}^{2/3}}{d^{2/3} T^{2/3}}, \frac{T}{d \sqrt{T}}\right)}
     \right) \\
& = d^{2/3} T^{2/3} \max(B_{T,\star}^{1/3}, d^{-1/6} T^{1/12}) \\
& = d^{2/3} T^{2/3} (\max(B_{T,\star}, d^{-1/2} T^{1/4}))^{1/3} \; .
\end{align*}

$$
\frac{1}{\sqrt{T}} \frac{1}{1- \gamma^\star} = \mathcal{O} \left(\frac{T^{1/6}}{d^{2/3} B_{T,\star}^{2/3}} \right) \;.
$$

$$
\frac{B_{T,\star}}{1- \gamma^\star} = \mathcal{O} \left( d^{2/3} B_{T,\star}^{1/3} T^{2/3} \right) \;.
$$

$$
H \sqrt{\frac{T}{H} \text{card}(\mathcal{H}) \log(\text{card}(\mathcal{H})) }  = \widetilde{\mathcal{O}} 
\left( d^{1/2} T^{3/4} \right) \;.
$$

To conclude we notice that when $B_{T,\star} \leq d^{-1/2} T^{1/4}$, 
$$
d^{1/2} T^{3/4} = d^{2/3} T^{2/3} (\max(B_{T,\star}, d^{-1/2} T^{1/4}))^{1/3} \;.
$$

On the contrary, when $B_{T,\star} \geq d^{-1/2} T^{1/4}$,
$$
d^{1/2} T^{3/4} \leq d^{2/3} T^{2/3} (\max(B_{T,\star}, d^{-1/2} T^{1/4}))^{1/3} \;.
$$
Finally, keeping the highest order term yields  the announced result.
\end{proof}

\section{Experimental set-up}
\label{app:exps}
This section is dedicated at providing useful details about the illustrative experiments presented in Section~\ref{sec:exps}. The logistic setting at hand is characterized by the constants $S=L=1$. At each round, the environment randomly draws 10 news arms, presented to the agent. All algorithms use the same $\ell_2$ regularization parameter $\lambda=1$. The sequence $\ts{t}$ evolves as follows: we let $\ts{t}= (0,1)$ for $t\in[1,T/3]$. Between $t=T/3$ and $t=2T/3$ we smoothly rotate $\ts{t}$ from $(0,1)$ to $(1,0)$. Finally  we let $\ts{t}= (0,1)$ for $t\in[2T/3,T]$. Easy computations show that the total variation budget is $$B_T=(2T/3)\sin\left(\frac{3\pi}{4T}\right)\simeq 1.5\; .$$We used the optimal value of $\gamma$ recommended by the asymptotic analysis for \texttt{D-LinUCB} and \ouralgo. We solve the projection step of \texttt{GLM-UCB} and \ouralgo{} by (constrained) gradient-based methods, thanks to the SLSQP solver of \texttt{scipy}. 

\begin{rem*}
In our experiments, we did not report performances of the algorithms from \citet{russac2020algorithms, russac2020self} (which use a similar projection step as in \citet{filippi2010parametric}). Because such algorithms are based on \emph{discrete} switches of the reward signal, their behavior in this slowly-varying environment is largely sub-optimal. Indeed, in our experiment the number of abrupt-changes is $\Gamma_T=1000$. For exponentially weighted algorithms, the recommended asymptotic value for the weights becomes $\gamma\simeq 0.70$, which in turns leads to algorithms that  over-estimate the non-stationary nature of the problem, and perform poorly in practice.
\end{rem*}

\end{document}